\documentclass{article}
\usepackage{iclr2027_conference,times}
\iclrfinalcopy
\usepackage{amsmath,amssymb,amsthm}
\usepackage{graphicx,booktabs,longtable,array}
\usepackage{microtype,enumitem,placeins,needspace}
\usepackage{hyperref,url}
\hypersetup{hidelinks,pdftitle={Why Constrained Updates Still Forget: A Functional-Path Analysis of Continual SFT},pdfauthor={Anonymous authors}}
\newcommand{\method}{\textsc{RetainSFT}}
\newcommand{\R}{\mathbb R}

\newtheorem{proposition}{Proposition}
\newtheorem{corollary}{Corollary}

\title{Why Constrained Updates Still Forget:\\
A Functional-Path Analysis of Continual SFT}
\author{
  Qingsen Ma \\
  The Chinese University of Hong Kong \\
  \texttt{qings.ma@gmail.com} \\
  \And
  Dianyun Wang \\
  Beijing University of Posts and Telecommunications \\
  \texttt{dianyunwang@bupt.edu.cn} \\
  \AND
  Jiahong Liu \\
  The Chinese University of Hong Kong \\
  \texttt{jiahong.liu21@gmail.com} \\
  \And
  Irwin King \\
  The Chinese University of Hong Kong \\
  \texttt{king@cse.cuhk.edu.hk}
}
\begin{document}
\maketitle
\begin{abstract}
Supervised fine-tuning adapts language models to specialized tasks but can disrupt previously acquired behavior. Existing approaches constrain parameters, gradients, or replay objectives, yet the connection between the protected object, the intervening trajectory, and retained outputs remains unclear. We develop a functional-path account that decomposes old-loss changes along realized updates into the controlled direction, gradient staleness, probe mismatch, and finite-step effects. The account yields a conditional refresh budget and identifies the restoring direction supplied by informative replay. Matched QLoRA interventions separate constraint granularity from placement before or after Adam: placement strongly affects retention across the studied code-label audits, whereas the granularity interaction depends on the endpoint. Checkpointed generation separates endpoint recovery from higher sampled-path accuracy, and matched replay schedules reveal loss exposure concealed by final scores. A sequential-task comparison extends this distinction to backward transfer: subspace regularization and projection improve average retention while trading off different old tasks; supervised replay protects both more consistently in this setting. Additional response interfaces distinguish content gains from exact-response retention. These findings connect update-level protection to task-wise behavior and motivate mitigation through explicit functional anchors, trajectory evaluation, and joint old/new auditing.
\end{abstract}

\section{Introduction}
\label{sec:intro}

Supervised fine-tuning (SFT) adapts language models to new tasks, but improvements on the fine-tuning distribution can come at the expense of behaviors the model previously performed correctly \citep{kotha2024understanding,zheng2025spurious}. This problem also arises in parameter-efficient adaptation: LoRA and QLoRA leave the pretrained weights frozen while changing the outputs of the adapted model \citep{hu2022lora,dettmers2023qlora}. Existing approaches seek to reduce forgetting by restricting parameter changes, constraining gradients, or revisiting old examples \citep{kirkpatrick2017overcoming,lopez2017gradient,chaudhry2018efficient,buzzega2020dark,pan2020continual}. These approaches protect different quantities during optimization. Understanding their effects therefore requires connecting what an update controls to which old behaviors remain available as training proceeds.

This study investigates this connection through controlled QLoRA adaptation experiments targeting behaviors verified prior to fine-tuning. Our initial observations revealed that severe forgetting can occur even when established update constraints are satisfied. In code-label adaptation scenarios, projecting actual optimizer updates onto the gradient of the "old loss" satisfies preset constraints, yet previously correct responses often disappear in large numbers. We analyze this discrepancy through a "functional-path account," precisely decomposing the change in old loss—induced by actual parameter updates—into a cached constraint term, probe mismatch, gradient staleness, and a finite-step residual. This decomposition distinguishes between quantities directly controlled by projection and changes accumulated during training. By deriving a conditional refresh-length bound and constructing cases that reach this limit, we reveal how staleness in constraints and finite-step updates drive increases in old loss. To explore which design choices alter this outcome, we independently varied constraint granularity and the point of optimizer application. Keeping probes, refresh intervals, solvers, and training configurations constant, we compared a single aggregated constraint against independent constraints for specific old-behavior categories, and examined applying constraints either to pre-Adam gradients or post-Adam parameter updates. In code-related experiments, regardless of the response evaluation metric, projecting onto pre-Adam gradients preserved more old-behavior responses than projecting onto post-Adam updates.

We also examined when old behaviors are lost and when they are recovered. Analysis of generated checkpoints revealed that methods achieving similar final accuracy can follow vastly different behavioral trajectories: one might undergo an early performance drop followed by recovery, while another maintains consistently high accuracy across observed checkpoints. Replay offers a complementary intervention for these trajectories by repeatedly presenting old-task objectives during adaptation. A theoretical result regarding conditional descent elucidates how informative replay counteracts increases in old-task loss. Empirical results demonstrate that supervised replay effectively preserves old-behavior responses during code-label adaptation. Keeping the replay samples, sample order, and total budget constant, we further discovered that even when final generation accuracy is comparable, delaying the presentation of replay samples causes the loss on old tasks to exceed baseline levels.

Finally, we extended our evaluation beyond code-label adaptation to include natural language sentiment analysis, structured information extraction, and a continual learning scenario encompassing three tasks: news classification, sentiment classification, and information extraction. These distinct response interfaces differentiate between "preserving answer content" and "precisely reproducing the required response." Sequential experiments directly measured the extent of forgetting regarding newly acquired tasks using metrics for backward transfer and per-task performance. Compared to standard supervised fine-tuning (SFT), CLoRA and OPLoRA improved average retention rates, though the distribution of gains varied; each method prioritized the retention of different early-stage tasks. In contrast, supervised replay improved retention across all early tasks in the sequence. These results establish a link between local protection mechanisms and final behavioral performance, while also revealing why aggregate metrics alone fail to accurately reflect which acquired behaviors are preserved during subsequent adaptation.We develop a functional-path understanding of catastrophic forgetting in SFT, explaining why constrained updates still forget and how replay protects previously learned behavior throughout adaptation.
\section{Problem Setting: What Forgets during SFT?}
\label{sec:problem}

We study \emph{retention of acquired behavior during new-task adaptation}. Starting from a pretrained checkpoint, we establish Old performance, apply New-task SFT, and re-evaluate the same Old prompts and response interface. Old replay comes from a separate training split. For accuracy $A$, forgetting and mitigation benefit are
\begin{equation}
 F_m=A_{\rm Old}(\theta_0)-A_{\rm Old}(\theta_T^m),\qquad
 B_m=F_{\rm plain}-F_m.
 \label{eq:forgetting-setting}
\end{equation}
Positive $F_m$ measures lost availability; positive $B_m$ measures mitigation, separately from an absolute retention budget and the New-task score. For example, in the matched Mistral Code study, Legacy accuracy falls from 100\% to 5.43\% under plain SFT, versus 97.83\% with replay; New content accuracy reaches 92.40\% and 93.80\%, respectively (Table~\ref{tab:structure-location}). Independent strict-response banks start at 100\% and finish at 3.65\% and 73.96\%.

We also study sequentially acquired tasks. With $R_{ij}$ denoting task-$j$ accuracy after learning task $i$, backward transfer (BWT), $\frac{1}{K-1}\sum_{j<K}(R_{Kj}-R_{jj})$, compares final with just-learned scores; higher is better. Our three-task continuation supplies this matrix (Table~\ref{tab:continual-bridge}; Appendix~\ref{app:continual-bridge}). Independent adaptations isolate update design and behavioral paths; sequential boundaries reveal which newly learned tasks subsequently regress. 

\section{Functional-Path Analysis}
\label{sec:method}
\label{sec:theory}

The functional-path analysis connects the applied update, the old-loss trajectory, and generated behavior (Figure~\ref{fig:functional-path}).

\begin{figure}[t]
\centering
\includegraphics[width=0.85\linewidth]{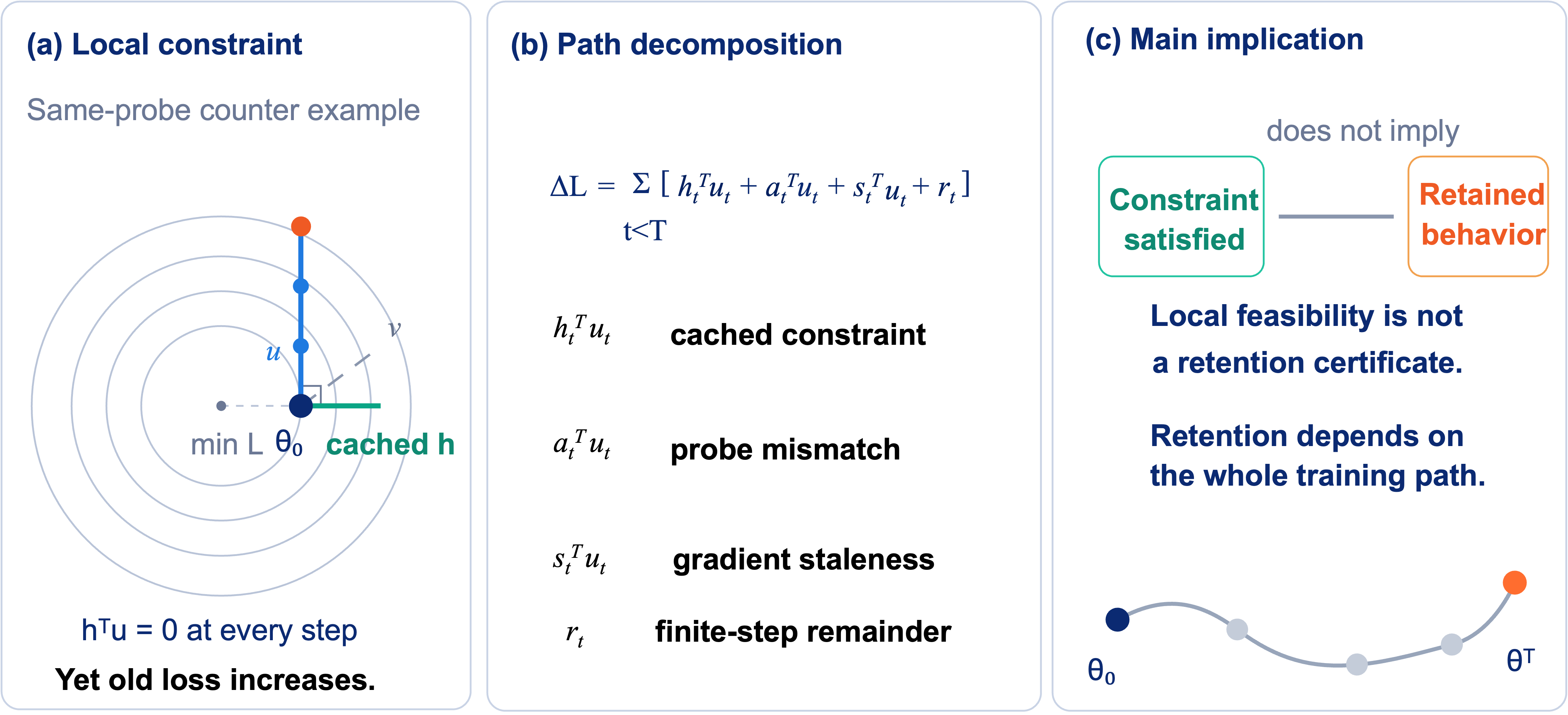}
\caption{\textbf{From local feasibility to functional retention.}
(a) A same-probe quadratic ($L=Q$) has $h^\top u_t=0$ but increasing old loss.
(b) Realized loss change decomposes into four terms (Proposition~\ref{prop:path}); the halfspace controls only the cached term. Orange terms have unrestricted signs.
(c) Loss paths and generated behavior need separate audits. Appendix~\ref{app:proofs} supplies the quadratic and a cross-entropy counterexample that flips a correct greedy decision.}
\label{fig:functional-path}
\end{figure}

\subsection{The retained object and the realized update}

Let $L(\theta)$ be a fixed old evaluation loss, $Q(\theta)$ a finite control-probe loss, and $L_1(\theta)$ the new objective. Old behavior should be demonstrated before adaptation and assessed through the same frozen interface; recovery under other prompts or relearning is a separate question. In QLoRA, $\theta$ denotes adapter factors, not the full effective weights.

Write $u_t=\theta_{t+1}-\theta_t$ for the \emph{realized} update, including optimizer state, clipping, weight decay, and any post-step projection. For $g_t=\nabla L(\theta_t)$, Taylor's theorem gives
\begin{equation}
 L(\theta_T)-L(\theta_0)=\sum_{t<T}g_t^\top u_t+\sum_{t<T}r_t,
 \qquad
 r_t=\int_0^1(1-s)u_t^\top\nabla^2L(\theta_t+s u_t)u_t\,ds.
 \label{eq:path}
\end{equation}
For twice continuously differentiable $L$, the remainder is the displayed Hessian integral. Raw gradient cosine omits update magnitude, optimizer transformation, and subsequent states; final parameter distance also leaves the path integral unspecified.

\subsection{A cached halfspace controls one term}

Suppose the old probe gradient was last refreshed at state $\theta_{\tau(t)}$, and let $h_t=\nabla Q(\theta_{\tau(t)})$. An ideal Euclidean projection maps a candidate update $v_t$ to
\begin{equation}
 u_t=v_t-\frac{[h_t^\top v_t]_+}{\|h_t\|^2}h_t,
 \qquad h_t\ne0,
 \label{eq:project}
\end{equation}
so that $h_t^\top u_t\leq0$. The parameter state used to compute $h_t$ matters even if the probe examples never change.

\Needspace{10\baselineskip}
\begin{proposition}[Functional-path decomposition]
\label{prop:path}
Let $L,Q$ be twice continuously differentiable on an open convex neighborhood of the trajectory. Define $a_t=\nabla L(\theta_t)-\nabla Q(\theta_t)$ and $s_t=\nabla Q(\theta_t)-h_t$. Then
\begin{equation}
 L(\theta_T)-L(\theta_0)=\sum_{t<T}
 \left[\underbrace{h_t^\top u_t}_{\text{cached constraint}}
 +\underbrace{a_t^\top u_t}_{\text{probe mismatch}}
 +\underbrace{s_t^\top u_t}_{\text{staleness}}+\underbrace{r_t}_{\substack{\text{finite-step}\\\text{remainder}}}\right].
 \label{eq:decomp}
\end{equation}
If $\nabla L,\nabla Q$ are Lipschitz with constants $\beta_L,\beta_Q$ and $h_t^\top u_t\leq\varepsilon_t$, then
\begin{equation}
 L(\theta_T)-L(\theta_0)\leq\sum_{t<T}\left[
 \varepsilon_t+\|a_t\|\|u_t\|
 +\beta_Q\|\theta_t-\theta_{\tau(t)}\|\|u_t\|
 +\frac{\beta_L}{2}\|u_t\|^2\right].
 \label{eq:bound}
\end{equation}
\end{proposition}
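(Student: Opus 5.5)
The plan is to start from the exact path identity \eqref{eq:path} and then split each first-order term algebraically. For each step, Taylor's theorem with integral remainder along the segment $\theta_t+s u_t$, $s\in[0,1]$, gives $L(\theta_{t+1})-L(\theta_t)=g_t^\top u_t+r_t$. The segment lies in the open convex neighborhood where $L$ is $C^2$, so the remainder has the displayed Hessian form. Summing over $t<T$ telescopes to \eqref{eq:path}.

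The decomposition is then a pure identity. By the definitions of $a_t$ and $s_t$,
\[
g_t=\nabla L(\theta_t)=h_t+\bigl(\nabla L(\theta_t)-\nabla Q(\theta_t)\bigr)+\bigl(\nabla Q(\theta_t)-h_t\bigr)=h_t+a_t+s_t .
\]
Taking the inner product with $u_t$ and substituting into \eqref{eq:path} yields \eqref{eq:decomp}. Nothing beyond linearity is needed here; in particular the identity does not depend on how $u_t$ was produced (Adam, clipping, projection \eqref{eq:project}).

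For the bound \eqref{eq:bound}, I would estimate the four terms separately:
\begin{itemize}
\item The cached term is at most $\varepsilon_t$ by hypothesis.
\item For the mismatch term, Cauchy--Schwarz gives $a_t^\top u_t\le\|a_t\|\|u_t\|$.
\item For staleness, $s_t=\nabla Q(\theta_t)-\nabla Q(\theta_{\tau(t)})$ because $h_t=\nabla Q(\theta_{\tau(t)})$. Lipschitz continuity of $\nabla Q$ gives $\|s_t\|\le\beta_Q\|\theta_t-\theta_{\tau(t)}\|$, and Cauchy--Schwarz finishes it. This needs $\nabla Q$ to be Lipschitz on a set containing both $\theta_t$ and $\theta_{\tau(t)}$. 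The convex neighborhood of the trajectory suffices, since the Lipschitz inequality is assumed there.
\item For the remainder, I would avoid the Hessian form and use the standard descent-lemma argument instead. Write $r_t=\int_0^1(\nabla L(\theta_t+su_t)-\nabla L(\theta_t))^\top u_t\,ds$. Bounding the integrand by $\beta_L s\|u_t\|^2$ gives $r_t\le\frac{\beta_L}{2}\|u_t\|^2$. Equivalently, a $\beta_L$-Lipschitz gradient implies $\|\nabla^2L\|_{\rm op}\le\beta_L$ on the open set, and $\int_0^1(1-s)\,ds=1/2$ gives the same result.
\end{itemize}
Summing the four estimates over $t$ gives \eqref{eq:bound}.

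The main obstacle is only bookkeeping about domains. The Lipschitz constants and the $C^2$ assumption must hold on every segment $[\theta_t,\theta_{t+1}]$ and at every refresh point $\theta_{\tau(t)}$. I would state explicitly that the convex neighborhood contains the whole trajectory, and hence all these segments and points. The rest is routine.
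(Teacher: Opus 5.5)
Your proposal is correct and follows the paper's proof essentially step for step: Taylor's theorem with integral remainder along each segment, telescoping, the split $\nabla L(\theta_t)=h_t+a_t+s_t$, Cauchy--Schwarz for the mismatch and staleness terms, Lipschitz continuity of $\nabla Q$ between $\theta_t$ and $\theta_{\tau(t)}$, and a $\beta_L\|u_t\|^2/2$ bound on the remainder. Your descent-lemma treatment of $r_t$ is only a cosmetic variant of the paper's route, which bounds $\|\nabla^2 L\|_{\mathrm{op}}$ by $\beta_L$ (the alternative you also mention); the descent lemma's small advantage is that the inequality then needs no second derivatives.
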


Adding and subtracting current and cached probe gradients proves the identity (all proofs: Appendix~\ref{app:proofs}). Taking $L=Q$ removes mismatch. Equation~\ref{eq:bound} separates the quantities that determine a conditional pathwise loss budget. The measurement identity in Appendix~\ref{app:measurement} makes explicit the numerical errors included in the logged remainder.

\begin{corollary}[Refresh-length budget]
\label{cor:refresh}
Under the hypotheses of Proposition~\ref{prop:path}, suppose refresh blocks contain at most $r$ updates, $\tau(t)$ is the first state of the block containing update $t$, and $\|a_t\|\leq D$. Then
\begin{equation}
 L(\theta_T)-L(\theta_0)\leq
 \sum_{t<T}\varepsilon_t+D\sum_{t<T}\|u_t\|
 +\frac{\beta_L+(r-1)\beta_Q}{2}\sum_{t<T}\|u_t\|^2.
 \label{eq:block}
\end{equation}
For identical probe and target loss with smoothness $\beta$ and exact halfspaces, this reduces to $\beta r\sum_{t<T}\|u_t\|^2/2$.
\end{corollary}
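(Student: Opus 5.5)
We start from the bound in Proposition~\ref{prop:path} and control its two middle terms using the block structure of refreshes. The mismatch term is immediate: since $\|a_t\|\le D$, it contributes at most $D\sum_{t<T}\|u_t\|$. The cached-constraint term stays as $\sum_t\varepsilon_t$, and the finite-step remainder stays as $\frac{\beta_L}{2}\sum_t\|u_t\|^2$. What remains is the staleness term $\beta_Q\sum_t\|\theta_t-\theta_{\tau(t)}\|\|u_t\|$. We need to show it is at most $\frac{(r-1)\beta_Q}{2}\sum_t\|u_t\|^2$.

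To handle staleness, we fix one refresh block with updates $t_0,\dots,t_0+m-1$, where $m\le r$ and $\tau(t)=t_0$. For $t=t_0+k$ we have $\theta_t-\theta_{\tau(t)}=\sum_{j=0}^{k-1}u_{t_0+j}$. The triangle inequality then gives $\|\theta_t-\theta_{\tau(t)}\|\le\sum_{j<k}\|u_{t_0+j}\|$. Writing $x_k=\|u_{t_0+k}\|$, the block's staleness contribution is bounded by $\beta_Q\sum_{j<k}x_jx_k$, a sum over the $\binom m2$ unordered pairs. Applying $x_jx_k\le\frac12(x_j^2+x_k^2)$ to each pair, every index appears in exactly $m-1$ pairs. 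So $\sum_{j<k<m}x_jx_k\le\frac{m-1}{2}\sum_k x_k^2\le\frac{r-1}{2}\sum_k x_k^2$. Summing over blocks, which partition $\{0,\dots,T-1\}$, yields the claimed $\frac{(r-1)\beta_Q}{2}\sum_t\|u_t\|^2$. Combining this with the other three terms gives \eqref{eq:block}.

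The special case follows by specialization. With $L=Q$ we get $a_t=0$, so we may take $D=0$. Exact halfspaces mean $h_t^\top u_t\le0$, so we may take $\varepsilon_t=0$. Setting $\beta_L=\beta_Q=\beta$, the coefficient becomes $(\beta+(r-1)\beta)/2=\beta r/2$.

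The only step that needs care is the pairwise counting in the staleness bound. We have to check that the $r-1$ factor, rather than something like $r$ or $r^2$, comes out. The counting via AM--GM over unordered pairs does this, with the first update of each block contributing zero staleness because $\theta_{t_0}=\theta_{\tau(t_0)}$. A coarser Cauchy--Schwarz bound would lose the factor of $1/2$.
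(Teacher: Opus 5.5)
Your proof is correct and follows the paper's argument: you bound each refresh block's staleness contribution by the pair sum $\sum_{j<k}x_jx_k$, show it is at most $\frac{m-1}{2}\sum_k x_k^2\le\frac{r-1}{2}\sum_k x_k^2$, sum over blocks, and then specialize to $L=Q$ with exact halfspaces. The only difference is cosmetic. The paper writes the pair sum as $\frac12[(\sum_k x_k)^2-\sum_k x_k^2]$ and applies Cauchy--Schwarz in the form $(\sum_k x_k)^2\le m\sum_k x_k^2$, which yields exactly your AM--GM constant, so Cauchy--Schwarz applied this way does not lose the factor $1/2$.
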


Refreshing a probe changes its cached state; enlarging it changes the protected objective. We test both in the refresh--coverage factorial and isolate one-step refresh by holding the state and candidate update fixed. 

\paragraph{A tight counterexample.}
Set $L=Q=\|\theta\|^2/2$ in $\R^2$. At the start of a block cache $h=\theta_{kr}\ne0$. Let $J$ rotate by $90$ degrees, and repeatedly propose $v=bh+aJh$ with $a,b>0$. Projection is active and gives $u=aJh$ at every step, so every cached dot is exactly zero. At cache age $j$, however,
\begin{equation}
 \nabla L(\theta_{kr+j})^\top u=j a^2\|h\|^2,
 \qquad L(\theta_{(k+1)r})=(1+r^2a^2)L(\theta_{kr}).
 \label{eq:counter}
\end{equation}
Loss grows geometrically and attains Corollary~\ref{cor:refresh}'s same-probe bound. The construction tests the halfspace property, not experimental AdamW dynamics. The appendix's cross-entropy example flips a correct greedy decision while every cached dot remains zero.

\subsection{Why informative replay can supply a restoring direction}

Replay repeatedly evaluates an old-target objective at current parameters. For an interpretable local comparison, hold the state and a symmetric positive-definite preconditioner $P$ fixed. Let $g_0=\nabla L(\theta)\ne0$, $g_1=\nabla L_1(\theta)$, and $\lambda\geq0$, and compare $u_0=-\eta Pg_1$ to $u_\lambda=-\eta P(g_1+\lambda g_0)$. Their first-order old-loss difference is
\begin{equation}
 g_0^\top(u_\lambda-u_0)=-\eta\lambda g_0^\top Pg_0.
 \label{eq:replaydirection}
\end{equation}

\begin{proposition}[Conditional replay descent]
\label{prop:replay}
Let $L$ have $\beta$-Lipschitz gradient, $\beta>0$, on the update segment, and define $A_\lambda=g_0^\top Pg_1+\lambda g_0^\top Pg_0$. If $A_\lambda>0$ and
\begin{equation}
 0<\eta<\frac{2A_\lambda}{\beta\|P(g_1+\lambda g_0)\|^2},
 \label{eq:replaystep}
\end{equation}
then $L(\theta+u_\lambda)<L(\theta)$.
\end{proposition}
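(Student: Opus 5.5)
The plan is to apply the standard descent lemma for functions with $\beta$-Lipschitz gradient along the update segment $\{\theta+s u_\lambda: s\in[0,1]\}$. Write $d=P(g_1+\lambda g_0)$, so that $u_\lambda=-\eta d$. First I would record the first-order term, using the symmetry of $P$:
\begin{equation*}
 g_0^\top u_\lambda=-\eta\, g_0^\top P(g_1+\lambda g_0)=-\eta A_\lambda .
\end{equation*}
This is the identity that Equation~\ref{eq:replaydirection} isolates. It shows that the replay weight $\lambda$ adds the nonnegative quantity $\lambda g_0^\top Pg_0$ to the alignment $A_\lambda$. Positive definiteness of $P$ together with $g_0\neq 0$ makes this addition strictly positive whenever $\lambda>0$.

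Next, I would bound the second-order part. I would use the same integral representation as in Equation~\ref{eq:path}, but in gradient form rather than Hessian form, since only a Lipschitz gradient is assumed. That representation is
\begin{equation*}
 L(\theta+u_\lambda)-L(\theta)=g_0^\top u_\lambda+\int_0^1\bigl(\nabla L(\theta+s u_\lambda)-\nabla L(\theta)\bigr)^\top u_\lambda\,ds .
\end{equation*}
By Cauchy--Schwarz and the Lipschitz bound on the segment, the integrand is at most $\beta s\|u_\lambda\|^2$. The integral is therefore at most $\tfrac{\beta}{2}\|u_\lambda\|^2=\tfrac{\beta\eta^2}{2}\|d\|^2$. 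Combining the two steps gives
\begin{equation*}
 L(\theta+u_\lambda)-L(\theta)\le -\eta A_\lambda+\frac{\beta\eta^2}{2}\|d\|^2=\eta\Bigl(\frac{\beta\eta}{2}\|d\|^2-A_\lambda\Bigr).
\end{equation*}

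Finally, I would check the sign of this bound. The condition $A_\lambda>0$ forces $d\neq 0$, since otherwise $A_\lambda=g_0^\top d=0$. The step-size condition in Equation~\ref{eq:replaystep} is therefore well defined. Under that condition $\tfrac{\beta\eta}{2}\|d\|^2<A_\lambda$, and because $\eta>0$ the right-hand side above is strictly negative. This yields $L(\theta+u_\lambda)<L(\theta)$.

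I do not expect a genuine obstacle. The only points needing care are interpretive. The Lipschitz hypothesis must hold on the whole segment from $\theta$ to $\theta+u_\lambda$, which is exactly what ``on the update segment'' provides. The conclusion is conditional on $A_\lambda>0$, not a claim that replay always helps. I would add a remark that $A_\lambda$ is increasing in $\lambda$, with slope $g_0^\top Pg_0>0$. Hence, if $g_0^\top Pg_1<0$ (new-task gradient conflicting with old loss), any $\lambda>-g_0^\top Pg_1/g_0^\top Pg_0$ makes the hypothesis hold. This is the sense in which informative replay supplies the restoring direction.
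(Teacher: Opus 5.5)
Your proposal is correct and follows essentially the same route as the paper's proof. The paper applies the smoothness inequality $L(\theta+u_\lambda)-L(\theta)\le -\eta A_\lambda+\frac{\beta\eta^2}{2}\|P(g_1+\lambda g_0)\|^2$, notes that $A_\lambda>0$ forces the direction to be nonzero, and uses the step-size bound to make the right-hand side strictly negative; your integral derivation of that inequality and your closing remark on choosing $\lambda>-g_0^\top Pg_1/g_0^\top Pg_0$ match it.
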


This is a sufficient descent condition, not a statement that more replay is always better: increasing $\lambda$ also changes the curvature penalty and the new-task trade-off. If replay supplies $g_R=g_0+e$ instead, the first-order improvement is $-\eta\lambda(g_0^\top Pg_0+g_0^\top Pe)$; target and probe coverage therefore remain relevant. Teacher KL constrains its own output-distribution loss and is not algebraically interchangeable with hard-label NLL.

Replay changes AdamW moments and future states \citep{loshchilov2019adamw}; separate replay steps differ from combined gradients. Proposition~\ref{prop:replay} motivates testing old-loss paths, not an AdamW guarantee or improvement of every decomposition term.

We therefore cross granularity with optimizer placement: pre-Adam constrains the stored descent gradient; post-Adam constrains realized displacement. Fixed probes and solvers isolate these design factors, including their induced optimizer-state trajectories rather than moments alone.

\subsection{From loss to generation and retention}

Generation is a sequence of discrete decisions. If every baseline-correct prefix has margin $\gamma_{ij}>0$, changing its logit vector by strictly less than $\gamma_{ij}/2$ in infinity norm suffices to preserve greedy decoding, including stopping under the same rule based on tokens and length (Appendix~\ref{app:margin}). Mean NLL, mean margin, and a single average-gradient halfspace do not enforce this per-prefix condition. The retained loss is defined by its prompts and supervised tokens; a response contract additionally specifies output form and stopping. We therefore distinguish loss, scored content, and exact terminated responses at fixed checkpoints and the endpoint.

This distinction organizes our experiments: compare explicit protection mechanisms, measure their paths, and jointly audit Old and New at fixed-budget endpoints. Appendix~\ref{app:implementation} describes the supporting \method{} implementation and acceptance contract.

\section{Experiments}
\label{sec:experiments}

\subsection{Experimental setup}
\label{sec:setup}

\paragraph{Models and acquired behaviors.}
The scale study has 576 rank-16 NF4 QLoRA runs on Mistral-7B-Instruct-v0.3, Qwen2.5-7B-Instruct, and OLMo-2-1124-7B-Instruct \citep{dettmers2023qlora,qwen2024qwen25,jiang2023mistral7b,olmo20242}; a separately seeded 256-run supplement tests 16 methods on Mistral/Qwen. Both adapt sentiment to codes (4,775/628/597 train/development/audit records). Qualified Old subsets have 23/17/26 records, mainly Mistral instruction format and Qwen/OLMo arithmetic: distinct functions, not an architecture ranking (Appendix~\ref{app:banks}).

A 64-run Mistral study compares six Code methods, plus plain/replay on natural-language sentiment and extraction over four seeds. Natural sentiment changes the interface, not the semantic task. Two independent Old banks each add 60 qualified records. New training supervises EOS; Old masks remain unchanged (Appendix~\ref{app:structure-location}).

\paragraph{Matched updates and replay mass.}
Within the refresh--coverage and structure--location protocols, paired methods share 2,048 ordered New exposures and 128 AdamW updates, 16 New records per update. Each record's loss averages its supervised tokens:
\begin{equation}
 \ell_t=\frac{1}{16}\sum_{i\in B_t^{\rm new}}\ell_i+
         \frac{1}{16}\sum_{j\in B_t^{\rm old}}\ell_j .
 \label{eq:mass}
\end{equation}
Nominal replay fractions 2/5/10/20\% add 42/108/228/512 Old exposures without replacing New records; the structure--location study uses 10\%. Exposure matching is not hardware-FLOP matching; Appendix~\ref{app:structure-location} specifies precision and optimization.

\paragraph{Projection interventions.}
The original study crosses refresh $r\in\{1,8\}$ with probes $n\in\{12,48\}$; diagnostic $E$ has 12 non-protection records. Cached/fresh branches share state and displacement. The structure--location factorial fixes 48 probes, every-step refresh, and one solver: pre checks the stored descent gradient; post checks displacement. Branches share initial state, moments, batch, and RNG, not the resulting Adam candidate. Validation includes $E$; hidden audit is separate.

\paragraph{Timing, targets, and baselines.}
Uniform/front/back schedules share the ordered 228 Old records and New sequence. Hybrid replaces half the old CE mass by teacher KL at 2/10\% replay. I-GEM, PEGP, and ARR adapt gradient constraints, feature-subspace protection, and KL/open-web replay to shared QLoRA budgets \citep{tekmen2026igem,qiao2024pegp,riemer2025arr}; calibration precedes hidden-audit access (Appendix~\ref{app:causal-methods}).

\paragraph{Endpoints and inference.}
All methods report final fixed-budget endpoints. Original/legacy generation uses \emph{reference-length greedy exact}; the structure--location banks use \emph{fixed-cap strict-EOS exact} within 64 tokens, alongside rule-based content accuracy. These interfaces use separate calls with different batches; Appendix~\ref{app:structure-score-audit} reports token-level checks. Structure--location generation is saved at updates 0/16/32/64/96/128. Signed AUC integrates baseline-centered validation NLL; $\mathrm{AUC}_{+}=\int_0^1[L(s)-L(0)]_+ds$ separates above-baseline exposure (Appendix~\ref{app:positive-exposure}). The sequential study instead evaluates each task boundary, deriving content and strict-EOS scores from the same singleton 64-token output (Appendix~\ref{app:continual-bridge}).

Paired retention changes and New scores measure benefit; all evaluable runs remain included regardless of contract status. \texttt{FEASIBLE} requires New acquisition and the full retention contract. Supporting New NI instead requires the paired 95\% $t$ lower bound above $-.03$ on the complete New audit. Eight-seed sign flips use frozen Holm families: original factorial AUC, baseline generation, schedule AUC, and hybrid AUC; the structure--location factorial uses legacy generation. Its supporting replay contrast uses equal-weight strict-bank changes. Four-seed tasks and the two-seed sequence are descriptive; intervals concern fixed banks.

\subsection{Constraint placement and behavioral retention}
\label{sec:results}

The original two-model refresh--coverage study supplies 64 projection runs and \textbf{8,192 actual updates}, all satisfying the registered residual tolerance; all 640 same-state branches also pass. Yet its Mistral projection conditions retain 3.80--4.35\% of the acquired reference-length answers, versus 100\% before SFT. 

\begin{table}[t]
\centering
\caption{Mistral retention under Code adaptation (left, eight paired seeds)
and sequential learning (right, two paired seeds; descriptive).
The sequence is AG News $\rightarrow$ SST-2 $\rightarrow$ JSON.
Accuracies are percentages; BWT is in percentage points.}
\label{tab:structure-location}
\begingroup
\small
\setlength{\tabcolsep}{2.5pt}
\renewcommand{\arraystretch}{1.03}
\begin{minipage}[t]{0.53\linewidth}
\vspace{0pt}
\centering
\textbf{(a) Code adaptation ($n=8$)}\par\smallskip
\begin{tabular*}{\linewidth}{@{\extracolsep{\fill}}lrrrr@{}}
\toprule
Method & Ref. & Strict & New & F/8\\
\midrule
Plain      & 5.43 & 3.65 & 92.40 & 0\\
Agg./pre   & 97.83 & 54.58 & 92.00 & 0\\
Agg./post  & 7.07 & 3.65 & 93.20 & 0\\
Str./pre   & 96.74 & 77.40 & 89.01 & 3\\
Str./post  & 70.65 & 24.27 & 93.55 & 0\\
Replay 10\% & 97.83 & 73.96 & 93.80 & 1\\
\bottomrule
\end{tabular*}
\end{minipage}\hfill
\begin{minipage}[t]{0.44\linewidth}
\vspace{0pt}
\centering
\textbf{(b) Sequential learning ($n=2$)}\par\smallskip
\begin{tabular*}{\linewidth}{@{\extracolsep{\fill}}lrrr@{}}
\toprule
Method & ACC & BWT & New\\
\midrule
Plain       & 77.34 & $-21.88$ & 98.44\\
Replay 10\% & \textbf{90.89} & $\mathbf{-1.17}$ & 99.22\\
Task-GEM    & 82.03 & $-15.62$ & 100.00\\
Adapter-EWC & 77.08 & $-21.88$ & 99.22\\
CLoRA       & 80.47 & $-17.58$ & 100.00\\
OPLoRA      & 81.25 & $-17.19$ & 99.22\\
\bottomrule
\end{tabular*}
\end{minipage}
\par\smallskip
\begin{minipage}{\linewidth}
\footnotesize
Left: Ref. is Legacy reference-length greedy exact; Strict averages two
independent 60-record strict-EOS banks (64-token cap), both initially 100\%.
New uses the full content audit; F counts full-contract passes, not beneficial
experiments. All evaluable runs are included.
Agg./Str.: aggregate/stratum-wise; pre/post: before/after Adam.
Right: ACC averages final task accuracies; BWT measures change from each
method's own acquisition scores; New is final JSON accuracy.
\end{minipage}
\endgroup
\end{table}

The matched factorial separates two design choices confounded in the earlier I-GEM comparison. On Legacy reference exact, the post-minus-pre effect is $-58.42$ points (95\% CI $[-68.82,-48.03]$, Holm $p=.02344$), with a granularity--placement interaction of $+64.67$ points ($[38.72,90.63]$, Holm $p=.03125$). Placement has the same direction on independent strict-EOS accuracy: $-52.03$ points ($[-76.84,-27.22]$, descriptive). The interaction is $-2.19$ points ($[-42.32,37.94]$) on that endpoint and $-1.67$ ($[-42.19,38.86]$) on independent reference exact. Thus, the Legacy interaction is endpoint-dependent, while the placement effect is consistent across these Code audits. Appendix~\ref{app:structure-location} reports all effects and same-state branches. A same-output prefix sensitivity preserves the Legacy effect directions (Table~\ref{tab:structure-prefix-sensitivity}).

\paragraph{Same-state intervention.}
The matched study records no numerical failures across 4,096
projected training updates and 160 same-state branches, checked
against each policy's declared constraint object. At five states
per seed, the four policies start from identical parameters,
optimizer state, and New batches. Stratum-wise minus aggregate
protection before Adam reduces diagnostic NLL change by .005475
(95\% CI $[-.009880,-.001071]$). The corresponding local interaction
is $+.004515$ ($[-.001298,.010328]$). These branches identify a
local benefit of stratum-wise pre-Adam protection. Local diagnostic effects and accumulated behavioral
effects therefore provide distinct evidence about the protection
mechanism.

\paragraph{Observed behavioral paths and audit coverage}

Aggregate/pre and replay both finish at 97.83\% legacy reference exact, but aggregate/pre falls to 29.89\% at update 16 and remains at 36.96\% at update 64 before recovering. Replay records 91.30\% and 95.11\% at those checkpoints (Figure~\ref{fig:structure-paths}). Their matching legacy endpoints also conceal different independent strict-EOS retention: 54.58\% versus 73.96\%. 

\begin{figure}[t]
\centering
\includegraphics[width=0.9\linewidth]{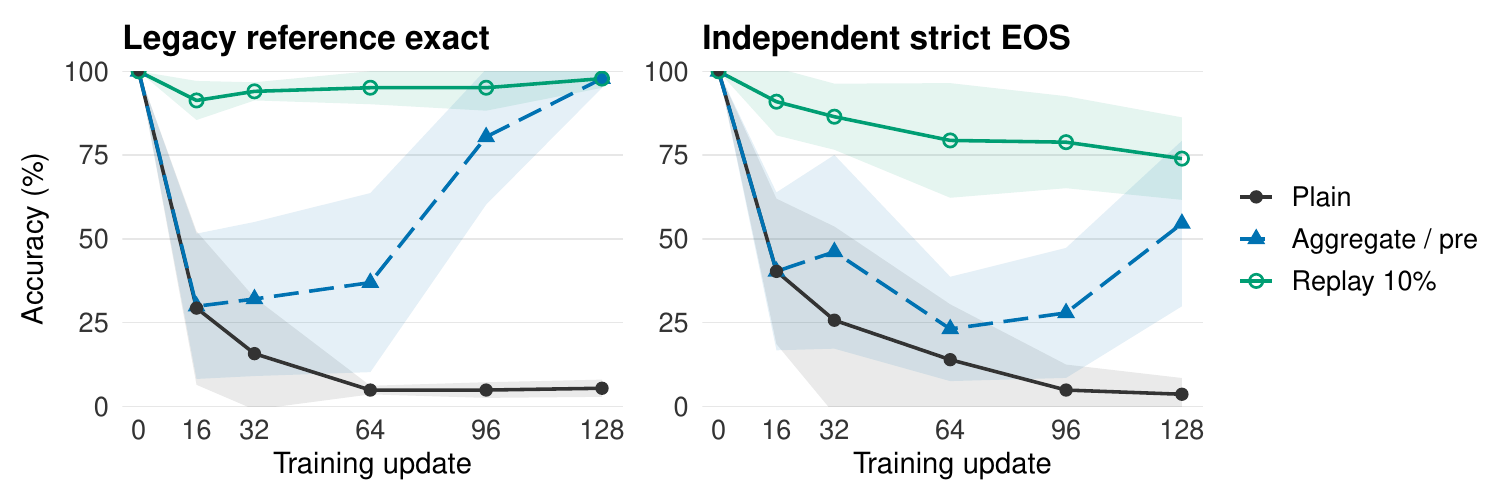}
\caption{Observed code-adaptation paths. Aggregate/pre and replay share the final legacy mean but follow different intermediate paths; independent strict-EOS audits expose further retention differences. Points and bands summarize eight seeds (pointwise 95\% $t$ intervals; portions outside 0--100\% not shown). Appendix~\ref{app:structure-location} reports all methods and separate banks.}
\label{fig:structure-paths}
\end{figure}

Code replay improves independent strict-bank retention by $70.31$ points over plain (CI $[55.97,84.66]$, supporting exact $p=.0078125$; all eight effects positive). Its full-New content difference is $+1.40$ points ($[-.90,3.71]$), satisfying the supporting $-3$-point NI rule. Its zero validation-NLL AUC+ and 73.96\% final independent strict accuracy also distinguish loss exposure from response retention.

\begin{table}[t]
\centering\small
\caption{Additional adaptation tasks, four paired seeds per method (descriptive). Percentages use the same old-bank definitions as Table~\ref{tab:structure-location}; New Content uses each complete audit. Positive legacy replay effects coexist with lower independent-bank strict accuracy.}
\label{tab:structure-transfer}
\begin{tabular}{llrrr}
\toprule
Task & Method & Legacy Ref. & Independent Strict & New Content\\
\midrule
Natural sentiment & Plain & 47.83 & 99.38 & 94.43\\
& Replay & 100.00 & 90.42 & 93.68\\
Structured extraction & Plain & 72.83 & 97.92 & 100.00\\
& Replay & 94.57 & 75.21 & 100.00\\
\bottomrule
\end{tabular}
\end{table}

Additional tasks distinguish scored content from exact response retention (Table~\ref{tab:structure-transfer}). Replay improves Legacy reference exact but changes independent strict means by $-8.96$ points on natural sentiment and $-22.71$ on extraction . On Extraction bank A, replay preserves 99.58\% content accuracy versus plain's 100\%, while strict accuracy is 82.08\% versus 100\%; all 42 content-correct/strict-incorrect replay records contain EOS. Bank B also loses content accuracy, so these effects are not uniformly stopping or response-form failures (Appendix~\ref{app:structure-outputs}). Both methods reach 100\% New extraction content; natural sentiment is high-baseline adaptation (82.24\% initial content). Code outputs show another pattern: the frozen code-label-intrusion rule flags 886/960 plain responses on the independent banks versus 23/960 with replay. These repeated outputs describe behavior, not independent trials or internal knowledge erasure.

\paragraph{Gradient freshness and probe coverage}

Refreshing the original same-state cache improves Mistral's immediate diagnostic loss at both probe sizes; Qwen's intervals include zero (Figure~\ref{fig:localrefresh}). Increasing coverage at $r8$ improves Qwen validation-NLL AUC by $.01315$ (Holm $p=.02344$). 

The decomposition also distinguishes net improvement from uniform component reduction. Replay's favorable Mistral diagnostic path coexists with larger positive staleness and residual sums than plain (Table~\ref{tab:causal-paths}). This agrees with the conditional restoring-direction account. 

\begin{figure}[t]
\centering
\includegraphics[width=0.9\linewidth]{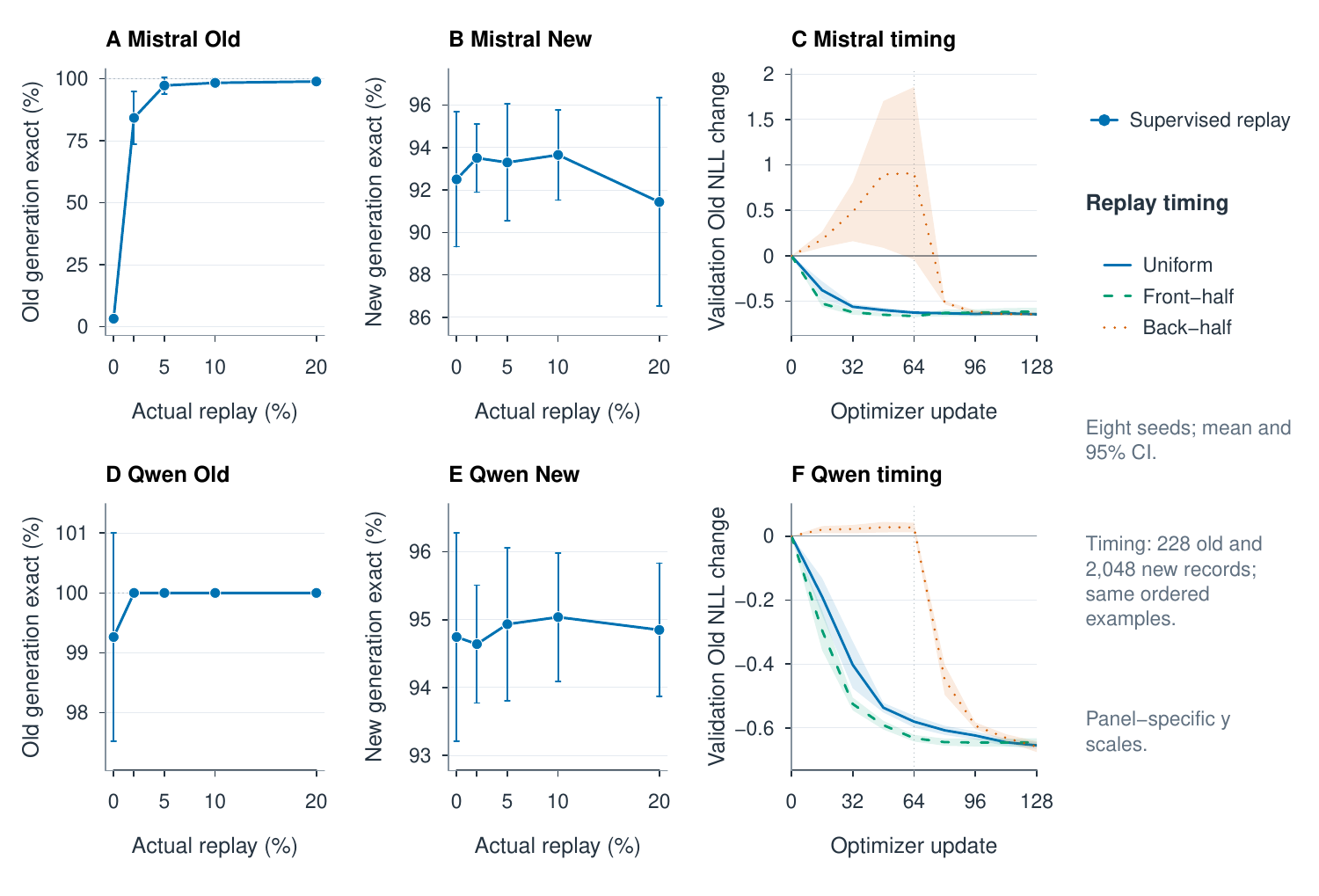}
\caption{Two-model replay operating points. A/B and D/E: final Old/New generation versus the exact old-example fraction. C/F: uniform/front/back trajectories with identical Old counts and ordered examples. The budget curves are descriptive; matched schedule interventions identify timing effects. Similar or identical final generation can accompany different accumulated old-loss exposure. Endpoint and pointwise trajectory intervals summarize eight seeds; panels use separate $y$ scales.}
\label{fig:replaybudget}
\end{figure}

\begin{table}[t]
\centering
\small
\caption{Task-wise retention in the
AG News $\rightarrow$ SST-2 $\rightarrow$ JSON sequence.
Entries show each method's just-learned $\rightarrow$ final
accuracy (\%), averaged over two paired seeds.
Final evaluation follows JSON training.
Content and strict-EOS scores agree; comparisons are descriptive.}
\label{tab:continual-bridge}
\begin{tabular}{lcc}
\toprule
Method
& AG News
& SST-2 \\
\midrule
Plain
& $82.03 \rightarrow 55.47$
& $95.31 \rightarrow 78.12$ \\
Replay 10\%
& $82.03 \rightarrow 81.25$
& $93.75 \rightarrow 92.19$ \\
Task-GEM
& $82.03 \rightarrow 68.75$
& $95.31 \rightarrow 77.34$ \\
Adapter-EWC
& $82.03 \rightarrow 53.91$
& $93.75 \rightarrow 78.12$ \\
CLoRA
& $82.81 \rightarrow 50.00$
& $93.75 \rightarrow 91.41$ \\
OPLoRA
& $85.94 \rightarrow 74.22$
& $92.97 \rightarrow 70.31$ \\
\bottomrule
\end{tabular}
\end{table}
\subsection{Replay timing: loss exposure beyond final accuracy}
\label{sec:practical}

Matched schedules share the New/Old sequences, optimizer steps, and total old objective mass, including exactly 228 Old records. Mistral uniform/back replay finish at 98.37/97.83\% Old generation but have AUCs $-.5494/+.0451$: delaying targets increases exposure by $.59443$ (CI $[.32867,.86020]$, Holm $p=.015625$). Front-loading lowers AUC by $.03107$ (Holm $p=.015625$), with five feasible endpoints versus uniform's eight.

All Qwen schedules finish at 100\% Old generation with eight feasible endpoints. Yet front-loading lowers AUC by $.04865$ and back-loading raises it by $.25071$ (both Holm $p=.015625$; full CIs in Table~\ref{tab:causal-registered}). Target timing therefore changes the loss path even when final generation does not distinguish schedules.

A post-hoc positive-part analysis separates exposure that later improvement cannot cancel. Back-minus-uniform AUC+ is $.29160$ on Mistral (CI $[.03536,.54783]$) and $.01067$ on Qwen ($[.00542,.01591]$), each with Holm $p=.03125$ in the four-contrast supporting family. Front and uniform both have zero positive-part area on the recorded grid. Front-loading's signed-AUC advantage thus differs from back-loading's above-baseline exposure (Appendix~\ref{app:positive-exposure}).

These interventions distinguish target timing from target count. All Qwen schedules establish supporting New NI; on Mistral, uniform establishes it while front/back do not. Neither the lowest AUC nor highest final generation alone determines the joint operating point.

\paragraph{Replay budget and objective allocation}

At 2/5/10/20\% replay, Mistral Old generation is 84.24/97.28/98.37/98.91\%. Ten-percent replay establishes supporting New NI ($+1.15$ points, CI $[-2.88,5.19]$); 5/20\% do not. Qwen retains 100\% generation and supporting NI at every budget, despite different loss paths.

Replacing half the supervised objective mass by base-teacher KL increases Old-NLL AUC by $.08276/.08725$ on Mistral at 2/10\% replay and $.09736/.11283$ on Qwen (all model-specific Holm $p=.02344$; full CIs in Table~\ref{tab:causal-registered}). This compares allocations of a fixed old-target budget. Earlier Pythia interventions support both labels and teacher targets over input-only replay (Appendix~\ref{app:historical}).


I-GEM retains 96.20\% Mistral generation versus replay's 98.37\%. Our factorial is not finite-iteration I-GEM (Appendix~\ref{app:causal}). Matching model-token equivalents (MTE, a compute proxy, not measured FLOPs) preserves Mistral's 94.02-point replay benefit. Longer training yields 24/24 feasible replay endpoints and New NI for all models.
\subsection{Sequential retention: protected subspaces versus acquired tasks}
\label{sec:continual-bridge}

One Mistral adapter and Adam state learn AG News, SST-2, then JSON extraction: 32 updates per task, six policies, two paired seeds. Plain reaches 98.44\% extraction, but news/sentiment decline from 82.03/95.31\% when acquired to 55.47/78.12\% finally (Table~\ref{tab:continual-bridge}).

CLoRA/OPLoRA improve mean final accuracy by 3.12/3.91 points. CLoRA improves sentiment but reduces news accuracy; OPLoRA shows the opposite pattern. Replay improves both. Verified OPLoRA subspace protection coexists with learned-task regression: initial-weight subspaces and subsequently acquired behavior are different protection objects. Appendix~\ref{app:continual-bridge} gives all matrices and numerical checks for this descriptive comparison, including eight archived controls and four new chains.

These regressions involve learned task answers rather than only
response formatting: 83 of Plain's 85 final old-task errors are
valid but incorrect classification labels.

Meanwhile, OPLoRA's largest final protected-leakage energy ratio
is $1.91\times10^{-14}$ for FP32 effective factors
($3.72\times10^{-7}$ after BF16 factor rounding).
In this fixed two-seed sequence, numerical protection of selected
initial-weight subspaces therefore coexists with the loss of
subsequently acquired task answers.
\section{Related Work}
\label{sec:related-work}

\paragraph{Functional targets and replay.}
Functional regularization anchors predictions \citep{pan2020continual}; ARR combines KL with corpus replay and MSSR schedules memory strength \citep{riemer2025arr,lu2026mssr}. Our matched-target interventions isolate timing effects beyond final accuracy (Appendix~\ref{app:causal}).

\paragraph{Gradient constraints and forgetting theory.}
GEM/A-GEM constrain episodic gradients \citep{lopez2017gradient}; I-GEM constrains LoRA gradients and PEGP protects feature subspaces \citep{tekmen2026igem,qiao2024pegp}. Beyond linear-model expected-forgetting analyses \citep{lin2023theory}, our nonlinear account separates direction, staleness, mismatch, and finite-step residuals, with interventions on granularity and placement.

\paragraph{Parameter and subspace protection.}
EWC uses Fisher-weighted penalties \citep{kirkpatrick2017overcoming}; Adam-NSCL projects Adam updates into historical-feature null spaces \citep{wang2021adamnscl}. CLoRA regularizes LoRA factors against fixed subspaces; OPLoRA protects weight singular subspaces \citep{lu2025clora,xiong2026oplora}. Our EWC/CLoRA/OPLoRA comparisons measure acquired-task retention (Appendix~\ref{app:continual-bridge}).

\section{Conclusion}
\label{sec:conclusion}

The functional-path account connects realized updates to retained behavior. Controlled interventions identify placement effects and endpoint-dependent interactions; checkpointed outputs and matched schedules distinguish endpoints from trajectories. Sequential tasks show that average subspace-protection gains coexist with old-task trade-offs. Together with replay's substantial but response-dependent gains, these findings support explicit functional anchors and task-wise old/new evaluation.
\label{sec:main-end}

\clearpage
\subsection*{AI use statement}

Generative AI tools were used to assist with code development, text editing, and translation, and contributed to the drafting of relevant sections of the appendix.
\subsection*{Ethics statement}

The study evaluates retention on declared, model-specific capabilities under controlled adaptation tasks. Preserving a measured capability does not establish the safety or suitability of a deployed model: retention can also preserve undesirable behavior. Retention banks and deployment audits should therefore be selected for the intended application, with separate evaluation of safety, bias and privacy. The manuscript limits its behavioral claims to the measured interfaces and reports complete outcomes and declared audit statuses.

\subsection*{Reproducibility statement}

Section~\ref{sec:method} states the assumptions and Appendix~\ref{app:proofs} provides the proofs and constructive examples. Section~\ref{sec:experiments} specifies the intervention logic and primary comparisons. Appendix~\ref{app:causal} gives the original supplement protocol, adapted baselines, complete groups, statistical families and measurement audit. Appendices~\ref{app:scale-results}--\ref{app:historical} retain the scale and historical evidence; Appendix~\ref{app:implementation} gives contract details. Appendix~\ref{app:structure-location} specifies the separate structure--location study, task and decoding interfaces, complete outcomes, sampled outputs and retained checkpoints. Appendix~\ref{app:structure-score-audit} reports an offline scoring audit and targeted GPU recheck, distinct from new training. Appendix~\ref{app:continual-bridge} supplies the sequential data, baseline adaptations, all twelve task-boundary matrices, and numerical subspace checks. Statistics were recomputed from seed-level outcomes; generation, teacher-forced loss, numerical feasibility and new-task non-inferiority are distinct measurements.

\bibliography{iclr2027_conference}

@article{chaudhry2018efficient,
  title={Efficient lifelong learning with a-gem},
  author={Chaudhry, Arslan and Ranzato, Marc'Aurelio and Rohrbach, Marcus and Elhoseiny, Mohamed},
  journal={arXiv preprint arXiv:1812.00420},
  year={2018}
}

@article{titsias2019functional,
  title={Functional regularisation for continual learning with gaussian processes},
  author={Titsias, Michalis K and Schwarz, Jonathan and Matthews, Alexander G de G and Pascanu, Razvan and Teh, Yee Whye},
  journal={arXiv preprint arXiv:1901.11356},
  year={2019}
}

@article{pan2020continual,
  title={Continual deep learning by functional regularisation of memorable past},
  author={Pan, Pingbo and Swaroop, Siddharth and Immer, Alexander and Eschenhagen, Runa and Turner, Richard and Khan, Mohammad Emtiyaz},
  journal={Advances in neural information processing systems},
  volume={33},
  pages={4453--4464},
  year={2020}
}

@misc{jiang2023mistral7b,
      title={Mistral 7B}, 
      author={Albert Q. Jiang and Alexandre Sablayrolles and Arthur Mensch and Chris Bamford and Devendra Singh Chaplot and Diego de las Casas and Florian Bressand and Gianna Lengyel and Guillaume Lample and Lucile Saulnier and Lélio Renard Lavaud and Marie-Anne Lachaux and Pierre Stock and Teven Le Scao and Thibaut Lavril and Thomas Wang and Timothée Lacroix and William El Sayed},
      year={2023},
      eprint={2310.06825},
      archivePrefix={arXiv},
      primaryClass={cs.CL},
      url={https://arxiv.org/abs/2310.06825}, 
}

@article{olmo20242,
  title={2 OLMo 2 Furious},
  author={OLMo, Team and Walsh, Pete and Soldaini, Luca and Groeneveld, Dirk and Lo, Kyle and Arora, Shane and Bhagia, Akshita and Gu, Yuling and Huang, Shengyi and Jordan, Matt and others},
  journal={arXiv preprint arXiv:2501.00656},
  year={2024}
}

@article{lu2026mssr,
  title={MSSR: Memory-Aware Adaptive Replay for Continual LLM Fine-Tuning},
  author={Lu, Yiyang and He, Yu and Chen, Jianlong and Zha, Hongyuan},
  journal={arXiv preprint arXiv:2603.09892},
  year={2026}
}

@article{kirkpatrick2017overcoming,
  title={Overcoming catastrophic forgetting in neural networks},
  author={Kirkpatrick, James and Pascanu, Razvan and Rabinowitz, Neil and Veness, Joel and Desjardins, Guillaume and Rusu, Andrei A and Milan, Kieran and Quan, John and Ramalho, Tiago and Grabska-Barwinska, Agnieszka and others},
  journal={Proceedings of the national academy of sciences},
  volume={114},
  number={13},
  pages={3521--3526},
  year={2017},
  publisher={National Academy of Sciences}
}

@article{lopez2017gradient,
  title={Gradient episodic memory for continual learning},
  author={Lopez-Paz, David and Ranzato, Marc'Aurelio},
  journal={Advances in neural information processing systems},
  volume={30},
  year={2017}
}

@article{li2017learning,
  title={Learning without forgetting},
  author={Li, Zhizhong and Hoiem, Derek},
  journal={IEEE transactions on pattern analysis and machine intelligence},
  volume={40},
  number={12},
  pages={2935--2947},
  year={2017},
  publisher={IEEE}
}

@article{buzzega2020dark,
  title={Dark experience for general continual learning: a strong, simple baseline},
  author={Buzzega, Pietro and Boschini, Matteo and Porrello, Angelo and Abati, Davide and Calderara, Simone},
  journal={Advances in neural information processing systems},
  volume={33},
  pages={15920--15930},
  year={2020}
}

@inproceedings{kotha2024understanding,
  title={Understanding catastrophic forgetting in language models via implicit inference},
  author={Kotha, Suhas and Springer, Jacob and Raghunathan, Aditi},
  booktitle={International Conference on Learning Representations},
  volume={2024},
  pages={24110--24139},
  year={2024}
}

@article{tao2023can,
  title={Can bert refrain from forgetting on sequential tasks? a probing study},
  author={Tao, Mingxu and Feng, Yansong and Zhao, Dongyan},
  journal={arXiv preprint arXiv:2303.01081},
  year={2023}
}

@inproceedings{zheng2025spurious,
  title={Spurious forgetting in continual learning of language models},
  author={Zheng, Junhao and Cai, Xidi and Qiu, Shengjie and Ma, Qianli},
  booktitle={International Conference on Learning Representations},
  volume={2025},
  pages={67081--67146},
  year={2025}
}

@article{qiao2024pecl,
  title={Learn more, but bother less: parameter efficient continual learning},
  author={Qiao, Fuli and Mahdavi, Mehrdad},
  journal={Advances in Neural Information Processing Systems},
  volume={37},
  pages={97476--97498},
  year={2024}
}

@inproceedings{lin2023theory,
  title={Theory on forgetting and generalization of continual learning},
  author={Lin, Sen and Ju, Peizhong and Liang, Yingbin and Shroff, Ness},
  booktitle={International Conference on Machine Learning},
  pages={21078--21100},
  year={2023},
  organization={PMLR}
}

@article{rolnick2019replay,
  title={Experience replay for continual learning},
  author={Rolnick, David and Ahuja, Arun and Schwarz, Jonathan and Lillicrap, Timothy and Wayne, Gregory},
  journal={Advances in neural information processing systems},
  volume={32},
  year={2019}
}

@inproceedings{wang2024inscl,
  title={Inscl: A data-efficient continual learning paradigm for fine-tuning large language models with instructions},
  author={Wang, Yifan and Liu, Yafei and Shi, Chufan and Li, Haoling and Chen, Chen and Lu, Haonan and Yang, Yujiu},
  booktitle={Proceedings of the 2024 Conference of the North American Chapter of the Association for Computational Linguistics: Human Language Technologies (Volume 1: Long Papers)},
  pages={663--677},
  year={2024}
}

@inproceedings{he2024seekr,
  title={Seekr: Selective attention-guided knowledge retention for continual learning of large language models},
  author={He, Jinghan and Guo, Haiyun and Zhu, Kuan and Zhao, Zihan and Tang, Ming and Wang, Jinqiao},
  booktitle={Proceedings of the 2024 Conference on Empirical Methods in Natural Language Processing},
  pages={3254--3266},
  year={2024}
}

@inproceedings{huang2024ssr,
  title={Mitigating catastrophic forgetting in large language models with self-synthesized rehearsal},
  author={Huang, Jianheng and Cui, Leyang and Wang, Ante and Yang, Chengyi and Liao, Xinting and Song, Linfeng and Yao, Junfeng and Su, Jinsong},
  booktitle={Proceedings of the 62nd Annual Meeting of the Association for Computational Linguistics (Volume 1: Long Papers)},
  pages={1416--1428},
  year={2024}
}

@article{ren2024ilora,
  title={Analyzing and reducing catastrophic forgetting in parameter efficient tuning},
  author={Ren, Weijieying and Li, Xinlong and Wang, Lei and Zhao, Tianxiang and Qin, Wei},
  journal={arXiv preprint arXiv:2402.18865},
  year={2024}
}

@article{hu2022lora,
  title={Lora: Low-rank adaptation of large language models},
  author={Hu, Edward J and Shen, Yelong and Wallis, Phillip and Allen-Zhu, Zeyuan and Li, Yuanzhi and Wang, Shean and Wang, Lu and Chen, Weizhu},
  journal={arXiv preprint arXiv:2106.09685},
  year={2021}
}

@article{dettmers2023qlora,
  title={Qlora: Efficient finetuning of quantized llms},
  author={Dettmers, Tim and Pagnoni, Artidoro and Holtzman, Ari and Zettlemoyer, Luke},
  journal={Advances in neural information processing systems},
  volume={36},
  pages={10088--10115},
  year={2023}
}

@article{loshchilov2019adamw,
  title={Decoupled weight decay regularization},
  author={Loshchilov, Ilya and Hutter, Frank},
  journal={arXiv preprint arXiv:1711.05101},
  year={2017}
}

@inproceedings{kornblith2019cka,
  title={Similarity of neural network representations revisited},
  author={Kornblith, Simon and Norouzi, Mohammad and Lee, Honglak and Hinton, Geoffrey},
  booktitle={International conference on machine learning},
  pages={3519--3529},
  year={2019},
  organization={PMlR}
}

@inproceedings{biderman2023pythia,
  title={Pythia: A suite for analyzing large language models across training and scaling},
  author={Biderman, Stella and Schoelkopf, Hailey and Anthony, Quentin Gregory and Bradley, Herbie and O’Brien, Kyle and Hallahan, Eric and Khan, Mohammad Aflah and Purohit, Shivanshu and Prashanth, USVSN Sai and Raff, Edward and others},
  booktitle={International conference on machine learning},
  pages={2397--2430},
  year={2023},
  organization={PMLR}
}

@article{qwen2024qwen25,
  title={Qwen2. 5-coder technical report},
  author={Hui, Binyuan and Yang, Jian and Cui, Zeyu and Yang, Jiaxi and Liu, Dayiheng and Zhang, Lei and Liu, Tianyu and Zhang, Jiajun and Yu, Bowen and Lu, Keming and others},
  journal={arXiv preprint arXiv:2409.12186},
  year={2024}
}

@article{zhang2024tinyllama,
  title={Tinyllama: An open-source small language model},
  author={Zhang, Peiyuan and Zeng, Guangtao and Wang, Tianduo and Lu, Wei},
  journal={arXiv preprint arXiv:2401.02385},
  year={2024}
}

@inproceedings{tekmen2026igem,
  title={GEM-Style Constraints for PEFT with Dual Gradient Projection in LoRA},
  author={Tekmen, Brian and Yin, Jason and Tong, Qianqian},
  booktitle={2025 IEEE International Conference on Data Mining Workshops (ICDMW)},
  pages={2736--2743},
  year={2025},
  organization={IEEE}
}

@article{qiao2024pegp,
  title={Gradient projection for continual parameter-efficient tuning},
  author={Qiao, Jingyang and Zhang, Zhizhong and Tan, Xin and Qu, Yanyun and Zhang, Wensheng and Han, Zhi and Xie, Yuan},
  journal={IEEE Transactions on Pattern Analysis and Machine Intelligence},
  volume={47},
  number={10},
  pages={9316--9329},
  year={2025},
  publisher={IEEE}
}

@article{riemer2025arr,
  title={The effectiveness of approximate regularized replay for efficient supervised fine-tuning of large language models},
  author={Riemer, Matthew and Miehling, Erik and Liu, Miao and Bouneffouf, Djallel and Campbell, Murray},
  journal={arXiv preprint arXiv:2512.22337},
  year={2025}
}

@article{zhang2015character,
  title={Character-level convolutional networks for text classification},
  author={Zhang, Xiang and Zhao, Junbo and LeCun, Yann},
  journal={Advances in neural information processing systems},
  volume={28},
  year={2015}
}

@inproceedings{socher2013recursive,
  title={Recursive deep models for semantic compositionality over a sentiment treebank},
  author={Socher, Richard and Perelygin, Alex and Wu, Jean and Chuang, Jason and Manning, Christopher D and Ng, Andrew Y and Potts, Christopher},
  booktitle={Proceedings of the 2013 conference on empirical methods in natural language processing},
  pages={1631--1642},
  year={2013}
}

@inproceedings{wang2021adamnscl,
  title={Training networks in null space of feature covariance for continual learning},
  author={Wang, Shipeng and Li, Xiaorong and Sun, Jian and Xu, Zongben},
  booktitle={2021 IEEE/CVF Conference on Computer Vision and Pattern Recognition (CVPR)},
  pages={184--193},
  year={2021},
  organization={IEEE}
}

@inproceedings{lu2025clora,
  title={Controlled low-rank adaptation with subspace regularization for continued training on large language models},
  author={Lu, Yuheng and Qian, Bingshuo and Yuan, Caixia and Jiang, Huixing and Wang, Xiaojie},
  booktitle={Proceedings of the 63rd Annual Meeting of the Association for Computational Linguistics (Volume 1: Long Papers)},
  pages={19165--19181},
  year={2025}
}

@inproceedings{xiong2026oplora,
  title={Oplora: Orthogonal projection lora prevents catastrophic forgetting during parameter-efficient fine-tuning},
  author={Xiong, Yifeng and Xie, Xiaohui},
  booktitle={Proceedings of the AAAI Conference on Artificial Intelligence},
  volume={40},
  number={40},
  pages={34088--34096},
  year={2026}
}
\bibliographystyle{iclr2027_conference}
\appendix
\section{Proofs and Constructive Examples}
\label{app:proofs}

\subsection{Proof of Proposition~\ref{prop:path}}

Let $f_t(s)=L(\theta_t+s u_t)$. Applying the fundamental theorem of calculus twice gives
\begin{align}
 f_t(1)-f_t(0)
 &=f_t'(0)+\int_0^1\int_0^s f_t''(v)\,dv\,ds\\
 &=\nabla L(\theta_t)^\top u_t+
 \int_0^1(1-v)u_t^\top\nabla^2L(\theta_t+v u_t)u_t\,dv.
\end{align}
Substitute $\nabla L(\theta_t)=h_t+a_t+s_t$ and sum over $t$; the left side telescopes, yielding Equation~\ref{eq:decomp}. Lipschitz continuity gives $\|s_t\|\leq\beta_Q\|\theta_t-\theta_{\tau(t)}\|$. For a twice differentiable function, a $\beta_L$-Lipschitz gradient implies $\|\nabla^2L(\theta)\|_{\mathrm{op}}\leq\beta_L$, by differentiating the gradient along each unit direction. Thus $|r_t|\leq\beta_L\|u_t\|^2/2$. Cauchy--Schwarz for the mismatch and staleness terms, together with the residual assumption, proves Equation~\ref{eq:bound}.\hfill$\square$

\paragraph{Finite precision and gradient error.}
If the computed cache is $\widetilde h_t=h_t+e_t^h$ and $\widetilde h_t^\top u_t\leq\widetilde\varepsilon_t$, write $h_t^\top u_t=\widetilde h_t^\top u_t-(e_t^h)^\top u_t$. The bound uses $\widetilde\varepsilon_t$ and acquires $\|e_t^h\|\|u_t\|$. The logged projection residual checks the computed constraint, while the analytical bound retains the gradient-error norm and smoothness constants as explicit assumptions.

\paragraph{Empirical accounting identity.}
\label{app:measurement}
Let the logged pre/post losses be $\widetilde L_t^-=L(\theta_t)+\delta_t^-$ and $\widetilde L_t^+=L(\theta_{t+1})+\delta_t^+$, the logged evaluation gradient be $\widetilde g_t=g_t+e_t^g$, and the recorded displacement be $\widetilde u_t=u_t+e_t^u$. Interpreting these recorded values in real arithmetic, the measured increment and finite-step residual satisfy
\begin{align}
 \widetilde\Delta_t&=\widetilde L_t^+-\widetilde L_t^-,\notag\\
 \widetilde r_t&=\widetilde\Delta_t-\widetilde g_t^\top\widetilde u_t\notag\\
 &=r_t+\delta_t^+-\delta_t^--(e_t^g)^\top u_t-\widetilde g_t^\top e_t^u.
 \label{eq:measured-remainder}
\end{align}
Thus the residual includes the smooth finite-step term and numerical loss, gradient, and displacement errors, with additional scalar roundoff when reductions are evaluated numerically. It is not a direct measurement of the Hessian integral. Adding and subtracting logged fresh and cached gradient dots reconstructs $\widetilde\Delta_t$ algebraically; near-zero closure error checks that reconstruction, not smoothness or the size of these errors.

Consecutive evaluations are checked independently. With $\xi_t=\widetilde L_{t+1}^- -\widetilde L_t^+$, their endpoint accounting is
\begin{equation}
 \widetilde L_{T-1}^+-\widetilde L_0^-
 =\sum_{t<T}\widetilde\Delta_t+\sum_{t<T-1}\xi_t.
 \label{eq:measurement-gaps}
\end{equation}
Reporting both the increment sum and the adjacent-evaluation gaps separates within-update accounting from continuity across the logged trajectory.

\subsection{Proof of Corollary~\ref{cor:refresh}}

For a block $B=\{b,\ldots,b+m-1\}$, $m\leq r$, set $v_t=\|u_t\|$. Then $\|\theta_t-\theta_b\|\leq\sum_{k=b}^{t-1}v_k$, and
\begin{align}
 \sum_{t\in B}\|\theta_t-\theta_b\|v_t
 &\leq\sum_{b\leq k<t<b+m}v_kv_t\\
 &=\frac12\left[\left(\sum_{t\in B}v_t\right)^2-\sum_{t\in B}v_t^2\right]
 \leq\frac{r-1}{2}\sum_{t\in B}v_t^2.
\end{align}
The last inequality follows from Cauchy--Schwarz and $m\leq r$. Summing over blocks and substituting into Equation~\ref{eq:bound} proves Equation~\ref{eq:block}. For $L=Q$, set $D=0$ and $\beta_L=\beta_Q=\beta$.\hfill$\square$

The same bound applies to every prefix of the trajectory after truncating its last refresh block. Bounding one final loss does not, by itself, bound the largest intermediate loss. Conversely, uniform bounds on every partial right-hand-side sum would provide a pathwise sufficient budget.

\subsection{Tight quadratic construction}
\label{app:quadratic}

Take $L=Q=\|\theta\|^2/2$ on $\R^2$, $\theta_0\ne0$, and
\begin{equation}
 J=\begin{pmatrix}0&-1\\1&0\end{pmatrix},\qquad J^\top=-J,\quad J^\top J=I.
\end{equation}
In block $k$ cache $h=\theta_{kr}$. At each of its $r$ steps, use candidate $v=bh+aJh$ for $a,b>0$. Since $h^\top v=b\|h\|^2>0$, projection is active and yields $u=aJh$. Every cached gradient is nonzero and every post-projection dot is exactly zero. Within the block $\theta_{kr+j}=h+jaJh$, so
\begin{align}
 \nabla L(\theta_{kr+j})^\top u&=ja^2\|h\|^2,\\
 L(\theta_{kr+j+1})-L(\theta_{kr+j})&=(j+1/2)a^2\|h\|^2>0.
\end{align}
Orthogonality gives $L(\theta_{(k+1)r})=(1+r^2a^2)L(\theta_{kr})$, hence $L(\theta_{Kr})=(1+r^2a^2)^K L(\theta_0)$ diverges as $K$ grows. The block loss increase is $r^2a^2\|h\|^2/2$ and the squared-update sum is $r a^2\|h\|^2$, attaining $r\sum\|u_t\|^2/2$ exactly. For $r=1$, the fresh dot is zero but curvature still increases the loss; for $r>1$, each positive-age step also has a positive fresh first-order term.\hfill$\square$

\subsection{A cross-entropy and greedy-output construction}

For one binary old example with correct class 1, define $z_1(\theta)=c-\|\theta\|^2/2$, $z_0(\theta)=0$, and $L=Q=\log(1+\exp(-z_1))$. Start at $\theta_0=(1,0)$ with $c>1/2$, giving a correct greedy answer. The cached gradient is $h=\sigma(1/2-c)(1,0)\ne0$, where $\sigma$ is the logistic function. Project the candidate $(b,a)$, $a,b>0$, against this fixed cache for $r$ steps. Every projection is active and yields $u=(0,a)$, so $\theta_j=(1,ja)$ and $h^\top u=0$. Nevertheless,
\begin{equation}
 \nabla L(\theta_j)^\top u
 =ja^2\sigma\left((1+j^2a^2)/2-c\right)>0\quad(j>0).
\end{equation}
The loss increases at every step and the final greedy answer is wrong if $r^2a^2>2c-1$. For $r=8$, $a=1/5$, and $c=1$, the margin changes from $1/2$ to $-39/50$. For arbitrarily small $a>0$, choosing $1/2<c<(1+r^2a^2)/2$ gives a positive initial margin and the same failure. This analytic example has identical probe and evaluation data and a nonzero control gradient. It is not a measured language-model trajectory.

\subsection{Proof of Proposition~\ref{prop:replay}}

Let $v_\lambda=P(g_1+\lambda g_0)$ and $q=g_0^\top Pg_0>0$. The smoothness inequality gives
\begin{equation}
 L(\theta-\eta v_\lambda)-L(\theta)
 \leq-\eta A_\lambda+\frac{\beta\eta^2}{2}\|v_\lambda\|^2.
\end{equation}
If $A_\lambda>0$, then $v_\lambda\ne0$. Equation~\ref{eq:replaystep} makes this upper bound strictly negative; allowing equality in its upper step-size bound gives non-increase. Because $A_\lambda=g_0^\top Pg_1+\lambda q$ is affine with positive slope, one sufficient choice is $\lambda>\max\{0,-g_0^\top Pg_1/q\}$. Subtracting the two common-state candidates gives Equation~\ref{eq:replaydirection}.\hfill$\square$

If $P$ is positive semidefinite rather than definite, the same reasoning requires $q>0$. If $g_0=0$, no strictly restoring first-order term is available. With a sampled replay gradient $g_R=g_0+e$, the $P$-norm inequality
\begin{equation}
 |g_0^\top Pe|\leq\|g_0\|_P\|e\|_P,
 \qquad\|w\|_P=(w^\top Pw)^{1/2},
\end{equation}
follows by applying Cauchy--Schwarz to $P^{1/2}g_0$ and $P^{1/2}e$. Thus $\|e\|_P<\|g_0\|_P$ is sufficient for a strictly beneficial first-order replay term when $\eta,\lambda>0$. We do not measure this uniform mismatch condition for a hidden audit.

\paragraph{AdamW and teacher targets.}
Adding a replay gradient generally changes both Adam's numerator and diagonal preconditioner, even when the preceding optimizer state is shared. Clipping and interleaved separate updates add further differences. A frozen-teacher KL gradient is the gradient of a different objective and vanishes when student and teacher distributions agree exactly. Applying the proposition to that KL establishes a statement about that KL, not automatically about hard-label accuracy. These facts explain why the conditional calculation and the empirical long-horizon effects are reported as different forms of evidence.

\subsection{A sufficient condition for greedy generation retention}
\label{app:margin}

\begin{proposition}[Preservation of acquired greedy decisions]
\label{prop:margin}
Fix a finite prompt bank and deterministic greedy decoding with a fixed stopping rule depending only on generated tokens and their count. Suppose the baseline generates $y_{i,1},\ldots,y_{i,m_i}$ on each acquired item, including a termination token when the rule uses one. On every correct prefix $p_{ij}=(x_i,y_{i,<j})$, define the margin
\begin{equation}
 \gamma_{ij}=z_{\theta_0}(y_{ij}\mid p_{ij})-
 \max_{v\ne y_{ij}}z_{\theta_0}(v\mid p_{ij})>0.
\end{equation}
If
\begin{equation}
 \|z_{\theta_T}(\cdot\mid p_{ij})-z_{\theta_0}(\cdot\mid p_{ij})\|_\infty
 <\gamma_{ij}/2\qquad\text{for every }i,j,
\end{equation}
then the final model produces the same greedy sequences and stopping positions.
\end{proposition}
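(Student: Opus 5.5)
The plan is an induction on the decoding position $j$ within each acquired item $i$, followed by a separate check of the stopping rule. The core step is a one-prefix lemma: at a fixed prefix $p_{ij}$, the perturbation bound keeps the baseline token $y_{ij}$ strictly on top. Write $z_0=z_{\theta_0}(\cdot\mid p_{ij})$ and $z_T=z_{\theta_T}(\cdot\mid p_{ij})$, and let $\delta=\|z_T-z_0\|_\infty<\gamma_{ij}/2$. Then for every $v\ne y_{ij}$,
\begin{equation}
 z_T(y_{ij})-z_T(v)\geq z_0(y_{ij})-z_0(v)-2\delta\geq\gamma_{ij}-2\delta>0 .
\end{equation}
So $y_{ij}$ is the unique argmax of $z_T$, and greedy decoding under $\theta_T$ selects it. Uniqueness matters because it makes tie-breaking irrelevant. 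The inequality follows from the triangle inequality applied coordinatewise to the two logits.

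Next, I would run the induction on $j=1,\ldots,m_i$. The hypothesis is that the final model's generated prefix at step $j$ equals the baseline prefix $y_{i,<j}$. The base case is the empty prefix, and the step is the lemma above applied at $p_{ij}$. This is exactly where the hypothesis is needed only on baseline-correct prefixes. The final model never leaves the baseline path, so prefixes it would produce off the path never arise and need no control.

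The remaining point, and the one needing the most care, is stopping. The rule depends only on the generated tokens and their count, and the two models produce identical token sequences at every step up to $m_i$. Hence the rule fires for the final model exactly when it fires for the baseline. It cannot fire earlier, because at each step $j<m_i$ the two models hold the same history and the baseline did not stop there. It does fire at $m_i$, because the sequence there is the same one that triggered the baseline's stop. If the rule is termination-token based, the termination token is by assumption among the $y_{ij}$, so its selection is covered by the lemma. If it is length-based, only the count matters and that count is identical. Both cases therefore reduce to determinism of the rule as a function of the shared history. Applying this argument to every $i$ in the finite bank gives identical greedy sequences and stopping positions. Finiteness is not actually needed for the argument; it only ensures that all the margins and the perturbation bound are taken over a fixed, well-defined set.
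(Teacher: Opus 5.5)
Your proposal is correct and follows the paper's proof: the same per-prefix bound $z_{\theta_T}(y_{ij}\mid p_{ij})-z_{\theta_T}(v\mid p_{ij})\geq\gamma_{ij}-2\|z_{\theta_T}(\cdot\mid p_{ij})-z_{\theta_0}(\cdot\mid p_{ij})\|_\infty>0$, followed by induction along the baseline prefixes. Your separate treatment of termination-token and length-based stopping only spells out what the paper compresses into ``including the stopping decision.''
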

\begin{proof}
For any competitor $v\ne y_{ij}$, the final correct-token logit gap is bounded below by
\begin{equation}
 z_{\theta_T}(y_{ij}\mid p_{ij})-z_{\theta_T}(v\mid p_{ij})
 \geq\gamma_{ij}-2\|z_{\theta_T}(\cdot\mid p_{ij})-z_{\theta_0}(\cdot\mid p_{ij})\|_\infty>0.
\end{equation}
The first prefixes coincide. Inductively, equality of previous tokens supplies the same prefix and the strict inequality forces the same next token, including the stopping decision.
\end{proof}
If every token logit's parameter gradient along segment $t$ at prefix $p_{ij}$ has norm at most $K_{ij,t}$, the fundamental theorem of calculus and the triangle inequality give a sufficient path condition $\sum_{t<T}K_{ij,t}\|u_t\|<\gamma_{ij}/2$. A mean answer margin does not establish these per-prefix minima, and these Lipschitz constants were not certified for the experimental models.

\paragraph{Parameterization and representation.}
LoRA deploys $W_{\mathrm{base}}+sBA$. For nonzero $c$, the transformation $(B,A)\mapsto(cB,A/c)$ preserves this effective weight while generally changing factor coordinates and their norms. A measured factor-distance statistic is therefore not an invariant function distance. Likewise, a function-relevant output margin may change while broad representation similarity remains high. These observations explain why geometric diagnostics and functional audits answer different questions; they do not imply that all representational change is irrelevant to forgetting.

\section{Causal Supplement: Protocol, Complete Families, and Measurement Audit}
\label{app:causal}

\subsection{Snapshot and statistical design}

This appendix reports the original 256-run refresh--coverage supplement, verified on the server on September 13, 2026, at 12:52 Beijing time. It contains all 32 smoke, 36 calibration, and \textbf{256 formal outcomes}: 128/128 Mistral and 128/128 Qwen. The formal statuses are 171 feasible, 84 retention-audit failures, and one New-acquisition failure. The latter is Mistral $r1,n48$, seed 51008; it remains in all assigned endpoints. A completed scientific failure is an observation, not a missing process or a reason to drop a seed. The separately analyzed structure--location study is reported in Appendix~\ref{app:structure-location}.

All 32 method groups and eight model-specific statistical families are complete. The six-arm mechanism design contains 96 runs; the remaining arms complete the calibrated-baseline, replay-budget, schedule, and target-allocation comparisons. Table~\ref{tab:causal-all-groups} lists every group; Table~\ref{tab:causal-registered} reports all 22 prespecified contrasts. Previously inspected results were not overwritten: all 196 completed result files and their associated scientific logs are byte-identical to the earlier snapshot.

All formal cells pair seeds 51001--51008. Two-sided sign-flip tests enumerate $2^8$ assignments; paired intervals use $t_{.975,7}$. Corrections are within model and family. The factorial family contains refresh at $n12$, coverage at $r8$, and their difference-in-differences. The baseline family contains I-GEM/PEGP/ARR minus supervised10 on Old generation. The schedule family contains front/back minus uniform on validation-NLL AUC. The hybrid family contains hybrid minus supervision at 2/10\% and their interaction. No family is reduced after observing which cells have finished. Budget curves, same-state contrasts, and individual New-NI intervals are supporting summaries, not additional corrected primary families.

\paragraph{Audit and loss definitions.}
The supplement adds a maximum qualified-generation drop of .10 to the original multimetric Old contract (NLL .20, margin 1.0, token accuracy .05, sequence exact .10, globally and by stratum). It also requires New acquisition. Status is assessed at the final budgeted endpoint, with no hidden-audit checkpoint search. NI is instead a paired method-versus-plain statement with the full-precision lower 95\% bound above $-.03$; a failed NI check is not proof of harm.

The old controller is partitioned deterministically into a protection pool of 48 and validation sets of 50 Mistral/23 Qwen records. Diagnostic $E$ selects 12 validation records and never supplies optimization gradients. The nested $n12/n48$ probes are fixed across methods and seeds. $E$ is disjoint from the protection pool and contained in validation. Hidden audits and generation-qualified subsets are outside both. Every diagnostic record first averages its supervised tokens, followed by a record mean; validation NLL uses a supervised-token mean. The diagnostic decomposition and validation AUC consequently differ in both weighting and coverage.

\paragraph{AUC and exposure matching.}
Validation is measured at the common grid $0,16,\ldots,128$. We subtract each run's baseline and integrate the piecewise-linear curve against normalized consumed-New exposure; the plotting pipeline uses these actual grid values without an added interpolation grid. All replay schedules share 2,048 ordered New records, 128 updates, and the same ordered list of 228 Old records. Front/back allocate this list to the first/second 64 updates. Exposure-plan hashes verify the within-seed match. Fractions of 2/5/10/20\% correspond to $42/108/228/512$ Old records after integer allocation; the plotted fractions use these exact counts.

\paragraph{Positive-part loss exposure (post-hoc supporting analysis).}
\label{app:positive-exposure}
To separate above-baseline loss exposure from net loss improvement, we additionally compute
\[
 \mathrm{AUC}_{+}=\int_0^1[\widehat L(s)-\widehat L(0)]_+\,ds,
 \qquad
 D_{\max}^{\mathrm{obs}}=\max_j\{L_j-L_0\},
\]
where $\widehat L$ is the piecewise-linear interpolation of the recorded validation NLL and $s$ is normalized consumed-New exposure. We integrate separately within each run, splitting intervals at zero crossings, before summarizing the eight seeds. The maximum includes the baseline observation. The grid is unchanged: updates $0,16,\ldots,128$, corresponding to $0,256,\ldots,2048$ New exposures. These are post-hoc supporting measures of sampled, token-weighted validation loss, distinct from both the diagnostic record-mean loss and intermediate generation.

Table~\ref{tab:positive-exposure} reports all three schedules and the four front/back-versus-uniform contrasts. Holm correction for this post-hoc analysis covers these four contrasts jointly across the two models, separately from the frozen schedule-AUC family. Peak degradation is reported descriptively. Delayed replay increases positive-part exposure in both models; front and uniform replay have zero positive-part area on the recorded grid, so their signed-AUC difference concerns below-baseline improvement rather than reduced above-baseline exposure. Zero area on this grid does not establish preservation at unobserved updates, within every stratum, or in generated behavior.

\begin{table}[htbp]
\centering\small\setlength{\tabcolsep}{3pt}
\caption{Post-hoc loss-exposure summaries on the original validation grid. Panel A reports means of per-seed signed AUC, positive-part AUC, and maximum observed NLL increase (nats/token, with exposure normalized for AUC). Panel B reports all four newly specified schedule contrasts on positive-part AUC (U: uniform), with unadjusted paired 95\% $t$ intervals and Holm correction jointly across both models. Intervals describe seed variation on fixed validation records. Peak degradation is descriptive.}
\label{tab:positive-exposure}
\begin{tabular}{llrrr}
\toprule
\multicolumn{5}{l}{\textit{A. Schedule summaries (eight seeds per cell)}}\\
Model & Schedule & Signed AUC & $\mathrm{AUC}_{+}$ & $D_{\max}^{\mathrm{obs}}$\\
\midrule
Mistral & Uniform & -0.549378 & 0.000000 & 0.000000 \\
Mistral & Front & -0.580445 & 0.000000 & 0.000000 \\
Mistral & Back & 0.045054 & 0.291597 & 1.005464 \\
Qwen & Uniform & -0.488842 & 0.000000 & 0.000000 \\
Qwen & Front & -0.537497 & 0.000000 & 0.000000 \\
Qwen & Back & -0.238135 & 0.010666 & 0.034878 \\
\bottomrule
\end{tabular}
\par\smallskip
\begin{tabular}{llrrrr}
\toprule
\multicolumn{6}{l}{\textit{B. Complete post-hoc positive-part family}}\\
Model & Contrast & $\Delta\mathrm{AUC}_{+}$ & 95\% CI & Exact $p$ & Holm $p$\\
\midrule
Mistral & Front $-$ U & 0.000000 & [0.000000, 0.000000] & 1.0000000 & 1.00000 \\
Mistral & Back $-$ U & 0.291597 & [0.035361, 0.547832] & 0.0078125 & 0.03125 \\
Qwen & Front $-$ U & 0.000000 & [0.000000, 0.000000] & 1.0000000 & 1.00000 \\
Qwen & Back $-$ U & 0.010666 & [0.005418, 0.015914] & 0.0078125 & 0.03125 \\
\bottomrule
\end{tabular}
\end{table}

\paragraph{Supporting New-generation non-inferiority.}
Table~\ref{tab:new-ni-key} reports the paired New-generation differences for the key replay and geometry comparisons. The criterion compares the full-precision lower endpoint of the two-sided 95\% paired $t$ interval with $-.03$, separately from per-run \texttt{FEASIBLE}. These are supporting, unadjusted comparisons on the fixed audit questions. The frozen analysis also contains the complete 60 New-NI records: 30 generation and 30 teacher-forced token comparisons.

\begin{table}[htbp]
\centering\small
\caption{Supporting New-generation non-inferiority for key method--plain comparisons, using all eight paired seeds. Differences and unadjusted two-sided 95\% $t$ intervals are in percentage points. NI is established when the full-precision lower bound exceeds $-3$ points; it is separate from per-run \texttt{FEASIBLE}. Not established does not mean demonstrated harm.}
\label{tab:new-ni-key}
\begin{tabular}{llrrl}
\toprule
Model & Method $-$ Plain & $\Delta$New (pp) & 95\% CI (pp) & NI\\
\midrule
Mistral & Replay10 & +1.15 & [-2.88, +5.19] & Established \\
Mistral & $r1,n48$ & -5.90 & [-20.83, +9.02] & Not established \\
Mistral & I-GEM & -0.04 & [-2.88, +2.79] & Established \\
Qwen & Replay10 & +0.29 & [-0.83, +1.42] & Established \\
Qwen & $r1,n48$ & -0.04 & [-0.79, +0.71] & Established \\
Qwen & I-GEM & -0.23 & [-0.70, +0.24] & Established \\
\bottomrule
\end{tabular}
\end{table}

\begingroup
\small
\setlength{\tabcolsep}{4pt}
\renewcommand{\arraystretch}{1.08}
\begin{longtable}{llrrrrr}
\caption{All formal method groups in the completed causal supplement (256/256 runs). Old and New are mean final generation exact (percent); AUC is baseline-centered, exposure-normalized validation NLL (nats/token). F is the number satisfying the complete audit, out of the available $n$. Every scientific outcome is retained; each group contains the same eight registered seeds.}%
\label{tab:causal-all-groups}\\
\toprule
Model & Method & $n$ & Old (\%) & New (\%) & AUC & F/$n$ \\
\midrule
\endfirsthead
\multicolumn{7}{c}{Table~\thetable\ continued}\\
\toprule
Model & Method & $n$ & Old (\%) & New (\%) & AUC & F/$n$ \\
\midrule
\endhead
\midrule
\multicolumn{7}{r}{Continued on next page}\\
\endfoot
\bottomrule
\endlastfoot
Mistral & Plain SFT & 8 & 3.26 & 92.50 & 0.7402 & 0/8 \\
Mistral & Supervised 10\% & 8 & 98.37 & 93.66 & -0.5494 & 8/8 \\
Mistral & Adam projection $r=8,n=12$ & 8 & 3.80 & 92.53 & 0.4350 & 0/8 \\
Mistral & Adam projection $r=1,n=12$ & 8 & 3.80 & 93.43 & 0.2879 & 0/8 \\
Mistral & Adam projection $r=8,n=48$ & 8 & 4.35 & 92.36 & 0.4070 & 0/8 \\
Mistral & Adam projection $r=1,n=48$ & 8 & 4.35 & 86.60 & 0.2876 & 0/8 \\
Mistral & I-GEM adaptation & 8 & 96.20 & 92.46 & -0.3416 & 7/8 \\
Mistral & PEGP adaptation & 8 & 78.80 & 94.16 & 0.0085 & 1/8 \\
Mistral & ARR adaptation & 8 & 72.83 & 90.62 & 0.0012 & 0/8 \\
Mistral & Supervised 2\% & 8 & 84.24 & 93.51 & -0.2989 & 4/8 \\
Mistral & Supervised 5\% & 8 & 97.28 & 93.30 & -0.4725 & 8/8 \\
Mistral & Supervised 20\% & 8 & 98.91 & 91.44 & -0.6066 & 8/8 \\
Mistral & 10\% front-half & 8 & 96.74 & 90.41 & -0.5804 & 5/8 \\
Mistral & 10\% back-half & 8 & 97.83 & 91.02 & 0.0451 & 8/8 \\
Mistral & Hybrid 2\% & 8 & 79.89 & 93.03 & -0.2162 & 3/8 \\
Mistral & Hybrid 10\% & 8 & 95.11 & 90.66 & -0.4621 & 7/8 \\
\midrule
Qwen & Plain SFT & 8 & 99.26 & 94.74 & 0.0234 & 3/8 \\
Qwen & Supervised 10\% & 8 & 100.00 & 95.04 & -0.4888 & 8/8 \\
Qwen & Adam projection $r=8,n=12$ & 8 & 100.00 & 95.10 & -0.0036 & 4/8 \\
Qwen & Adam projection $r=1,n=12$ & 8 & 100.00 & 94.18 & -0.0075 & 7/8 \\
Qwen & Adam projection $r=8,n=48$ & 8 & 100.00 & 94.07 & -0.0167 & 5/8 \\
Qwen & Adam projection $r=1,n=48$ & 8 & 100.00 & 94.70 & -0.0291 & 6/8 \\
Qwen & I-GEM adaptation & 8 & 100.00 & 94.51 & -0.3400 & 8/8 \\
Qwen & PEGP adaptation & 8 & 97.79 & 94.37 & -0.0007 & 8/8 \\
Qwen & ARR adaptation & 8 & 99.26 & 94.77 & -0.0028 & 8/8 \\
Qwen & Supervised 2\% & 8 & 100.00 & 94.64 & -0.2513 & 8/8 \\
Qwen & Supervised 5\% & 8 & 100.00 & 94.93 & -0.4120 & 8/8 \\
Qwen & Supervised 20\% & 8 & 100.00 & 94.85 & -0.5597 & 8/8 \\
Qwen & 10\% front-half & 8 & 100.00 & 95.27 & -0.5375 & 8/8 \\
Qwen & 10\% back-half & 8 & 100.00 & 94.41 & -0.2381 & 8/8 \\
Qwen & Hybrid 2\% & 8 & 97.79 & 94.93 & -0.1539 & 7/8 \\
Qwen & Hybrid 10\% & 8 & 100.00 & 94.62 & -0.3760 & 8/8 \\
\end{longtable}
\endgroup

\begingroup
\small
\setlength{\tabcolsep}{4pt}
\renewcommand{\arraystretch}{1.10}
\begin{longtable}{llrrrr}
\caption{Every registered causal-supplement contrast, with paired mean, unadjusted 95\% $t$ CI and exact sign-flip Holm $p$. Negative AUC differences favor the first condition; positive generation differences favor it. Each contrast uses all eight registered paired seeds; Holm correction is applied within the frozen model/family.}%
\label{tab:causal-registered}\\
\toprule
Model & Contrast & $n$ & Mean & 95\% CI & Holm $p$ \\
\midrule
\endfirsthead
\multicolumn{6}{c}{Table~\thetable\ continued}\\
\toprule
Model & Contrast & $n$ & Mean & 95\% CI & Holm $p$ \\
\midrule
\endhead
\midrule
\multicolumn{6}{r}{Continued on next page}\\
\endfoot
\bottomrule
\endlastfoot
\multicolumn{6}{l}{\textit{Refresh and coverage: validation-NLL AUC}}\\
Mistral & Refresh: $r1-r8$ at $n12$ & 8 & -0.1471 & [-0.3464, 0.0522] & 0.4219 \\
Mistral & Coverage: $n48-n12$ at $r8$ & 8 & -0.0280 & [-0.2713, 0.2153] & 1.0000 \\
Mistral & Refresh $\times$ coverage & 8 & 0.0277 & [-0.1477, 0.2030] & 1.0000 \\
Qwen & Refresh: $r1-r8$ at $n12$ & 8 & -0.0039 & [-0.0102, 0.0023] & 0.1953 \\
Qwen & Coverage: $n48-n12$ at $r8$ & 8 & -0.0132 & [-0.0198, -0.0065] & 0.0234 \\
Qwen & Refresh $\times$ coverage & 8 & -0.0084 & [-0.0168, -0.0001] & 0.1094 \\
\addlinespace[4pt]
\multicolumn{6}{l}{\textit{External baselines: Old generation difference (pp)}}\\
Mistral & I-GEM minus replay10 & 8 & -2.17 & [-4.92, 0.57] & 0.2188 \\
Mistral & PEGP minus replay10 & 8 & -19.57 & [-32.31, -6.82] & 0.0625 \\
Mistral & ARR minus replay10 & 8 & -25.54 & [-34.33, -16.76] & 0.0234 \\
Qwen & I-GEM minus replay10 & 8 & 0.00 & [0.00, 0.00] & 1.0000 \\
Qwen & PEGP minus replay10 & 8 & -2.21 & [-4.75, 0.34] & 0.7500 \\
Qwen & ARR minus replay10 & 8 & -0.74 & [-2.47, 1.00] & 1.0000 \\
\addlinespace[4pt]
\multicolumn{6}{l}{\textit{Replay timing: validation-NLL AUC}}\\
Mistral & Front minus uniform & 8 & -0.0311 & [-0.0503, -0.0118] & 0.0156 \\
Mistral & Back minus uniform & 8 & 0.5944 & [0.3287, 0.8602] & 0.0156 \\
Qwen & Front minus uniform & 8 & -0.0487 & [-0.0617, -0.0356] & 0.0156 \\
Qwen & Back minus uniform & 8 & 0.2507 & [0.2355, 0.2659] & 0.0156 \\
\addlinespace[4pt]
\multicolumn{6}{l}{\textit{Replay target: validation-NLL AUC}}\\
Mistral & Hybrid minus supervised, 2\% & 8 & 0.0828 & [0.0477, 0.1178] & 0.0234 \\
Mistral & Hybrid minus supervised, 10\% & 8 & 0.0873 & [0.0650, 0.1095] & 0.0234 \\
Mistral & Target $\times$ ratio & 8 & 0.0045 & [-0.0463, 0.0553] & 0.8359 \\
Qwen & Hybrid minus supervised, 2\% & 8 & 0.0974 & [0.0738, 0.1209] & 0.0234 \\
Qwen & Hybrid minus supervised, 10\% & 8 & 0.1128 & [0.0987, 0.1270] & 0.0234 \\
Qwen & Target $\times$ ratio & 8 & 0.0155 & [-0.0094, 0.0404] & 0.1875 \\
\end{longtable}
\endgroup

\subsection{Original refresh--coverage endpoints and paths}

Table~\ref{tab:factorial} and Figure~\ref{fig:factorialpaths} retain the original refresh--coverage summaries. They concern the 256-run supplement, not the separate structure--location study in Appendix~\ref{app:structure-location}.

\begin{table}[htbp]
\centering\small
\caption{Original refresh--coverage factorial endpoints, eight seeds per cell. Old and New are reference-length greedy exact percentages; F counts the full Old/New contract. AUC is signed, baseline-centered validation NLL. All projection updates satisfy their declared actual-displacement tolerance.}
\label{tab:factorial}
\begin{tabular}{lrrrcrrrc}
\toprule
& \multicolumn{4}{c}{Mistral} & \multicolumn{4}{c}{Qwen}\\
Method & Old & New & AUC & F/8 & Old & New & AUC & F/8\\
\midrule
Plain & 3.26 & 92.50 & .7402 & 0 & 99.26 & 94.74 & .0234 & 3\\
Replay 10\% & 98.37 & 93.66 & $-.5494$ & 8 & 100.00 & 95.04 & $-.4888$ & 8\\
$r8,n12$ & 3.80 & 92.53 & .4350 & 0 & 100.00 & 95.10 & $-.0036$ & 4\\
$r1,n12$ & 3.80 & 93.43 & .2879 & 0 & 100.00 & 94.18 & $-.0075$ & 7\\
$r8,n48$ & 4.35 & 92.36 & .4070 & 0 & 100.00 & 94.07 & $-.0167$ & 5\\
$r1,n48$ & 4.35 & 86.60 & .2876 & 0 & 100.00 & 94.70 & $-.0291$ & 6\\
\bottomrule
\end{tabular}
\end{table}

The diagnostic measurements identify what the halfspace leaves uncontrolled. For Mistral $r1,n12$, the eight-seed mean path sums are $-.5213$ for the current/cached term, approximately zero staleness, $+.4634$ probe mismatch, and $+.3282$ finite-step residual, yielding $+.2703$ summed diagnostic-loss change. At $r1,n48$, summed diagnostic change is near zero ($-.0231$, CI $[-.1601,.1139]$), yet final qualified generation is 4.35\% and the broader hidden-audit NLL increases by .3234. The small-set average objective and the sequence-level contract thus identify different properties of the served model.

\begin{figure}[htbp]
\centering
\includegraphics[width=\linewidth]{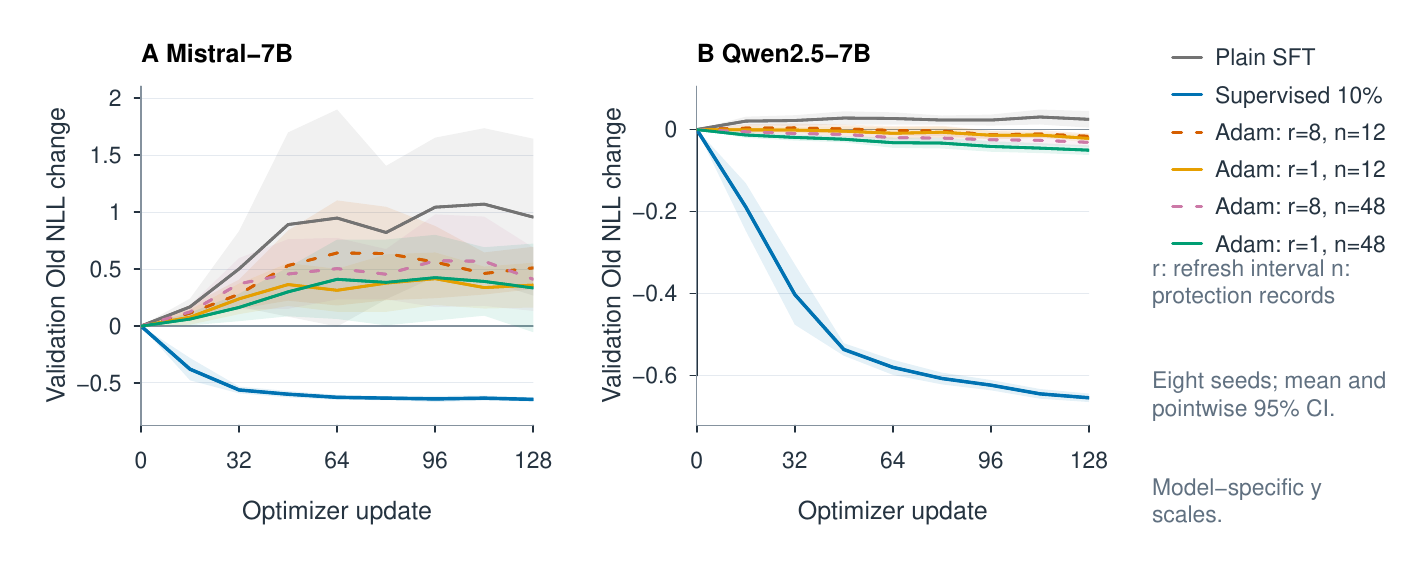}
\caption{Original refresh--coverage trajectories. Curves are eight-seed mean validation-NLL changes at the recorded steps, with pointwise 95\% $t$ intervals. Validation and the 12-record diagnostic loss have different averaging and coverage. Model-specific axes expose changes beneath Qwen's generation ceiling; they are not an architecture comparison.}
\label{fig:factorialpaths}
\end{figure}

Replay illustrates why the net path matters. Its Mistral cached-term sum is $-6.7412$, accompanied by positive staleness $3.7386$, mismatch $.2808$, and residual $1.1578$, for a favorable net sum $-1.5641$. These positive terms exceed their plain-SFT counterparts in several cases. Replay need not shrink every diagnostic component to protect the function. This observation matches the restoring-direction account in Proposition~\ref{prop:replay}, without identifying a causal mediation fraction from different visited paths.

At the same Mistral state and candidate update, refreshing the cached gradient reduces the immediate change in $E$ by $.01391$ at $n12$ (CI $[-.02055,-.00728]$) and $.01272$ at $n48$ ($[-.01869,-.00675]$). Both effects have eight negative seed-level differences. Qwen's intervals include zero; the $r1$ branches agree exactly (Figure~\ref{fig:localrefresh}). On Qwen, increasing $n$ from 12 to 48 at $r8$ improves validation-NLL AUC by $.01315$ (CI $[-.01984,-.00647]$, Holm $p=.02344$). Mistral's corresponding estimate is $-.02801$ ($[-.27133,.21531]$, $p=1$). Faster refresh at $n12$ has estimates $-.14711$ on Mistral and $-.00392$ on Qwen, but neither passes its correction; neither model's interaction passes.
\FloatBarrier

\subsection{Baseline specification and calibration}
\label{app:causal-methods}

The comparisons adapt each method to the same rank-16 QLoRA factors, 2,048 New exposures, 128 updates, frozen banks, and New task. They do not reproduce the source papers' original benchmark settings. Calibration seeds 50001/50002 have no hidden-audit outputs. Candidates must acquire New on both development seeds and have no numerical failures; ranking then maximizes feasible development outcomes, minimizes mean old risk, maximizes New-development token accuracy, and breaks ties by candidate index. The selected parameters and the 36 development-result hashes are frozen before confirmation.

\paragraph{I-GEM.}
We implement the nonnegative dual projected-gradient solve in Equation 3 and Algorithms 1--2 of \citet{tekmen2026igem}, with current gradients for each old stratum in the 48-record pool. Rows are normalized, the small Gram matrix uses an exact eigenspectrum in place of power iteration, and the step size is $.9/\lambda_{\max}$. The dual vector is warm-started; a grid of 4/16/64 iterations selects 16 for both models. The resulting gradient is passed to AdamW, without a post-Adam correction. Thus this condition differs from the single aggregate actual-update halfspace in constraint multiplicity and optimizer interaction. It is a finite-iteration approximation, not an exact quadratic-program feasibility claim; the observed minimum Mistral gradient residual is approximately $-5.76\times10^{-7}$. The source's GPT-2/LoRA setting and task memories are replaced by this experiment's QLoRA model and stratum-defined constraints.

\paragraph{PEGP.}
The adaptation follows the two-factor construction in Section III-C, Equations 22--24 of the arXiv v3 text of \citet{qiao2024pegp}: LoRA $A$ gradients are projected against input $x$, and $B$ gradients against $Ax$. The initial 48-record protection pool supplies four deterministic valid token positions per record. Compact right-singular bases are frozen at the old-task boundary; every trainable factor must have a basis. The energy grid $.90/.95/.99$ selects $.99$ for Mistral and $.90$ for Qwen. The original vision evaluation is replaced by all-linear-module LLM QLoRA. An earlier equation-number comment in the frozen implementation refers to \emph{Eq.\ 35}; the equations cited here identify the actual v3 construction without modifying the experimental code lock.

\paragraph{Approximate regularized replay (ARR).}
Following \citet{riemer2025arr}, ARR adds reverse KL, $\mathrm{KL}(p_\theta\Vert p_{\rm base})$, on New-input next-token positions and next-token training on generic OpenWebText. The teacher uses the same quantized base with adapters disabled; it is not an independently trained full-precision teacher. The generic stream has 228 exposures for 2,048 New exposures, approximately $.111$ generic records per New record. This budget differs from the source's $1{\times}/3{\times}/7{\times}$ ratios. Our New-KL grid $.01/.1/1$ selects 1 for both models, whereas the source grid includes $.001/.01/.1$. These differences are part of the shared-budget adaptation; its outcome does not rank ARR across its original operating range.

\paragraph{Hybrid targets.}
The Old loss is $.5\,{\rm CE}+.5\,T^2{\rm KL}(p_{\rm base}\Vert p_\theta)$ on answer positions with $T=2$. Half of the original supervised objective mass is replaced, not retained with an added KL penalty. The two-model AUC contrast therefore identifies the effect of this target allocation. It cannot establish that an incremental KL term would harm an otherwise identical supervised objective. Equal nominal coefficient sums do not imply equal gradient strength; the teacher-KL gradient vanishes at exact student--teacher agreement, whereas the hard-label CE gradient generally does not.

\subsection{Numerical execution and empirical decomposition}

Across the refresh--coverage supplement's 64 formal projection runs, all 8,192 actual displacements satisfy
$h^\top u\leq\max(10^{-7},10^{-3}|h^\top v|)$.
There are 2,815 positive candidate dots on Mistral and 1,924 on Qwen; 16 Mistral steps require an FP32 rounding correction and none do on Qwen. All 640 same-state counterfactual branches also pass. Tolerance feasibility refers to computed gradients and stored displacements, not the ideal real-arithmetic inequality $h^\top u\leq0$. No supplementary run has a recorded numerical failure. The full-matrix completion receipt passes with 256 expected and observed outcomes and no integrity errors.

The 96 mechanism runs provide 12,288 before/after diagnostic records. The maximum algebraic decomposition closure error is $5.56\times10^{-17}$. Table~\ref{tab:causal-paths} reports sums of measured per-update increments and their terms. The finite-step residual includes numerical evaluation and gradient error as in Equation~\ref{eq:measured-remainder}; it is not a measured Hessian norm or a fraction of forgetting causally attributable to curvature.

\begin{table}[htbp]
\centering\small
\caption{Mean diagnostic path sums over eight seeds. $\Delta E$ is the sum of measured within-update loss increments; the other columns add to it up to rounding. $H$: cached dot; $S$: staleness; $A$: diagnostic/probe mismatch; $R$: finite-step residual. These are record-mean diagnostic NLL quantities, not validation AUC or hidden-generation effects.}
\label{tab:causal-paths}
\begin{tabular}{llrrrrr}
\toprule
Model & Method & $\Delta E$ & $H$ & $S$ & $A$ & $R$\\
\midrule
Mistral & Plain & 1.5597 & .4220 & .8423 & $-.0367$ & .3320\\
& Replay10 & $-1.5641$ & $-6.7412$ & 3.7386 & .2808 & 1.1578\\
& $r8,n12$ & .6394 & $-2.1041$ & 1.9386 & .3312 & .4737\\
& $r1,n12$ & .2703 & $-.5213$ & .0000 & .4634 & .3282\\
& $r8,n48$ & .4511 & $-.9352$ & .8747 & .2311 & .2804\\
& $r1,n48$ & $-.0231$ & $-.6029$ & .0000 & .2378 & .3420\\
\midrule
Qwen & Plain & $-.0268$ & $-.0499$ & .0450 & $-.0260$ & .0041\\
& Replay10 & $-.6063$ & $-.6018$ & .1608 & $-.2194$ & .0541\\
& $r8,n12$ & $-.0921$ & $-.1227$ & .0378 & $-.0135$ & .0062\\
& $r1,n12$ & $-.1011$ & $-.0979$ & .0000 & $-.0091$ & .0059\\
& $r8,n48$ & $-.1054$ & $-.0752$ & .0245 & $-.0607$ & .0061\\
& $r1,n48$ & $-.1362$ & $-.0670$ & .0000 & $-.0735$ & .0042\\
\bottomrule
\end{tabular}
\end{table}
\FloatBarrier

Adjacent evaluations also reveal two continuity gaps: $-.0012959043$ after update 56 of Mistral plain seed 51001, and $-.0048195322$ after update 16 of Mistral $r8,n12$ seed 51007. The other 94 mechanism traces have no nonzero recorded gap. The corresponding group-mean endpoint-minus-increment-sum differences are $-.0001619880$ and $-.0006024415$. The cause is not identified from the downloaded logs; these gaps are retained using Equation~\ref{eq:measurement-gaps}, not silently assigned to curvature.

\paragraph{Same-state branches.}
At each registered checkpoint, cached and fresh projections start from the same parameters and candidate Adam displacement; branch evaluation restores the training state. Five effects are averaged within each seed before the eight-seed interval. For $r8$, all five branch points have age seven. Mistral's $n12/n48$ unadjusted sign-flip probabilities are both $.0078125$; Qwen's are $.71875/.4921875$. The $r1$ fresh/cached branches agree exactly. Full observed trajectories cover ages zero through seven, but those age-conditioned observations are not randomized age interventions.

\begin{figure}[htbp]
\centering
\includegraphics[width=\linewidth]{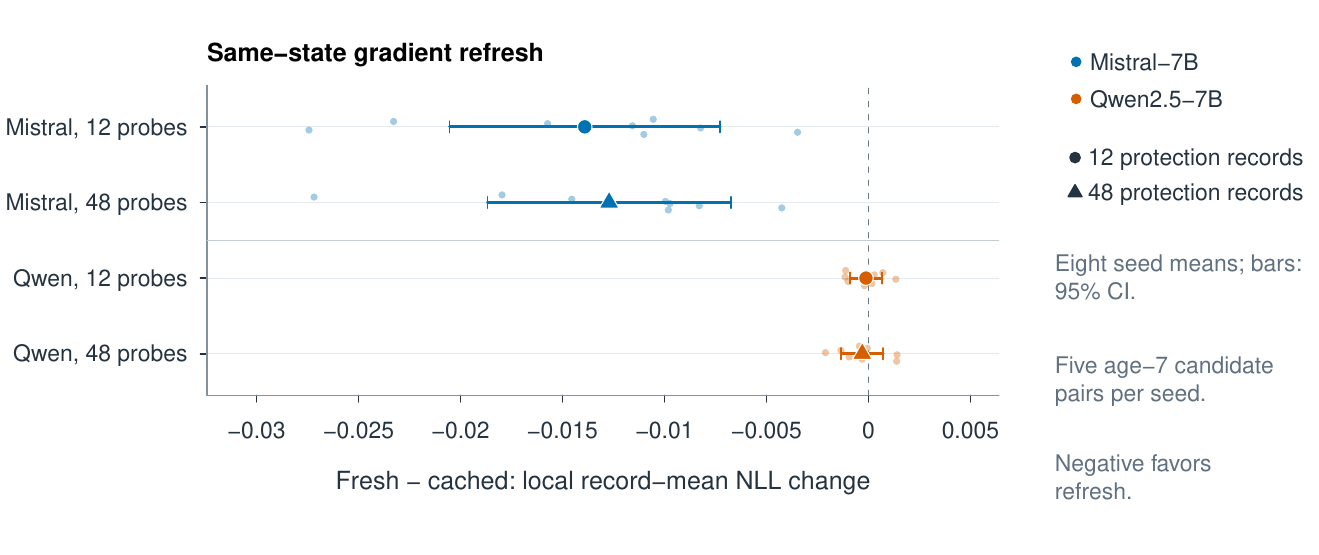}
\caption{Same-state refresh intervention in the $r8$ conditions. Each seed first averages the five registered branch points; points and bars then summarize eight seeds with 95\% paired $t$ intervals. Negative values favor fresh projection on diagnostic $E$. These age-seven, one-step effects are supporting local interventions, not full-path treatment effects.}
\label{fig:localrefresh}
\end{figure}

\paragraph{Timing, provenance, and rendering.}
Timers separately record New/Old objectives, protection initialization, projection, diagnostics, and evaluations. All 256 formal runs are ineligible for uninterrupted exclusive-GPU timing; concurrent invocation time is not used to rank method efficiency. Independent checks match all result/configuration hashes, 23 frozen scientific-source hashes, and 16 input-file hashes. The six main splits have disjoint identifiers and group identifiers within each model; generation-qualified audit records are a subset of the corresponding hidden audit. All 32 group AUC summaries, 22 registered paired contrasts, and 60 supporting New-NI summaries are recomputed from seed-level outcomes. The supplement's three figures use base R 4.6.1 in R.app, vector PDFs and 300-dpi PNGs, with right-side legend panels. Pointwise bands are not simultaneous intervals. The original result files, training code, scheduling state, and prespecified gates are unchanged by this manuscript revision.

The original 256-run archive contains baseline/final generation aggregates and intermediate teacher-forced metrics. Per-example generated sequences and intermediate generation audits were not recorded for that study. Final adapters were not retained by its protocol, and its resumable scratch state was removed after completion; the archived logs support the derived loss-path analysis reported here. The separate structure--location study records per-example generation at fixed intermediate checkpoints and retains four adapter checkpoints per run; its artifacts and verification scope are described in Appendix~\ref{app:structure-location}.

\FloatBarrier

\section{Three-Model Scale Evidence: Local Geometry and the Retained Function}
\label{app:scale-results}

\subsection{A satisfied cached constraint can accompany severe forgetting}
\label{sec:case}

The Mistral pure-Adam-projection run with seed 1101 provides a direct test of the implication from cached safety to retention. Projection executes throughout its 128 updates, and all cached post-update dots meet the original tolerance. Nevertheless, the final checkpoint generates none of its 23 acquired old answers correctly, compared with all 23 at baseline. It acquires the new task, reaching $94.30\%$ new generation exact. This is a same-interface loss of previously demonstrated behavior.

At update 32, the cached gradient's dot product with the candidate update is $0.0083557963$. Projection reduces it to $2.6702764\times10^{-9}$. The fresh gradient, computed on the \emph{identical probe records} at the current pre-update state, has dot $0.1128072154$ with the \emph{same applied update}. Figure~\ref{fig:projection} shows the execution trace and the population of paired measurements. A nearly zero cached term in Equation~\ref{eq:decomp} therefore coexists with a positive current first-order term, without a probe-distribution change in this comparison.

\begin{figure}[t]
\centering
\includegraphics[width=\linewidth]{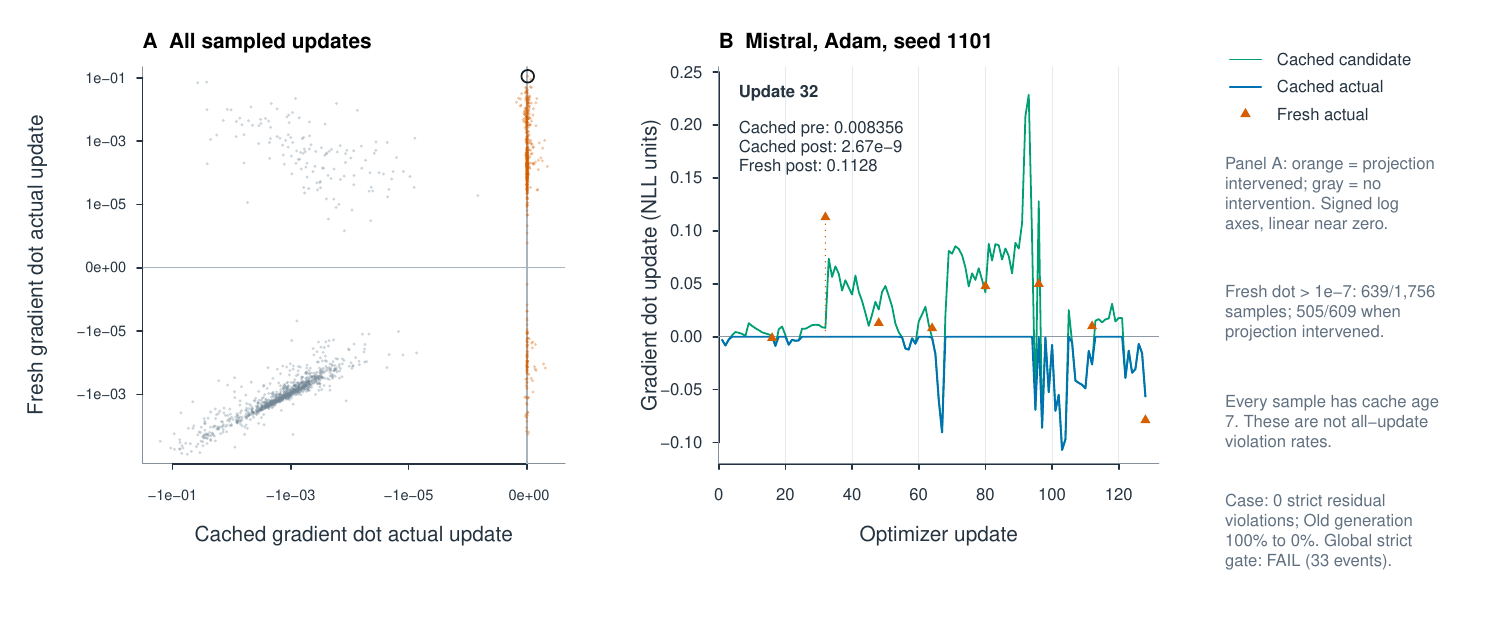}
\caption{Cached and current geometry on the same realized update. Left: all 1,756 paired diagnostics; axes use a signed-log transform and original-unit labels. All samples have cache age seven. Right: a Mistral run satisfying every cached numerical tolerance, with the update-32 discrepancy highlighted. Fresh dots are current first-order probe quantities, not observed finite-step audit-loss differences. The global 33-event numerical failure is reported separately from this strict-safe case.}
\label{fig:projection}
\end{figure}

Across 152 Adam-projection runs, 11,143 of 29,047 updates trigger projection. Among 1,756 sampled updates, 639 have fresh dots above $10^{-7}$; among the 609 sampled steps where projection intervenes, 505 still have positive fresh dots above that threshold. These counts describe diagnostics at the oldest cache age, not a uniform all-update failure rate. They show that the measured current-direction term can escape the cached constraint. They do not isolate staleness as the sole cause of final forgetting: curvature, finite coverage, and the relation between mean loss and per-item generation remain distinct terms or measurement gaps.

\paragraph{Method-level comparison.}
In the standard Mistral phase, both plain SFT and pure Adam projection lose $97.28$ generation points on average. Pure Adam projection does not establish an old-generation gain in any of the three models in either the standard or equal-MTE phase. The registered Adam-minus-raw-projection contrast is $-81.5$ points on Mistral, with 95\% CI $[-86.9,-76.1]$ and Holm $p=.0234$. Raw projection itself includes a new-acquisition failure in this phase; this is a comparison of old endpoints, not a claim that raw projection solves the joint problem. The informative conclusion is that optimizing the local cached-update criterion does not order final retention quality.

\subsection{Fixed replay protects behavior without freezing its implementation}

Figure~\ref{fig:geometry} compares fixed supervised replay with plain SFT. Old-token retention improves while the measured adapter-factor displacement increases and final last-layer CKA decreases. Thus the protected function need not use an unchanged parameter or representation configuration. These diagnostics concern trainable LoRA $A/B$ factors and a fixed probe readout, not the full effective weight or all representations. Their role is to test a monotone ``less movement means less forgetting'' account, not to claim that task-relevant representations are unimportant.

\begin{figure}[t]
\centering
\includegraphics[width=\linewidth]{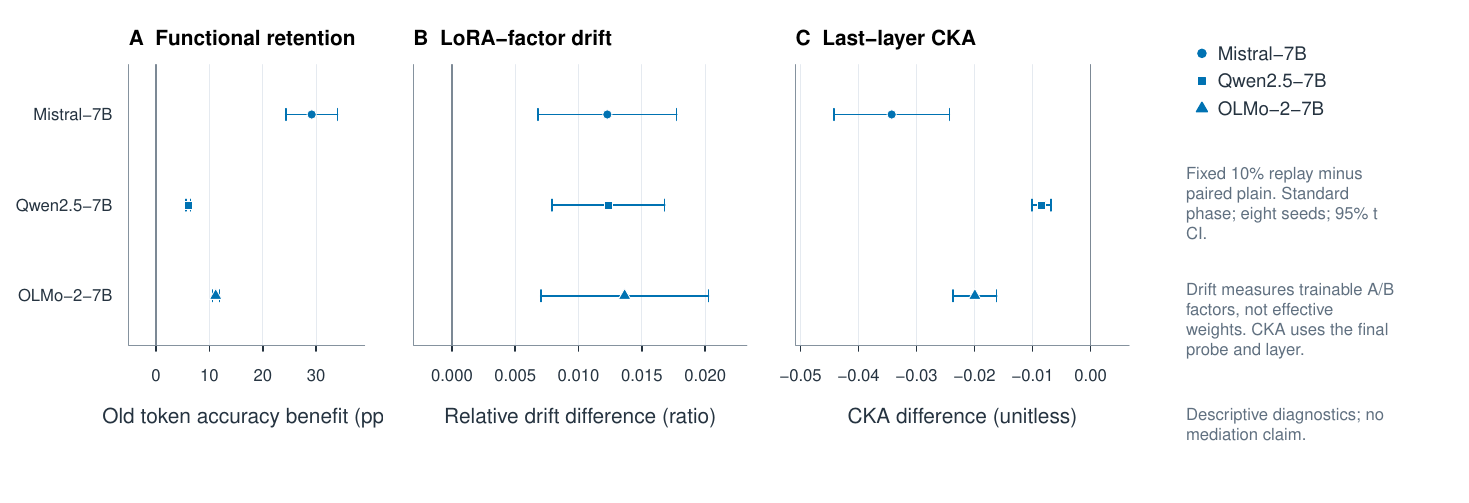}
\caption{Functional retention and measured geometric change can move in different directions. Points are fixed-replay-minus-plain effects over eight paired seeds with 95\% $t$ intervals. Old-token benefits use percentage points; factor drift and CKA differences are unitless. CKA is evaluated on the final diagnostic probe at the last layer, before the first answer token. These are descriptive comparisons, not a mediation analysis.}
\label{fig:geometry}
\end{figure}

The fixed-replay change in raw-gradient cosine has an interval crossing zero for every model, whereas its mean current-gradient/actual-update dot shifts downward in all three. The latter is a better-defined local functional measurement, but a negative method-minus-plain difference does not imply a negative absolute dot or prove a causal mediator. The distinction is exactly the one tested above: a relevant diagnostic is not automatically a sufficient control algorithm.

\subsection{Informative targets and the long-horizon mechanism}

Earlier controlled Pythia experiments intervene on replay's information and timing from common parameter/optimizer states. Moving old examples from an early block into the new-task phase, while preserving their total concrete budget, improves old exact by $0.670$ $[0.549,0.791]$ across six registered seeds. Source-normalized replay adds $0.021$ $[-0.129,0.171]$ over ordinary replay, leaving objective normalization without a stable incremental effect in that comparison.

Common-state five-step rollouts give an uncertain source-normalized protection estimate: mean $0.420$, with interval $[-0.693,1.533]$. At 100 updates, old-loss AUC protection is $6.668$ $[4.008,9.328]$ for source-normalized supervision and $7.067$ $[4.574,9.560]$ for teacher KL, compared with $0.510$ $[-1.895,2.916]$ for input-only replay. These target-information interventions support repeated functional supervision rather than old-input exposure as the useful payload in that setting. The complete trajectories and controls are in Appendix~\ref{app:historical}. A separate same-parameter Adam-state transplant gives an AUC effect $0.956$ $[-0.351,2.263]$, leaving a freely transferable optimizer-memory account unconfirmed.

Together, the theory and interventions support a specific explanation: new-only optimization leaves old-output directions unconstrained, local cached control can miss their current direction, and informative replay repeatedly supplies constraints on the old function as the model moves. They do not establish irreversible erasure of latent knowledge or a single universal source of SFT forgetting.

\section{Mitigation Across Exposure, Compute, and Horizon}
\label{sec:mitigation}

\subsection{The strongest generation effect survives compute matching}

Fixed 10\% supervision raises Mistral old generation relative to plain by $94.02$ percentage points in the matched-exposure phase (95\% CI $[90.69,97.35]$, Holm $p=.0234$). Under matched optimization MTE, the effect is again $94.02$ points ($[92.14,95.90]$, Holm $p=.0234$). Its benefit is therefore not fully explained by extra optimization MTE in this experiment. These are matched proxies rather than an exact hardware-FLOP comparison.

Qwen and OLMo have smaller plain-generation losses and positive fixed-replay point estimates, but their standard and equal-MTE generation contrasts do not pass the registered correction. Supporting token and NLL results measure additional functional changes rather than replacing the generation primary. Figure~\ref{fig:primary} makes the model-conditional scale and the phase distinctions explicit.

\begin{figure}[t]
\centering
\includegraphics[width=\linewidth]{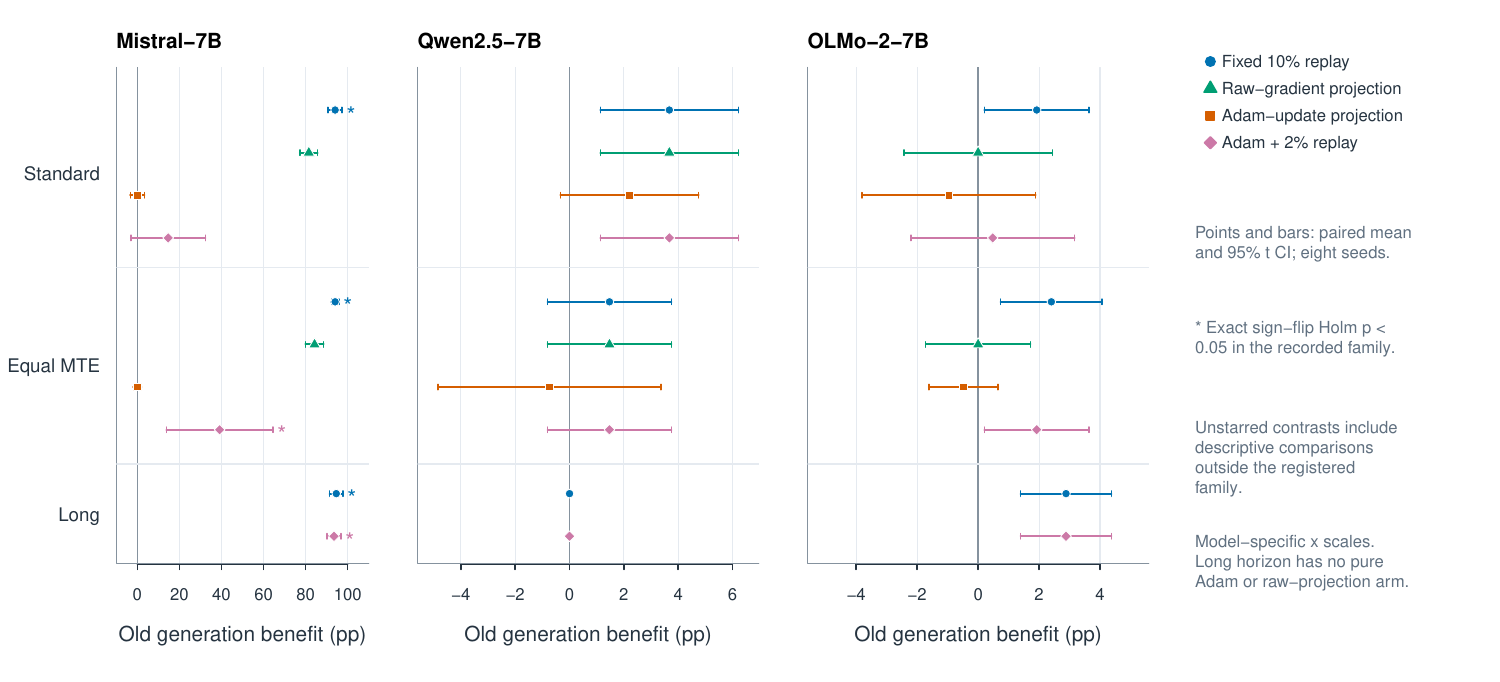}
\caption{Old-generation benefits relative to paired plain SFT. Each point is an eight-seed mean with a 95\% paired $t$ interval. Stars denote Holm $p<.05$ within the original generation family, which also includes unplotted MSSR in the standard phase. Raw projection and standard Adam-plus-replay are descriptive comparisons. Axes are model-specific; model banks differ. Long-horizon arms do not include pure Adam or raw projection.}
\label{fig:primary}
\end{figure}

\subsection{Long-horizon protection with new-task non-inferiority}

At 8,192 new exposures, fixed replay returns a feasible final checkpoint in all 24 runs. Table~\ref{tab:long} shows the absolute old generation, new generation, and full-audit outcomes together. For Mistral, fixed replay raises old generation by $94.57$ points ($[91.34,97.79]$, Holm $p=.015625$). Qwen is at a generation ceiling in all arms. OLMo's fixed-replay gain is $2.88$ points ($[1.40,4.37]$), with Holm $p=.0625$; that generation contrast is not confirmed after correction.

\begin{table}[t]
\centering\small
\caption{Long-horizon final endpoints (means of eight seeds). Old and New columns are free-generation exact percentages. F is the count passing the entire declared retention/new-acquisition audit, not just generation. Every old-generation baseline is 100\%.}
\label{tab:long}
\begin{tabular}{llrrc}
\toprule
Model & Method & Old (\%) & New (\%) & F / 8\\
\midrule
Mistral & Plain & 2.72 & 94.64 & 0\\
 & Fixed 10\% replay & 97.28 & 94.49 & 8\\
 & Adam + 2\% replay & 96.20 & 93.68 & 2\\
\midrule
Qwen & Plain & 100.00 & 95.10 & 2\\
 & Fixed 10\% replay & 100.00 & 95.27 & 8\\
 & Adam + 2\% replay & 100.00 & 94.39 & 8\\
\midrule
OLMo & Plain & 97.12 & 94.68 & 7\\
 & Fixed 10\% replay & 100.00 & 94.30 & 8\\
 & Adam + 2\% replay & 100.00 & 94.74 & 7\\
\bottomrule
\end{tabular}
\end{table}

Fixed replay passes the registered new-generation and new-token NI criteria in all three long-horizon model comparisons. The generation differences in percentage points are Mistral $-0.147$ $[-0.834,0.541]$, Qwen $+0.168$ $[-0.481,0.816]$, and OLMo $-0.377$ $[-1.877,1.124]$. Figure~\ref{fig:ni} places these intervals against the $-3$-point margin. This new long-horizon finding is phase-specific: Mistral in the standard phase and Mistral/Qwen in the equal-MTE phase do not establish NI. A non-inferiority test that does not pass is not proof of harm.

\begin{figure}[t]
\centering
\includegraphics[width=\linewidth]{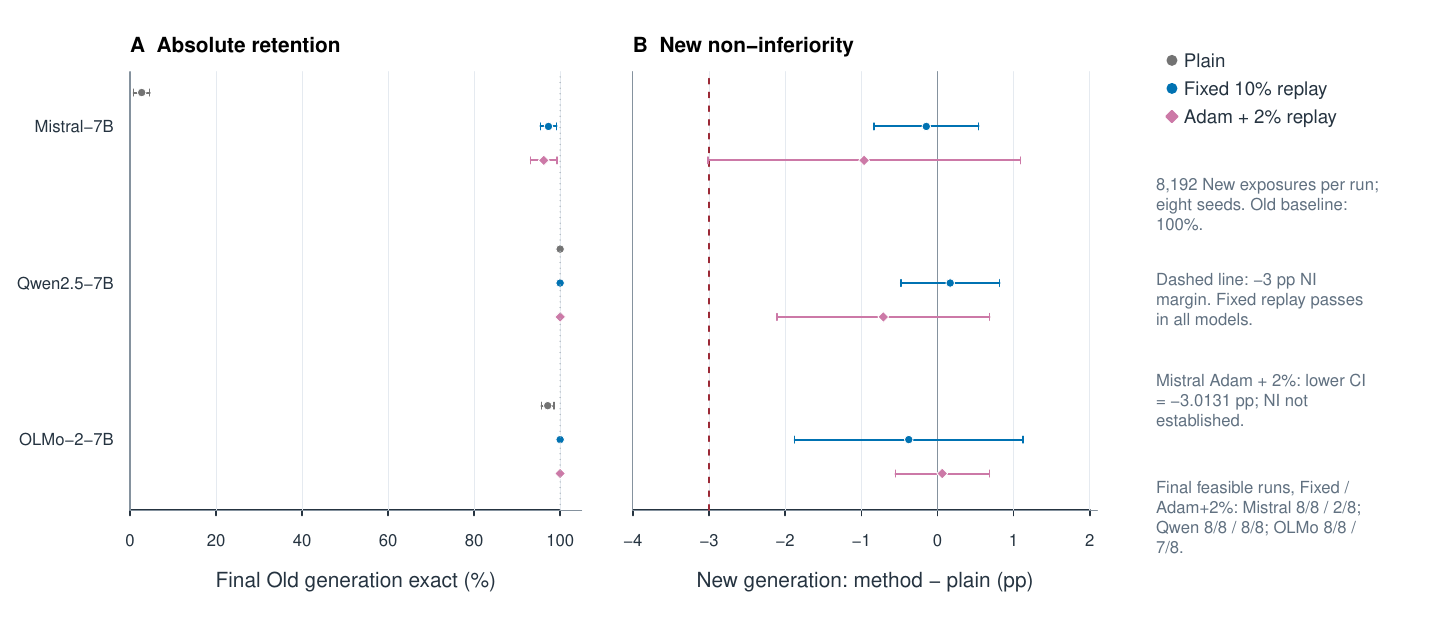}
\caption{Retention and plasticity after 8,192 new exposures. Left: mean final old generation, with marginal $t$ intervals clipped to the attainable 0--100\% range for display. Right: untruncated paired new-generation differences and the $-3$-point NI boundary. Fixed replay passes NI in all models; Mistral Adam plus 2\% has lower bound $-3.0130649$ points and does not. This combination has no standalone 2\% replay ablation.}
\label{fig:ni}
\end{figure}

The combined Adam-plus-2\% arm illustrates why the full contract matters. On Mistral it reaches $96.20\%$ old generation, but six of eight runs exceed the instruction-format margin budget, and its new-generation NI lower bound is $-3.0130649$ points. Qwen plain reaches 100\% old generation while six runs fail a formatting-margin budget; that domain is absent from Qwen's qualified generation subset. A small generation audit can thus remain correct while other declared old behavior deteriorates.

\subsection{Trajectory and bank evidence}

Fixed replay improves long-horizon old token accuracy over plain by $30.12$, $8.14$, and $15.23$ points for Mistral, Qwen, and OLMo, respectively, and lowers time-normalized controller Old-NLL AUC in all three. The trajectory curves are shown in Appendix Figure~\ref{fig:longcurves}. Final feasibility is not a claim of monotonic protection: Mistral fixed replay still has transient controller violations. Final audit, cumulative loss exposure, and per-update geometry measure different properties of the same path.

Across all four 7B phases, fixed replay has 104/104 feasible endpoints. In Qwen's balanced, factual-only, full, and without-factual bank settings, its old-token benefits are $5.7$, $8.0$, $6.3$, and $7.1$ points, each with within-bank two-method Holm $p=.015625$; all four satisfy new-task NI. These are supporting token contrasts. Factual-only has no qualified generation endpoint, and the bank study changes evaluation coverage along with bank composition, so it does not identify training-bank composition effects on a common broad hidden test.

The earlier 384-run matrix adds adaptation coverage: fixed replay improves the registered old-token endpoint in all nine Pythia/Qwen/TinyLlama by Full/LoRA/QLoRA cells, with eight positive paired seeds and within-model Holm $p=.0234$. That matrix's new-generation NI was not established. Its results support the portability of functional replay across parameterizations, not a retroactive three-model 7B generation claim (Appendix Figure~\ref{fig:legacy}).

\section{Model-Qualified Banks and Measurement}
\label{app:banks}

\begin{table}[htbp]
\centering\small
\caption{Old-bank coverage in the original 7B scale and refresh--coverage studies. Counts are records, not supervised tokens. Teacher-forced eligibility is assessed at the zero-step QLoRA start; generation includes acquired exact records. Factual (F), instruction format (I), multilingual lexical (M), and arithmetic (A) counts refer to the generation subset.}
\begin{tabular}{lrrrl}
\toprule
Model & Train & Controller & Audit & Generation audit composition\\
\midrule
Mistral & 487 & 98 & 96 & 23: 2 F, 17 I, 4 M\\
Qwen & 407 & 71 & 74 & 17: 1 F, 16 A\\
OLMo & 355 & 57 & 66 & 26: 3 F, 23 A\\
\bottomrule
\end{tabular}
\end{table}

\begin{figure}[htbp]
\centering
\includegraphics[width=\linewidth]{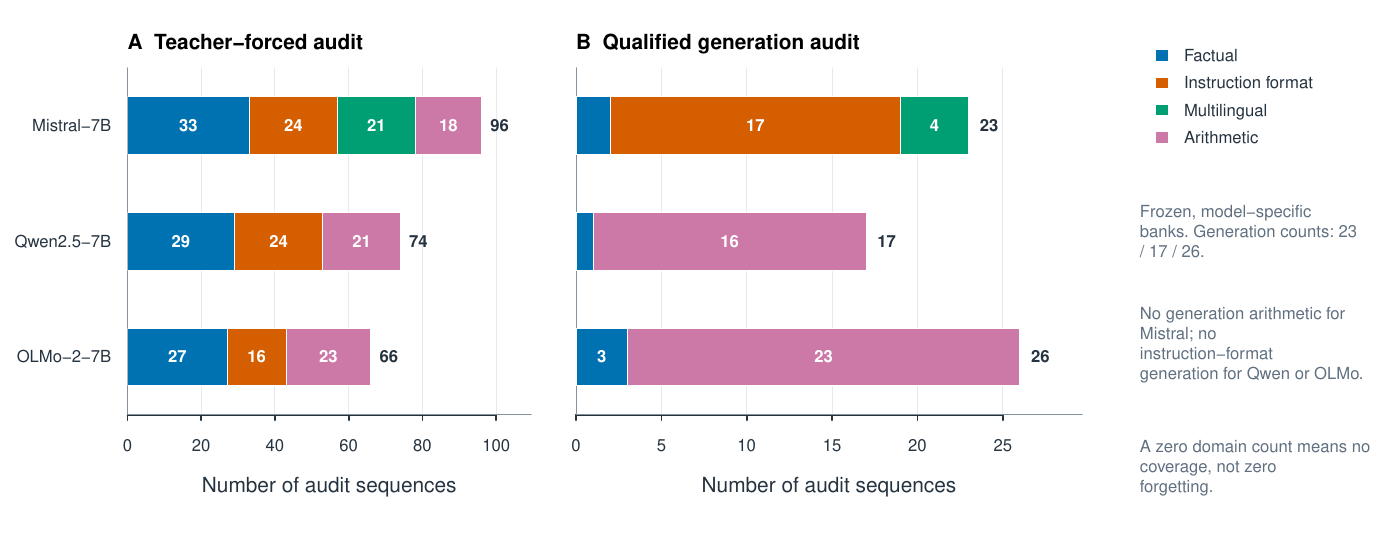}
\caption{Actual audit coverage by domain, redrawn from the frozen bank lock. Teacher-forced and generation evaluations have different coverage. A zero-count domain is unmeasured, not retained with zero loss. Different model banks preclude attributing effect-size differences solely to architecture.}
\label{fig:coverage}
\end{figure}

\paragraph{Qualification.}
The original 7B bank is qualified under the actual QLoRA instantiation, not a Full/LoRA/QLoRA intersection. Its per-record teacher-forced threshold is $0.6$. Generation eligibility is evaluated before SFT and requires at least 12 records overall; the subsequent acquisition gate is $0.99$ exact on the selected subset. In contrast, the earlier 384-run protocol uses an intersection across the three adaptation starts with a $0.5$ token threshold and an eight-record generation minimum. Those historical choices are not silently reused for the 7B matrix. The two additional strict-EOS banks in the structure--location study use the separate baseline qualification in Appendix~\ref{app:structure-location}.

\paragraph{Split roles.}
Six frozen splits distinguish new train/development/audit and old replay/controller/audit. The source manifests record identity/group separation and hashes. Projection and masks use a subset of controller records for gradients, so controller results are monitoring signals. Hidden audit is outside optimization and checkpoint choice. The qualified-generation subset is explicitly recorded and must not be replaced by a teacher-forced bank in interpreting a generation result. For the original 576-run package, the independent local audit verifies recorded provenance and split manifests rather than re-certifying the contents of 49 referenced upstream data paths. The complete causal supplement separately includes the 16 frozen model-specific input JSONL files, whose hashes and identity/group separation were checked (Appendix~\ref{app:causal}). These are distinct artifact checks.

\paragraph{Endpoint definitions.}
Teacher-forced token accuracy scores the correct next token on reference prefixes; sequence exact requires all supervised positions to be correct under teacher forcing. In the original studies and the Legacy endpoint of the structure--location study, reference-length greedy exact is token-ID exact match under greedy decoding. Each prompt is decoded for at most the reference answer's token length; trailing pad/EOS tokens are removed from both sequences before comparison. No case folding, whitespace normalization, or semantic matching is used. This reference-length interface does not evaluate additional continuation under a longer decoding budget. The structure--location study additionally measures fixed-cap strict-EOS generation as specified in Appendix~\ref{app:structure-location}. The answer margin is the correct-token logit minus the strongest competitor, aggregated according to the recorded metric. Controller Old-NLL AUC integrates baseline-centered NLL over normalized consumed-new-example progress. It is not a hidden-generation AUC or a proof that every update preserves the old function.

\Needspace{9\baselineskip}
\paragraph{Frozen task templates and reference examples.}
In the original studies, New records use SST-2 reviews with a frozen sentiment-to-code mapping. Their complete prompt template is:
\begin{quote}
\small\ttfamily\raggedright
Task: sentiment-code-v1.\par
Classify the sentiment of the movie review using the learned internal code. Return exactly one code token.\par
Review: \{review\}\par
Code:
\end{quote}
Here \texttt{\{review\}} denotes the stored review text, and leading spaces in the stored completions are part of the frozen input definition. In the original studies, these prompt/completion records are tokenized separately without added special tokens; prompt positions and an artificially appended EOS boundary are excluded from answer supervision. The separate structure--location protocol supervises EOS on New-task training examples in all three adaptation conditions, leaving Old replay/probe supervision and answer-only evaluation masks unchanged (Appendix~\ref{app:structure-location}).

The following are actual frozen generation-audit prompts and reference completions, not model predictions. Each shown Old prompt has one trailing space in the stored record.
\begin{description}
\raggedright
\item[Mistral instruction format.] Prompt: \texttt{Copy the value between brackets and answer exactly as TAG=<value>. Value=[7031928e]:}\newline
Reference: \texttt{TAG=7031928e}.
\item[Mistral multilingual.] Prompt: \texttt{In Spanish, how do you write 'water green'? Give only the translation:}\newline
Reference: \texttt{agua verde}.
\item[Qwen arithmetic.] Prompt: \texttt{Calculate 12 + 15. Answer with only the integer:}\newline
Reference: \texttt{27}.
\end{description}
\begingroup\raggedright
The input records retain their identifiers and provenance: New reviews identify \texttt{stanfordnlp/sst2}, factual items identify \texttt{zjunlp/KnowEdit}, and the controlled examples above identify a deterministic generator. The frozen JSONL files specify the actual mappings and references used for scoring; these examples do not add new audit items.\par
\endgroup

\paragraph{Retention budgets.}
The original 7B contract permits NLL increase $0.20$, answer-margin drop $1.0$, token-accuracy drop $0.05$, and sequence-exact drop $0.10$, checked globally and by stratum. Its New-acquisition rule requires token accuracy at least $0.50$ and improvement at least $0.05$. These budgets determine per-run status, while the registered $-0.03$ NI bound compares new generation and token outcomes across methods and seeds. The 7B long phase retains the same budgets, with controller evaluation every 32 updates instead of 16. The separately frozen, task-specific acquisition rules for the structure--location study are given in Appendix~\ref{app:structure-location}; they do not change historical gate outcomes.

\section{Statistical Families and Complete Results}
\label{app:statistics}

For paired seed differences $d_1,\ldots,d_8$, the two-sided sign-flip test enumerates all $2^8$ sign assignments and compares the absolute mean to the observed absolute mean. The smallest nonzero two-sided probability is $2/256=.0078125$. Confidence intervals are $\bar d\pm t_{.975,7}s_d/\sqrt8$ and describe seed variation on the fixed model and question set. Holm decisions and unadjusted intervals need not coincide: an interval may exclude zero while the multiplicity-corrected test does not pass.

\begin{table}[htbp]
\centering\small
\caption{Registered contrast families in the original 576-run scale study. Corrections are local to the stated family, not across the entire paper. All primary comparisons are method minus matched plain except the three head-to-head contrasts.}
\begin{tabular}{p{3.0cm}p{8.8cm}}
\toprule
Family & Members and correction scope\\
\midrule
Standard generation & Fixed 10\% replay; MSSR paper-spec; pure Adam projection. Three tests per model.\\
Equal-MTE generation & Fixed 10\%; pure Adam; Adam plus 2\%. Three tests per model.\\
Long generation & Fixed 10\%; Adam plus 2\%. Two tests per model.\\
Bank sensitivity & Fixed 10\%; Adam plus 2\%. Two tests within each Qwen bank; the reported token family is supporting, distinct from generation.\\
Mechanistic head-to-head & Adam minus raw projection; dynamic minus static mask; Adam plus 2\% minus fixed 10\%. Three tests per model.\\
\bottomrule
\end{tabular}
\end{table}

Every run contributes to its assigned method endpoint, including 156 retention failures and one new-acquisition failure. The latter is a Mistral raw-projection run in the standard phase and remains visible in the complete table. A favorable old-endpoint contrast does not establish new acquisition or the full contract. Supporting methods outside a registered generation family are not relabeled confirmatory by their unadjusted intervals.

\paragraph{New non-inferiority.}
The frozen analyzer uses the full-precision lower endpoint of the 95\% paired interval against $-0.03$. Fixed replay passes both new endpoints in the long phase for each model. Its standard Mistral comparison and equal-MTE Mistral/Qwen comparisons do not pass. Mistral long-horizon Adam plus 2\% has generation lower bound $-0.030130649384$, while its token lower bound is approximately $-0.028920815$; the former fails the criterion and the latter passes. We make the decision before rounding. These NI checks use the registered per-comparison rule, not an additional joint NI multiplicity correction.

\paragraph{Other method contrasts.}
Adaptive hybrid has 24/24 feasible endpoints in the standard phase, without an established registered advantage over fixed replay. Old-input teacher KL improves Mistral directionally but is heterogeneous across models; generic OpenWebText KL is not a substitute for the strong task-relevant supervision effect. Their inputs, masks, and target distributions differ, so this comparison alone does not isolate label information. MSSR is an independent adaptation of the sample-memory and scheduling equations of \citet{lu2026mssr}, using the experiment's common task, adapter, and optimizer. Dynamic masks do not establish a registered generation advantage over static masks.

The frozen decision helper emits \texttt{PARTIAL\_SUPPORT} because a permissive branch accepts positive old-token point estimates in at least two models. Its own generation confirmations are $0/3$ for pure Adam in the standard phase, $0/3$ in equal MTE, and $0/3$ positive Adam-versus-raw contrasts. The paper uses the actual endpoints and tests rather than interpreting that convenience label as confirmation of the projection hypothesis.

The refresh--coverage supplement's post-hoc positive-part analysis and key supporting New-generation NI summaries are reported separately in Appendix~\ref{app:causal}. The structure--location study has its own registered factorial family and descriptive task extensions (Appendix~\ref{app:structure-location}); neither is pooled with the scale-study families above.

\begingroup
\scriptsize
\setlength{\tabcolsep}{4pt}
\renewcommand{\arraystretch}{1.08}
\begin{longtable}{lllrrr}
\caption{All 72 model--strategy groups in the 7B study. Old and New are mean final generation exact (percent); F is the number satisfying all final retention budgets and new acquisition, out of eight seeds. Every run, including retention and acquisition failures, contributes to its mean. All endpoints use the frozen final-unconstrained policy. MTE matched denotes the optimized token-pass compute proxy; all models use rank-16 QLoRA.}%
\label{tab:scale-all-groups}\\
\toprule
Phase & Model / bank & Strategy & Old (\%) & New (\%) & F/8 \\
\midrule
\endfirsthead
\multicolumn{6}{c}{Table~\thetable\ continued}\\
\toprule
Phase & Model / bank & Strategy & Old (\%) & New (\%) & F/8 \\
\midrule
\endhead
\midrule
\multicolumn{6}{r}{Continued on next page}\\
\endfoot
\bottomrule
\endlastfoot
Mechanism & Mistral & Plain & 2.72 & 94.28 & 0/8 \\
Mechanism & Mistral & Fixed 10\% & 96.74 & 91.92 & 8/8 \\
Mechanism & Mistral & Retain KL & 79.35 & 93.51 & 3/8 \\
Mechanism & Mistral & Generic KL & 11.41 & 91.48 & 0/8 \\
Mechanism & Mistral & Adaptive & 96.20 & 92.17 & 8/8 \\
Mechanism & Mistral & MSSR & 4.89 & 92.09 & 0/8 \\
Mechanism & Mistral & Static mask & 74.46 & 94.30 & 1/8 \\
Mechanism & Mistral & Dynamic mask & 71.20 & 92.96 & 0/8 \\
Mechanism & Mistral & Raw projection & 84.24 & 88.17 & 0/8 \\
Mechanism & Mistral & Adam projection & 2.72 & 93.93 & 0/8 \\
Mechanism & Mistral & Adam + 2\% & 17.39 & 92.21 & 0/8 \\
\addlinespace[2pt]
Mechanism & Qwen & Plain & 96.32 & 94.62 & 5/8 \\
Mechanism & Qwen & Fixed 10\% & 100.00 & 94.74 & 8/8 \\
Mechanism & Qwen & Retain KL & 91.91 & 94.60 & 6/8 \\
Mechanism & Qwen & Generic KL & 99.26 & 94.85 & 6/8 \\
Mechanism & Qwen & Adaptive & 100.00 & 94.85 & 8/8 \\
Mechanism & Qwen & MSSR & 98.53 & 94.64 & 4/8 \\
Mechanism & Qwen & Static mask & 94.85 & 95.23 & 8/8 \\
Mechanism & Qwen & Dynamic mask & 96.32 & 95.10 & 8/8 \\
Mechanism & Qwen & Raw projection & 100.00 & 95.39 & 8/8 \\
Mechanism & Qwen & Adam projection & 98.53 & 94.45 & 8/8 \\
Mechanism & Qwen & Adam + 2\% & 100.00 & 95.39 & 8/8 \\
\addlinespace[2pt]
Mechanism & OLMo & Plain & 98.08 & 94.95 & 7/8 \\
Mechanism & OLMo & Fixed 10\% & 100.00 & 94.89 & 8/8 \\
Mechanism & OLMo & Retain KL & 99.52 & 94.45 & 7/8 \\
Mechanism & OLMo & Generic KL & 97.12 & 94.58 & 8/8 \\
Mechanism & OLMo & Adaptive & 100.00 & 94.58 & 8/8 \\
Mechanism & OLMo & MSSR & 97.12 & 94.66 & 7/8 \\
Mechanism & OLMo & Static mask & 100.00 & 94.39 & 8/8 \\
Mechanism & OLMo & Dynamic mask & 100.00 & 94.85 & 8/8 \\
Mechanism & OLMo & Raw projection & 98.08 & 94.47 & 8/8 \\
Mechanism & OLMo & Adam projection & 97.12 & 94.64 & 8/8 \\
Mechanism & OLMo & Adam + 2\% & 98.56 & 94.16 & 7/8 \\
\midrule
MTE matched & Mistral & Plain & 3.80 & 91.83 & 0/8 \\
MTE matched & Mistral & Fixed 10\% & 97.83 & 92.65 & 8/8 \\
MTE matched & Mistral & Generic KL & 4.89 & 91.58 & 0/8 \\
MTE matched & Mistral & Raw projection & 88.04 & 91.21 & 1/8 \\
MTE matched & Mistral & Adam projection & 3.80 & 93.53 & 0/8 \\
MTE matched & Mistral & Adam + 2\% & 42.93 & 92.76 & 0/8 \\
\addlinespace[2pt]
MTE matched & Qwen & Plain & 98.53 & 95.21 & 8/8 \\
MTE matched & Qwen & Fixed 10\% & 100.00 & 94.47 & 8/8 \\
MTE matched & Qwen & Generic KL & 99.26 & 94.28 & 7/8 \\
MTE matched & Qwen & Raw projection & 100.00 & 94.60 & 8/8 \\
MTE matched & Qwen & Adam projection & 97.79 & 95.10 & 7/8 \\
MTE matched & Qwen & Adam + 2\% & 100.00 & 94.62 & 7/8 \\
\addlinespace[2pt]
MTE matched & OLMo & Plain & 97.60 & 94.43 & 7/8 \\
MTE matched & OLMo & Fixed 10\% & 100.00 & 94.26 & 8/8 \\
MTE matched & OLMo & Generic KL & 97.60 & 94.56 & 8/8 \\
MTE matched & OLMo & Raw projection & 97.60 & 95.00 & 7/8 \\
MTE matched & OLMo & Adam projection & 97.12 & 94.33 & 8/8 \\
MTE matched & OLMo & Adam + 2\% & 99.52 & 94.43 & 8/8 \\
\midrule
Bank sensitivity & Qwen (full) & Plain & 96.32 & 94.68 & 6/8 \\
Bank sensitivity & Qwen (full) & Fixed 10\% & 100.00 & 94.81 & 8/8 \\
Bank sensitivity & Qwen (full) & Adam + 2\% & 100.00 & 94.89 & 7/8 \\
\addlinespace[2pt]
Bank sensitivity & Qwen (balanced) & Plain & 96.32 & 94.68 & 6/8 \\
Bank sensitivity & Qwen (balanced) & Fixed 10\% & 100.00 & 94.62 & 8/8 \\
Bank sensitivity & Qwen (balanced) & Adam + 2\% & 100.00 & 94.28 & 7/8 \\
\addlinespace[2pt]
Bank sensitivity & Qwen (factual) & Plain & \textit{n/a} & 94.68 & 8/8 \\
Bank sensitivity & Qwen (factual) & Fixed 10\% & \textit{n/a} & 93.87 & 8/8 \\
Bank sensitivity & Qwen (factual) & Adam + 2\% & \textit{n/a} & 95.23 & 8/8 \\
\addlinespace[2pt]
Bank sensitivity & Qwen (no factual) & Plain & 96.09 & 94.68 & 6/8 \\
Bank sensitivity & Qwen (no factual) & Fixed 10\% & 100.00 & 94.30 & 8/8 \\
Bank sensitivity & Qwen (no factual) & Adam + 2\% & 100.00 & 95.27 & 8/8 \\
\midrule
Long horizon & Mistral & Plain & 2.72 & 94.64 & 0/8 \\
Long horizon & Mistral & Fixed 10\% & 97.28 & 94.49 & 8/8 \\
Long horizon & Mistral & Adam + 2\% & 96.20 & 93.68 & 2/8 \\
\addlinespace[2pt]
Long horizon & Qwen & Plain & 100.00 & 95.10 & 2/8 \\
Long horizon & Qwen & Fixed 10\% & 100.00 & 95.27 & 8/8 \\
Long horizon & Qwen & Adam + 2\% & 100.00 & 94.39 & 8/8 \\
\addlinespace[2pt]
Long horizon & OLMo & Plain & 97.12 & 94.68 & 7/8 \\
Long horizon & OLMo & Fixed 10\% & 100.00 & 94.30 & 8/8 \\
Long horizon & OLMo & Adam + 2\% & 100.00 & 94.74 & 7/8 \\
\end{longtable}
\noindent\textit{Key.} Fixed 10\% is supervised old-data replay; Retain KL uses teacher targets on retention data; Generic KL uses OpenWebText; Adaptive is adaptive hybrid; MSSR is the registered paper-specification implementation; Static/Dynamic mask are function-column masks; Raw/Adam projection denote the two cached-probe projection rules; Adam + 2\% adds supervised replay. In the bank phase, full, balanced, factual-only, and without-factual denote the four frozen Qwen bank variants. Factual-only has no generation-qualified old audit, hence \textit{n/a}; this does not mean zero forgetting. All 576 runs are included: 419 feasible, 156 retention-audit failures, and one new-acquisition failure.
\endgroup

\section{Execution and Projection Audit}
\label{app:execution}

\subsection{Separating residuals, staleness, and outcomes}

The Adam-projection audit covers 152 runs: 29,047 step events, 3,689 gradient-refresh events, and 152 final-budget events. Projection intervenes at 11,143 steps, exactly when the recorded candidate dot is positive. The checks find no zero old-gradient vector, zero candidate update, or entirely unexecuted intervention. Post-projection update norms agree with the independently recorded actual displacement norms to numerical precision at sampled steps.

The frozen post-projection tolerance is
\begin{equation}
 \mathrm{tol}_t=\max\{10^{-7},10^{-3}|h_t^\top v_t|\}.
\end{equation}
Of the 11,143 interventions, 11,110 meet this tolerance and 33 exceed it across 26 runs: 16 events on Mistral, 17 on OLMo, none on Qwen. The largest positive residual is $6.3778134\times10^{-7}$. These values are small but constitute a genuine failure of the frozen gate. No threshold is relaxed and no implicated seed is removed. The population supports an implementation audit with observed deviations, not an ideal projection guarantee on every run.

The central Mistral seed-1101 witness has \emph{zero} violations over all 128 steps. It is therefore not explained by the 33 residual failures. The fresh-versus-cached comparison uses the same probe examples, objective, and actual applied displacement; only the gradient's parameter state changes. Refreshes occur at updates 1, 9, 17, and so on, whereas diagnostics occur at 16, 32, and so on. All 1,756 paired diagnostics consequently have age seven. The $639/1756$ and $505/609$ counts in Appendix~\ref{app:scale-results} are not estimates for uniformly sampled training steps.

\subsection{Meaning of each diagnostic}

Parameter snapshots include trainable parameters, which in this setting are LoRA factors. They do not measure $W_{\rm base}+sBA$ drift. Representation diagnostics evaluate a deterministic stratified probe at the hidden position immediately preceding the first answer token. The final summary uses the last diagnostic and the last layer. Mean cosine and centered linear CKA \citep{kornblith2019cka} therefore cover a specified probe/readout, not the complete representation geometry.

An actual-update dot is $\nabla Q(\theta_t)^\top u_t$, with the fresh gradient evaluated before the update and $u_t$ measured after AdamW and projection. A separate field called \texttt{linearized\_old\_loss\_change} in the cumulative parameter-drift summary is $\nabla Q(\theta_T)^\top(\theta_T-\theta_0)$. That field is neither an observed loss difference nor an initial-state Taylor prediction nor the path sum in Equation~\ref{eq:path}. It is not used as a measured old-loss reduction in this paper.

\subsection{Compute and environment}

The recorded environment is Python 3.12, PyTorch 2.8.0+cu128, CUDA runtime 12.8, Transformers 4.57.6, PEFT 0.20.0, Accelerate 1.14.0, and bitsandbytes 0.50.1. Training uses bf16, AdamW with learning rate $10^{-4}$ and zero weight decay, batch size 2, eight-step gradient accumulation, length 256, clipping 1.0, and gradient checkpointing. QLoRA uses rank 16, alpha 32, dropout $0.05$, all linear modules, NF4, and double quantization. New-acquisition-only calibration precedes the frozen learning-rate lock.

Execution started on one reported 48-GB RTX 4090 and continued with three such GPUs. Some tasks shared a GPU. No reported MTE claim treats those concurrent wall times as an isolated hardware efficiency experiment. Equal-MTE groups have all six registered methods and eight seeds; observed budget overshoot is below 1\%. Long-horizon plain, fixed, and Adam-plus-2\% arms consume respectively 512, 569, and 523 optimizer updates, with 0, 910, and 167 old-example exposures and 8,192 new exposures in each arm.

\subsection{Artifact completeness}

The downloaded source has 576 unique formal results and matching configurations. Local verification matches all result/config hashes and 45 frozen source fingerprints. The 65,064 lines across the three analyzed JSONL types parse correctly. Forty-nine referenced source-dataset paths are not available on the Mac, so their contents and semantic separation were not rehashed locally. All 576 scale-study results record removal of their selected and final checkpoints after audit. The analyzed package supports recomputation of logged statistics, but not re-evaluation of these trained models or additional weight interventions.

The remote aggregation script stopped on a three-versus-four generation-family configuration mismatch after training completed. The independent local analysis reads the frozen four-family specification and writes a separate derived report without modifying training sources, data hashes, results, or gates. Its rerun of the original completion checks retains \texttt{FAIL} for the 33 numerical violations. It is a derived audit, not a rewritten remote completion receipt.

The artifact includes the independent analysis, source hashes, complete group tables, and plotting inputs. The eight scale/historical figures were redrawn in base R 4.6.1 through R.app on macOS as vector PDFs, with dedicated right-side legends; all remain included alongside the three refresh--coverage figures. The extractor verifies row-level contrasts, seed sets, qualification counts, projection events, historical validity records, and full-precision NI decisions before plotting. Paired-effect, marginal-endpoint, and pointwise-trajectory intervals are identified in their captions; only the explicitly labeled controller grid uses interpolation.

\section{Long-Horizon Functional Trajectories}
\label{app:long}

\begin{figure}[htbp]
\centering
\includegraphics[width=\linewidth]{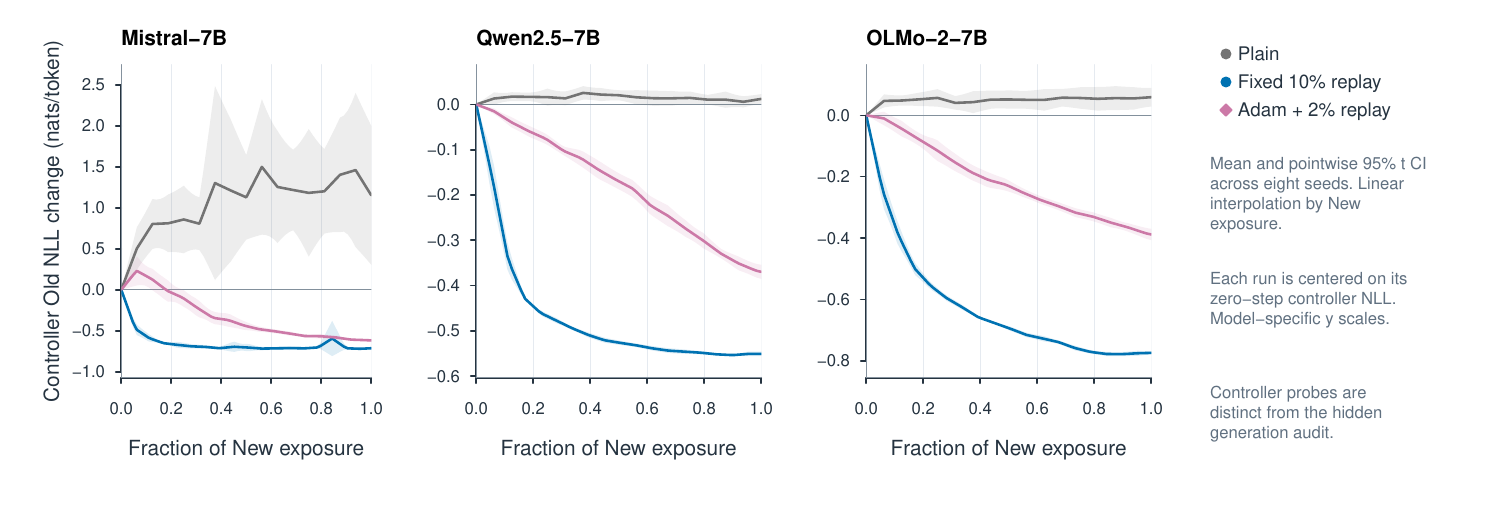}
\caption{Controller Old NLL relative to each run's zero-step baseline over 8,192 new exposures. Curves use linear interpolation to a common 65-point grid, followed by eight-seed means and pointwise 95\% $t$ intervals. Y axes are model-specific. These are monitoring trajectories, not hidden-generation curves or simultaneous confidence bands.}
\label{fig:longcurves}
\end{figure}

Time-normalized Old-NLL AUC benefits of fixed replay versus plain are $1.733$, $0.490$, and $0.679$ for Mistral, Qwen, and OLMo. The Adam-plus-2\% combination gives $1.411$, $0.195$, and $0.266$. Thus high final generation can coexist with different accumulated old-loss exposure. Mistral fixed replay records 11 controller violations across its long runs despite all eight final endpoints being feasible. Raw violation counts have different evaluation opportunities across methods and are not directly comparable rates. The long-phase seeds differ from standard-phase seeds; the phase contrast is not a replay-versus-short-horizon counterfactual from identical trained states.

\FloatBarrier
\section{Supporting Controlled and Cross-Adaptation Evidence}
\label{app:historical}

The supporting controlled study uses a public Pythia checkpoint and synthetic old/new biographical facts \citep{biderman2023pythia}. Concrete example budgets, prompt masks, random streams, and optimizer settings separate data order from replay information. This is a secondary-domain training counterfactual, not a reconstruction of the original pretraining trajectory. The 100-update information experiment and the subsequent optimizer-state swap use common starting parameters and optimizer states where required by their causal question.

\begin{table}[htbp]
\centering\small
\caption{Six-seed common-state information experiment. Positive AUC protection means lower old-probe loss than the matched new-only path. Values are old-loss AUC units, not percentage points.}
\begin{tabular}{lr}
\toprule
Replay information & Old-loss AUC protection [95\% paired CI]\\
\midrule
Source-normalized supervision & $6.668\ [4.008,9.328]$\\
Global-normalized supervision & $9.635\ [7.412,11.857]$\\
Teacher KL & $7.067\ [4.574,9.560]$\\
Input only & $0.510\ [-1.895,2.916]$\\
\bottomrule
\end{tabular}
\end{table}

\begin{figure}[htbp]
\centering
\includegraphics[width=\linewidth]{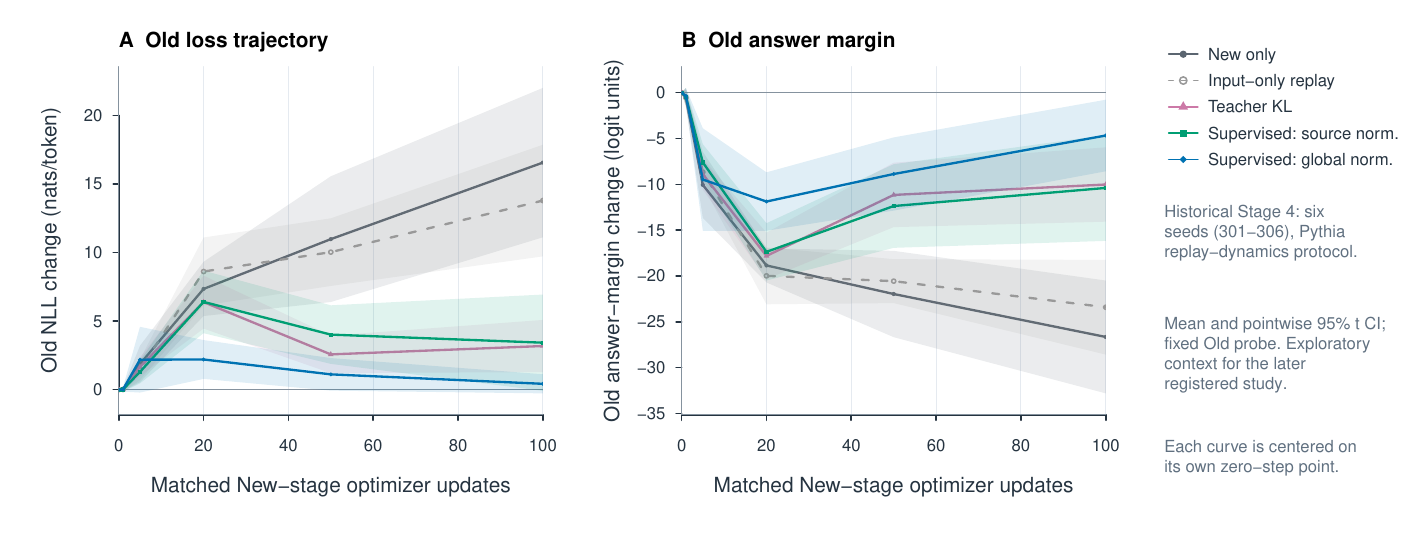}
\caption{Earlier common-state Pythia trajectories (seeds 301--306), redrawn in R. Loss and answer margin are centered at the starting probe value. Bands are pointwise 95\% $t$ intervals. This supporting information intervention is distinct from the 7B confirmation, and informative replay can have early deterioration before longer-horizon protection emerges.}
\label{fig:historydynamics}
\end{figure}

At 100 updates, final old exact is $0.167$ for new-only, $0.142$ for input-only, $0.792$ for global supervision, $0.785$ for source-normalized supervision, and $0.698$ for teacher KL. Supervised-minus-input-only is $0.642$ $[0.551,0.733]$; teacher-KL-minus-input-only is $0.556$ $[0.411,0.700]$. The input-only control carries no old answer or teacher target, so it tests exposure to prompts without a functional answer constraint, not every possible unlabeled learning objective.

In the optimizer-state experiment, identical rewound parameters receive sequential-donor, replay-donor, cross-donor first/second moments, zero moments at a matched step, or a fresh optimizer; all receive the same later pure-new batches. Replay-donor versus sequential-donor old-loss AUC is $0.956$ $[-0.351,2.263]$, exact $p=.09375$, with final old-exact difference $-0.052$ $[-0.129,0.025]$. This leaves the independent transferable-moment hypothesis unresolved; it does not establish that optimizer history has no role.

\paragraph{Qualified Full/LoRA/QLoRA confirmation.}
The earlier 384-run study contains 168 mitigation, 144 information, and 72 controller-stress runs, with disjoint eight-seed sets 711--718, 811--818, and 911--918. Models are Pythia-1B, Qwen2.5-0.5B-Instruct, and TinyLlama-1.1B-Chat \citep{biderman2023pythia,qwen2024qwen25,zhang2024tinyllama}. The frozen old-token primary improves under fixed replay in all nine model--adaptation cells, each with eight positive paired effects and within-model Holm $p=.0234375$ (Figure~\ref{fig:legacy}). Its effects span $1.5$ to $44.6$ token-accuracy points. All nine new-generation NI intervals extend below the registered bound. Only Qwen has an eligible old-generation subset (12 items), so that study's cross-model primary is token retention, not generation.

\begin{figure}[htbp]
\centering
\includegraphics[width=\linewidth]{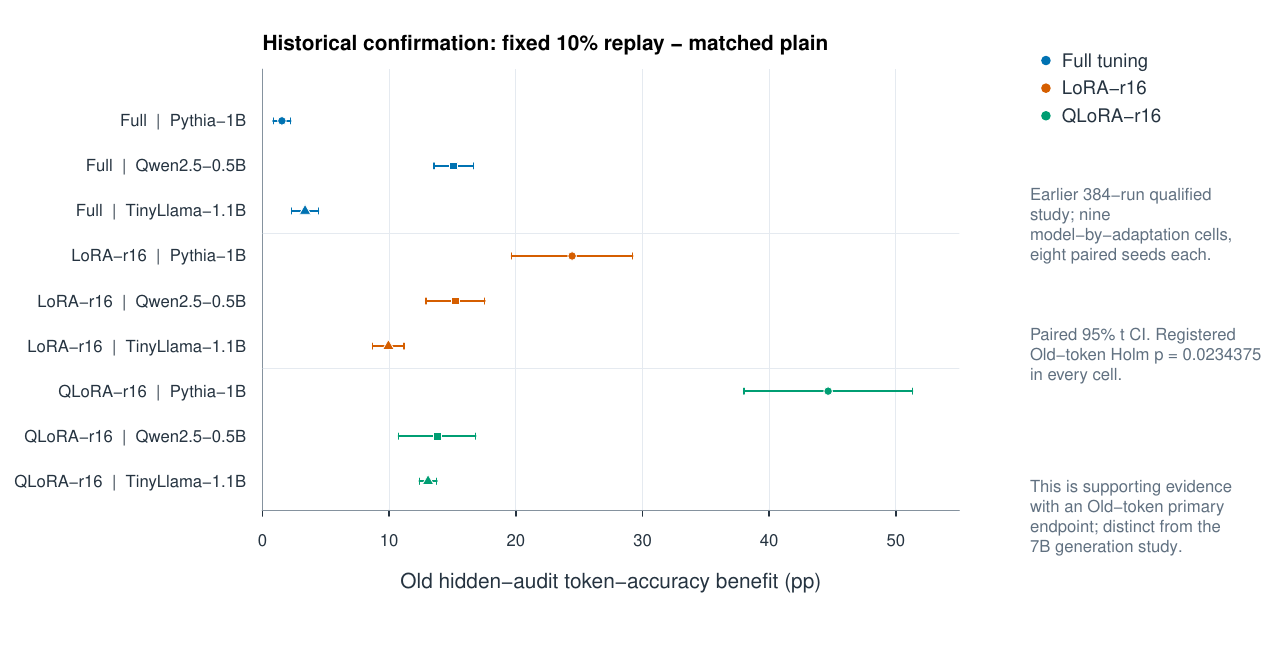}
\caption{Historical cross-adaptation confirmation: fixed replay minus matched plain. Each cell has eight seeds; bars are 95\% paired $t$ intervals and all nine recorded old-token Holm values are $.0234375$. This result demonstrates adaptation coverage under the historical qualified banks; it is not a 7B generation experiment.}
\label{fig:legacy}
\end{figure}

That complete matrix has 301 feasible and 83 retention-audit outcomes, with no missing matched-inference runs. Supervised and hybrid replay exceed input-only on final old token and old-loss AUC in each model; teacher KL is more variable across paths. The controller-stress comparison shows adaptive hybrid using less replay but lower old-token retention than fixed replay. These results motivate a fixed functional baseline while leaving adaptive scheduling as a separate trade-off. Earlier development matrices and acquisition-gate failures are retained in the repository reports but are not pooled into these confirmatory estimates.

\FloatBarrier

\section{Implementation Details and Retention Contract}
\label{app:implementation}

\subsection{Detailed experimental configuration}

We analyze three complementary designs separately. A complete 576-run, three-model scale study tests mitigation across new exposure, optimization-compute proxies, bank composition, and horizon. An independently seeded, 256-run refresh--coverage supplement intervenes on gradient refresh, protected sample size, replay timing, and objective allocation. A 64-run structure--location study crosses constraint granularity with optimizer placement and adds observed generation paths and two adaptation conditions. Results are paired within their own protocol rather than pooled across the three designs.

\paragraph{Models, acquired behaviors, and task.}
The scale study uses Mistral-7B-Instruct-v0.3, Qwen2.5-7B-Instruct, and OLMo-2-1124-7B-Instruct under rank-16 NF4 QLoRA \citep{dettmers2023qlora,qwen2024qwen25}; the refresh--coverage supplement uses Mistral and Qwen. In these original studies, New SFT maps sentiment instructions to controlled code labels, with 4,775 train, 628 development, and 597 audit records. The single-token target provides a reproducible new-acquisition endpoint. Old banks are qualified at the actual zero-step QLoRA start. Their hidden teacher-forced audits contain 96/74/66 records, and their qualified generation subsets contain 23/17/26, respectively. Mistral's generation subset primarily covers instruction format; Qwen's and OLMo's primarily cover arithmetic. Appendix~\ref{app:banks} and Figure~\ref{fig:coverage} give the exact domains. The models therefore provide different acquired-function settings, not an architecture ranking. The Mistral-only structure--location study retains the legacy code setting, adds a natural-language sentiment variant and a controlled structured-extraction task, and evaluates two independently templated, 60-record strict-EOS banks (Appendix~\ref{app:structure-location}).

\paragraph{Registered experimental programs.}
The scale matrix comprises 264 matched-new-exposure, 144 equal-MTE, 96 Qwen bank-sensitivity, and 72 long-horizon runs, each cell with eight seeds. The complete refresh--coverage supplement contains \textbf{256 formal runs}: two models, 16 methods, and paired seeds 51001--51008, following 32 smoke and 36 development runs. Every registered cell and all eight model-specific statistical families are complete. Its final manifest-bound audit verifies result/configuration hashes, matched exposures, and numerical integrity. Appendix~\ref{app:causal} reports all 32 method groups and 22 registered contrasts, including every scientific failure. The separate structure--location matrix contains 48 code runs (six methods, seeds 63001--63008) and eight runs for each additional task (plain/replay, seeds 63001--63004); the two four-seed extensions are descriptive. Its EOS-supervised New-training protocol and acquisition rules were frozen separately, after development, and are not retroactive amendments to the original studies.

\paragraph{Matched updates and explicit replay mass.}
Every refresh--coverage run consumes the same 2,048 ordered New records and 128 AdamW updates within its model/seed, with 16 New records per update and microbatch size two. Trainable adapter factors use FP32 storage, the frozen NF4 base uses BF16 computation, learning rate is $10^{-4}$, weight decay is zero, and adapter dropout is zero. Each record's loss first averages its supervised tokens. The update objective is
\begin{equation}
 \ell_t=\frac{1}{16}\sum_{i\in B_t^{\rm new}}\ell_i+
         \frac{1}{16}\sum_{j\in B_t^{\rm old}}\ell_j .
 \label{eq:mass-detail}
\end{equation}
For nominal fractions $\rho=.02,.05,.10,.20$, rounding $2048\rho/(1-\rho)$ gives exactly 42, 108, 228, or 512 Old exposures. Replay adds old information without replacing New records or changing optimizer-step count. This matches exposure, not hardware FLOPs.

\paragraph{Factorial and same-state interventions.}
In the refresh--coverage factorial, actual-update projection varies refresh interval $r\in\{1,8\}$ and protected sample size $n\in\{12,48\}$. The smaller probe is nested in the larger; a separate 12-record diagnostic set $E$ is fixed across all cells and never supplies optimization gradients. At every update we compute the actual displacement, cached/current probe gradients, and diagnostic gradient, and log their inner products, update norm, and before/after diagnostic loss. At updates 8/32/64/96/128, cached and fresh projections share the \emph{same} pre-update parameters and candidate Adam displacement; optimizer moments are left unchanged, and the realized post-update parameters are restored after these diagnostic branches. Full traces cover all cache ages; these five $r=8$ branch points have age seven.

The projection is applied in FP32 and the stored displacement is checked against $\max(10^{-7},10^{-3}|h^\top v|)$, with up to three explicit rounding corrections. Numerical failures remain visible. This refresh--coverage implementation does not relabel the 33 residual violations in the original scale study. Validation records exclude the 48-record protection pool but include $E$; the hidden audit is separate from both. The structure--location study instead crosses aggregate/stratum-wise constraints with pre-/post-Adam placement under a shared solver and fresh 48-record protection gradients. Its four same-state branches share the pre-intervention parameters, moments, batch, and RNG, not the post-intervention Adam candidate (Appendix~\ref{app:structure-location}).

\paragraph{Replay timing, targets, and calibrated baselines.}
Uniform, front-half, and back-half replay share the identical ordered list of 228 Old records and the same New sequence; the intervention changes when Old targets occur. At fractions .02 and .10, hybrid replay replaces each old objective by $.5\,{\rm CE}+.5\,T^2{\rm KL}(p_{\rm base}\Vert p_\theta)$, $T=2$, holding old-example counts and the sum of nominal CE/KL coefficients fixed. The teacher is the same quantized base with adapters disabled. This is a target-allocation comparison, not KL added on top of unchanged supervision.

I-GEM uses current stratum-specific gradient constraints before AdamW \citep{tekmen2026igem}; PEGP projects both LoRA-factor gradients using frozen input-feature subspaces \citep{qiao2024pegp}; ARR combines New-input reverse KL with OpenWebText next-token replay \citep{riemer2025arr}. Two development seeds choose parameters from frozen grids, subject to New acquisition, before formal audits are opened. These are shared-budget QLoRA adaptations; Appendix~\ref{app:causal-methods} specifies their relation to the source methods.

\paragraph{Endpoints and inference.}
All methods return the final fixed-budget endpoint, without selecting a favorable earlier checkpoint. In the original studies and the new Legacy endpoint, Old generation uses the frozen greedy decoder and reference-length budget with the token-ID comparison specified in Appendix~\ref{app:banks}. The structure--location study additionally records strict-EOS generation under a fixed 64-token cap at six prespecified checkpoints; the new banks and scoring rules are specified in Appendix~\ref{app:structure-location}. Validation Old-NLL AUC integrates signed, baseline-centered, token-weighted loss over normalized New exposure. The additional positive-part exposure and observed-peak summaries are defined in Appendix~\ref{app:positive-exposure}. Diagnostic $E$ instead averages record losses and supports the update decomposition. Per-run \texttt{FEASIBLE} requires New acquisition and the complete global/stratum retention contract. New non-inferiority (NI) separately requires the paired 95\% $t$ interval lower bound to exceed $-.03$.

The original studies use exact two-sided paired sign flips over eight seeds. The refresh--coverage supplement's primary families are factorial Old-NLL AUC, external-baseline Old generation, schedule AUC, and hybrid AUC, with Holm correction within each model/family. Each family is evaluated only after all of its registered eight-seed cells are complete. Same-state effects and New NI are supporting, per-comparison summaries; ratio curves are descriptive. Its complete family definitions and numerical/measurement audits are in Appendix~\ref{app:causal}. The structure--location study registers three eight-seed factorial contrasts on Legacy reference-length exact and reports a separate supporting code replay contrast on the independent strict-EOS banks. Its natural-language and extraction extensions use four paired seeds, descriptive intervals, and no confirmatory NI decisions (Appendix~\ref{app:structure-location}). Seed intervals describe variation on fixed question sets, not bank-sampling uncertainty.

\paragraph{Scale and efficiency controls.}
The 576-run matrix includes fixed supervised replay, target and generic KL, adaptive/scheduled replay, masks, and raw/actual-update projections. Equal-compute arms match dense-model-token equivalents (MTE) within the frozen 3\% tolerance; long runs consume 8,192 New exposures. Full details and results remain in Appendix~\ref{app:scale-results}. Supplementary timers separate optimization, diagnosis, and evaluation, but shared-GPU runs are excluded from exclusive-method timing comparisons. No inference treats an MTE proxy or shared wall time as measured hardware FLOPs.

\subsection{A retention contract for SFT}
\label{app:contract}
\label{sec:tool}

\method{} turns the functional-path account into an acceptance workflow: freeze acquired old behaviors, train with informative anchors, record old-loss trajectories, and audit the final model. Supervised replay supplies the baseline, with its budget chosen from the measured Old/New trade-off.

For stratum $g$ and metric $k$, let $\Delta_{gk}(\theta,\theta_0)$ measure degradation in its natural direction and let $b_{gk}>0$ be a declared budget. Define
\begin{equation}
 R(\theta)=\max_{g,k}\frac{\Delta_{gk}(\theta,\theta_0)}{b_{gk}}.
 \label{eq:contract}
\end{equation}
Endpoint acceptance requires New acquisition and $R(\theta)\leq1$ on hidden audit. The contract specifies an operating threshold, whereas paired endpoint effects quantify mitigation: a method can improve a declared behavior relative to its control without satisfying the entire contract. All evaluable runs remain in comparisons regardless of retention-contract status. Controller data may guide replay or checkpoint restoration; these experiments return fixed-budget endpoints. A trajectory contract additionally constrains risk or integrated loss during training, which endpoint acceptance cannot certify.

Old NLL admits the path decomposition; current-gradient dots measure local pressure; margins track decision boundaries; generation checks the served sequence. Exposing these complementary signals keeps local feasibility distinct from retained behavior.

\section{Expanded Related Work}
\label{app:related-work}

\paragraph{Functional regularization and replay.}
Rehearsal, distillation, parameter regularization, and episodic constraints are established approaches \citep{li2017learning,chaudhry2018efficient,rolnick2019replay}. Gaussian-process regularization and FROMP explicitly protect functions at selected inputs \citep{titsias2019functional,pan2020continual}. Self-synthesized rehearsal and MSSR address missing data and replay allocation \citep{huang2024ssr,lu2026mssr}; approximate regularized replay combines KL and generic-text replay \citep{riemer2025arr}. Our contribution is a common accounting of realized updates and old-loss changes, together with controlled interventions on target timing and objective allocation and separate measurement of loss trajectories and final generated behavior. The ARR and MSSR comparisons are independent shared-budget adaptations, not official-code benchmark reproductions.

\paragraph{Geometry and parameter-efficient adaptation.}
LoRA and QLoRA modify the served function through low-rank factors \citep{hu2022lora,dettmers2023qlora}. Continual adaptations exploit subspaces, adapter connectivity, instruction similarity, and selected distillation \citep{qiao2024pecl,ren2024ilora,wang2024inscl,he2024seekr}. PEGP protects factor-input subspaces, while I-GEM efficiently handles multiple gradient constraints \citep{qiao2024pegp,tekmen2026igem}. Their distinct protected objects matter: our aggregate actual-update halfspace is not equivalent to either method. I-GEM provides a competitive retention baseline, while Qwen's probe-size intervention demonstrates an effect of protection coverage on the measured loss trajectory. These findings motivate distinguishing gradient-constraint implementations and probe coverage in functional evaluations.

\paragraph{Modern LoRA protection in continual learning.}
\citet{lu2025clora} study both single-task and continual fine-tuning with fixed-subspace regularization of LoRA factors. \citet{xiong2026oplora} protect selected pretrained-weight singular subspaces with a two-sided projection. Our common QLoRA sequence evaluates these mechanisms alongside task-gradient constraints, adapter EWC, and replay, retaining their average benefits and task-wise trade-offs. Its OPLoRA adaptation uses approximate singular bases of the initial dequantized weight, not the historical feature-covariance null spaces of Adam-NSCL \citep{wang2021adamnscl}. We do not experimentally equate these mechanisms or claim a tuned ranking of their original implementations (Appendix~\ref{app:continual-bridge}).

\paragraph{Behavior, representation, and recoverability.}
Old information may remain decodable after task performance regresses \citep{tao2023can}; conjugate prompting and alignment recovery can restore some outputs \citep{kotha2024understanding,zheng2025spurious}. We measure regression through the original frozen interface. This functional loss can coexist with recoverability, and neither the loss identity nor adapter-distance/CKA measurements establishes irreversible knowledge erasure.

\section{Structure--Location Study: Protocol, Behavioral Paths, and Complete Outcomes}
\label{app:structure-location}

\subsection{Frozen protocol and task construction}
\label{app:structure-protocol}

This separate Mistral-7B-Instruct-v0.3 study contains 64 formal runs: six Code methods with paired seeds 63001--63008 (48 runs), and Plain/Replay10 on Natural sentiment and Structured extraction with paired seeds 63001--63004 (eight runs per task). Four full-length Natural/Extraction development pilots use seed 62001 and precede the formal lock. The EOS-supervised protocol was frozen after inspecting an earlier development protocol, before these four pilots; the earlier outcomes retain their original status and do not enter these estimates. In particular, the earlier five-point token-gain gate is not retrospectively applied or declared passed.

All runs use rank-16, alpha-32, all-linear-module QLoRA, an NF4 base with BF16 computation, FP32 adapter factors, zero adapter dropout/weight decay, and AdamW at learning rate $10^{-4}$. The 128 updates consume the same 2,048 ordered New records within task/seed, with 16 records per update and microbatch size two. Replay10 adds 228 ordered Old exposures. Record losses first average supervised tokens and then use the fixed denominator of 16 New records, as in Equation~\ref{eq:mass}; replay does not replace New exposures. New training supervises an appended EOS for \emph{every} task and rejects truncation of the answer or EOS. Old replay/probe masks and answer-only evaluation losses remain unchanged. This study matches New exposure and updates; it has no hardware-compute matching claim.

The same protection pool contains 48 controller records, 12 each from factual, instruction-format, multilingual, and arithmetic strata. Protection gradients are fresh at every update. The 50-record validation set excludes protection; a fixed 12-record diagnostic subset $E$ lies within validation and supplies no training gradient. The 96-record hidden old audit and all generation banks are outside protection/validation. Every method reports update 128, with no favorable-checkpoint selection.

\paragraph{Three frozen New interfaces.}
Code and Natural use the same SST-2 review identities, labels, ordering, and 4,775/628/597 train/development/audit split. Natural changes both the prompt and the answer form, while retaining sentiment semantics. The following are the complete prompt templates; braces denote substituted data, and each stored completion has a leading space.
\begin{quote}\small\ttfamily\raggedright
Task: sentiment-code-v1.\par
Classify the sentiment of the movie review using the learned internal code. Return exactly one code token.\par
Review: \{review\}\par
Code:
\end{quote}
The negative/positive completions are \texttt{A}/\texttt{B}, respectively.
\begin{quote}\small\ttfamily\raggedright
Task: sentiment-natural-v2.\par
Classify the sentiment of the movie review. Return exactly one sentence: This review expresses positive sentiment. or This review expresses negative sentiment.\par
Review: \{review\}\par
Answer:
\end{quote}
For the actual audit review \emph{an intimacy that sucks you in and dares you not to believe it} (source index 7324), the targets are \texttt{B} and \texttt{This review expresses positive sentiment.} Natural is high-baseline adaptation: its full-audit content accuracy starts at 82.24\%, reference exact at 88.78\%, and strict EOS at 80.23\%.

Extraction has 2,048/128/128 train/development/audit records, disjoint entity identifiers, and split-specific note templates. The complete outer template is:
\begin{quote}\small\ttfamily\raggedright
Task: dispatch-schema-v2.\par
Convert the dispatch note to the learned three-field JSON schema. Use keys rava, nemi, tovu in that order, with no explanation.\par
Note: \{note\}\par
JSON:
\end{quote}
Training notes read \texttt{Dispatch \{count\} units of \{cargo\} to customer \{recipient\}.}; development uses \texttt{Customer \{recipient\} is to receive \{count\} units; the item is \{cargo\}.}; audit uses \texttt{Item: \{cargo\}. Recipient: \{recipient\}. Number of units: \{count\}.} The first archived audit note is \texttt{Item: CAe8f2b0000. Recipient: RA0000. Number of units: 83.}, with target \texttt{\{"rava":"RA0000","nemi":"CAe8f2b0000","tovu":83\}}. This controlled extraction task uses new field names; it is not a broad natural benchmark.

\paragraph{New validity and joint retention.}
The frozen task-level gate requires finite metrics, final answer-token accuracy at least .50, answer NLL no worse than baseline, and either lower answer NLL or improved reference exact. Code/Extraction additionally require reference exact at least .50 with positive gain and strict-EOS exact at least .50. Extraction requires content accuracy at least .80 with positive gain. Natural requires reference, content, and strict accuracy each to remain within .03 of its own baseline. These per-run validity checks are distinct from statistical NI. The joint contract also applies the original global/stratum NLL, margin, token, and sequence budgets, and a maximum .10 generation drop separately to Legacy reference exact and each independent strict bank. All 64 formal runs completed with valid New acquisition and numerical checks. Seven satisfy the full frozen retention contract; the remaining 57 do not meet that absolute retention criterion. Endpoint effects use all evaluable runs rather than conditioning on contract satisfaction.

\subsection{Qualified behavioral banks and decoding}
\label{app:structure-banks}

\begin{table}[htbp]
\centering\small
\caption{Pre-SFT qualification. Eligibility uses the declared interface; selection is deterministic within eligible strata. Each independent bank has 20 arithmetic, 20 literal-copy, and 20 format records.}
\label{tab:structure-banks}
\begin{tabular}{lrrrp{5.5cm}}
\toprule
Bank & Candidates & Eligible & Selected & Qualification/interface\\
\midrule
Legacy & 23 & 23 & 23 & Original raw prompts; reference-length exact\\
Independent A & 240 & 239 & 60 & Native chat; fixed-cap strict EOS\\
Independent B & 240 & 199 & 60 & Different templates/content; fixed-cap strict EOS\\
\bottomrule
\end{tabular}
\end{table}

Eligibility rates are 99.58\% and 82.92\% for A and B; the selected fraction is 25\% in each. Qualification uses only baseline behavior. A/B have disjoint content groups and template families, and are disjoint from the legacy protection/replay/audit records. Their frozen native-chat prompts use the pinned tokenizer's rendering. The legacy prompts retain their original raw interface. A uses addition, identifier copying, and \texttt{VALUE=} formatting; B uses multiplication word problems, passphrase copying, and bracket formatting. These banks enlarge controlled behavioral coverage without representing a random sample of general capabilities.

Each prompt receives two separate greedy calls: one capped at the reference token count, and one capped at 64 new tokens with natural EOS stopping, no forced EOS, and no reference-triggered stopping. Reference exact removes trailing pad/EOS before token comparison. Fixed-cap strict exact requires the tokens before the first EOS to equal the reference and requires EOS to be observed. Content scoring uses frozen lexical rules: reference containment for literal answers, the first matched integer for arithmetic, the correct sentiment word without its opposite, or equality of the parsed three-field JSON object. These rules are not human semantic judgments. Format and EOS outcomes remain separately recorded.

Both independent strict baselines are 100\%. Legacy reference exact also starts at 100\%, but its strict-EOS baseline is 0\%; extending the generation horizon exposes continuations outside its original qualification interface. Accordingly, the independent strict endpoint averages A and B equally and excludes Legacy. Full New audits contain 597 sentiment or 128 extraction records. The 64-record New trajectory subset is used only for intermediate paths; it does not replace full-audit endpoints or NI.

\paragraph{Cross-call and same-output consistency.}
\label{app:structure-score-audit}
Let $r$ be the reference after trailing pad/EOS removal, $z^{\rm ref}$ the reference-length output, and $z^{64}$ the fixed-cap output. Write $\nu$ for trailing pad/EOS removal and $b(z)$ for the tokens before the first EOS, trimming trailing padding if no EOS occurs. The original scores and a post-hoc same-output prefix score are
\[
\begin{aligned}
R&=\mathbf{1}\{\nu(z^{\rm ref})=r\}, &
P_{64}&=\mathbf{1}\{b(z^{64})_{:|r|}=r\},\\
F_{64}&=\mathbf{1}\{b(z^{64})=r\}, &
S_{64}&=F_{64}\,\mathbf{1}\{\text{EOS observed in }z^{64}\}.
\end{aligned}
\]
Thus $S_{64}\Rightarrow F_{64}\Rightarrow P_{64}$ on the same output; $R$ comes from a separate call. Reference calls group records by reference-token length before batching in pairs, whereas fixed-cap calls batch pairs in original cohort order. Both use manual left padding. Their companions, padding widths, batch sizes, or row positions can therefore differ.

Re-scoring saved tokens/text and re-aggregating summaries yields no inconsistencies in 2,004,080 score fields or 91,968 aggregate checks across the 64 formal and four development runs. Formal trajectories contain 58/148,400 records with $S_{64}=1,R=0$: zero at baseline, 41 at intermediate checkpoints, and 17 among 47,704 final records. The separate pilot count is 1/5,760. Correctness differs between $R$ and $P_{64}$ on 120 formal records, including 27 final records; these counts are distinct from all token-prefix differences. Records repeat prompts across runs/checkpoints and overlapping full/subset New cohorts, rather than supplying independent training replicates.

A targeted GPU recheck restores the 13 final adapters associated with all 17 final discordant record contexts and includes 17 matched prefix-consistent controls. Matching prefers records correct under both historical interfaces and otherwise retains explicitly marked, incorrect but prefix-consistent controls. Crossing the original reference/fixed batch contexts and a singleton context with short/64-token horizons, twice per cell, gives 408 scored calls; 68 further calls use the original generation arguments without score collection. Both archived interfaces reproduce their saved focal outputs token-for-token (Table~\ref{tab:structure-score-audit}). With batch context fixed, changing the horizon changes no reference-length prefix in any of the 34 contexts. With horizon fixed, switching the original batch contexts changes the prefix for all 17 targets and no controls, consistently across both horizons and repetitions. All 204 repeated-call pairs agree on the full batch tokens, and same-output strict implies prefix exact throughout.

\begin{table}[t]
\centering\small
\caption{Archived-score consistency and targeted inference recheck. Offline field checks include 64 formal and four development runs; discrepancy rows below include only formal records. The GPU cohort contains all 17 final discrepant records and 17 matched prefix-consistent controls, not a random sample. Original-kwargs calls omit score collection. The layout result holds for both horizons and both repeats; repeated calls are not independent training samples.}
\label{tab:structure-score-audit}
\begin{tabular}{@{}p{0.60\linewidth}p{0.33\linewidth}@{}}
\toprule
Check & Result\\
\midrule
Scoring-field mismatches & 0 / 2,004,080\\
Aggregate-field mismatches & 0 / 91,968\\
Formal $S_{64}=1$, $R=0$: all checkpoints & 58 / 148,400 records\\
Formal $S_{64}=1$, $R=0$: final checkpoint & 17 / 47,704 records\\
\midrule
Final adapters / target records / controls & 13 / 17 / 17\\
Scored / original-kwargs calls & 408 / 68\\
Historical raw-token reproduction: scored & 136 / 136 calls\\
Historical raw-token reproduction: original kwargs & 68 / 68 calls\\
Prefix changes: fixed layout, changed horizon & 0 / 34 records\\
Prefix changes: fixed horizon, original layouts & 17 / 17 targets; 0 / 17 controls\\
Stable identical-input repeat pairs & 204 / 204 (full batch)\\
Same-output $S_{64}\Rightarrow P_{64}$ violations & 0\\
\bottomrule
\end{tabular}
\end{table}

The batch-context intervention reproduces these selected final discrepancies; it jointly changes co-batched examples, padding, batch size, or row placement. For Code/plain seed 63007, bank-B item 0048 has reference \texttt{135}: the original reference batch produces \texttt{A} at either horizon, while the fixed and singleton contexts produce the target prefix at either horizon and EOS under the 64-token cap. Its prefix matches fixed for seven targets and reference for ten. Stored first-divergence scores are processed scores from the recheck.
\subsection{Matched projection implementation}
\label{app:structure-projection}

Let $g_k$ be the record-mean old gradient for stratum $k$, with $n_k$ protected records. Aggregate protection uses $\bar g=\sum_k(n_k/48)g_k$; stratum-wise protection uses all four rows. Strata denote old capability categories, not network layers. The common Euclidean solver computes
\[
\Pi_G(v)=\arg\min_u \tfrac12\|u-v\|^2\quad\text{subject to }Gu\leq0.
\]
It normalizes nonzero constraint rows, forms the small Gram system in FP64, and enumerates active sets with a common KKT tolerance $10^{-9}\max(1,\|Gv\|_\infty)$ and a 10,000-set budget. Rank-deficient systems use the same pseudoinverse rule; solver failures are explicit. Row normalization preserves the halfspaces.

After global gradient clipping at norm 1, \emph{pre-Adam} projection takes the negative stored gradient as its candidate, writes back the negative projected vector as the optimizer gradient, and lets AdamW update its moments and parameters. It adds no post-Adam correction. \emph{Post-Adam} projection first performs AdamW, projects the resulting actual displacement, and writes the corrected parameters; the moments remain those produced by the incoming gradient. Thus the location intervention includes the ensuing optimizer-state trajectory.

Both locations check the vector after storage, against each normalized row, with tolerance $\max(10^{-7},10^{-3}|\widehat g_k^\top v|)$ and up to three storage corrections. The checked object is the negative stored gradient for pre and the stored parameter displacement for post. A passing pre check therefore does not certify its Adam displacement. The archive contains 8,192 training events; the four projection cells contribute 32 runs and 4,096 projection updates. No numerical failures are recorded in these updates or the 160 same-state branches. This matched implementation is distinct from the original I-GEM adaptation.

\subsection{Complete outcomes and statistical families}
\label{app:structure-outcomes}

Tables~\ref{tab:structure-outcomes}--\ref{tab:structure-bank-endpoints} report every group, interval, seed, and bank/interface endpoint. Legacy reference and independent strict changes equal their final accuracy minus 100 percentage points; Table~\ref{tab:structure-intervals} also reports New content gains. Code starts at zero on all three New generation metrics. Extraction also starts at zero on response-level measures, although its answer-token accuracy is already 90.04\%; after training, both methods reach 100\% content accuracy. The teacher-forced baseline therefore cannot substitute for generated-response acquisition.

\begingroup
\small\setlength{\tabcolsep}{3pt}
\begin{longtable}{llrrrrrrrc}
\caption{All ten groups at the final fixed budget. Accuracy columns are percentages: Legacy reference exact (L), independent-bank A/B strict-EOS exact (A/B), and complete New-audit reference/strict/content accuracy (R/S/C). F counts runs satisfying the full frozen retention contract, not valid or beneficial experiments; all evaluable runs are included. Code uses eight paired seeds; Natural and Extraction use four and are exploratory. Agg./Str. denote aggregate/stratum-wise constraints. All Legacy reference and independent strict baselines are 100\%.}\label{tab:structure-outcomes}\\
\toprule
Task & Method & $n$ & L & A & B & New R & New S & New C & F/$n$\\
\midrule
\endfirsthead
\toprule
Task & Method & $n$ & L & A & B & New R & New S & New C & F/$n$\\
\midrule
\endhead
\bottomrule
\endfoot
Code & Plain & 8 & 5.43 & 3.13 & 4.17 & 92.40 & 92.40 & 92.40 & 0/8 \\
Code & Agg./pre & 8 & 97.83 & 43.96 & 65.21 & 92.00 & 92.00 & 92.00 & 0/8 \\
Code & Agg./post & 8 & 7.07 & 3.13 & 4.17 & 93.20 & 93.20 & 93.20 & 0/8 \\
Code & Str./pre & 8 & 96.74 & 77.92 & 76.88 & 89.01 & 89.01 & 89.01 & 3/8 \\
Code & Str./post & 8 & 70.65 & 19.79 & 28.75 & 93.55 & 93.55 & 93.55 & 0/8 \\
Code & Replay10 & 8 & 97.83 & 65.21 & 82.71 & 93.80 & 93.80 & 93.80 & 1/8 \\
Natural & Plain & 4 & 47.83 & 99.17 & 99.58 & 94.43 & 94.43 & 94.43 & 0/4 \\
Natural & Replay10 & 4 & 100.00 & 90.42 & 90.42 & 93.68 & 93.68 & 93.68 & 3/4 \\
Extraction & Plain & 4 & 72.83 & 100.00 & 95.83 & 99.80 & 100.00 & 100.00 & 0/4 \\
Extraction & Replay10 & 4 & 94.57 & 82.08 & 68.33 & 99.61 & 99.61 & 100.00 & 0/4 \\
\end{longtable}
\endgroup

\begingroup
\small\setlength{\tabcolsep}{3pt}
\begin{longtable}{llrrrr}
\caption{Marginal 95\% seed $t$ intervals for all final groups (percentage points). $\Delta$Strict is the equal-weight A/B strict-EOS change from baseline; $\Delta$New is complete-audit content change. Intervals are unadjusted and untruncated, so a bound may extend beyond the attainable range. They condition on the fixed qualified records.}\label{tab:structure-intervals}\\
\toprule
Task & Method & Legacy Ref. & $\Delta$Strict A/B & New Content & $\Delta$New Content\\
\midrule
\endfirsthead
\toprule
Task & Method & Legacy Ref. & $\Delta$Strict A/B & New Content & $\Delta$New Content\\
\midrule
\endhead
\bottomrule
\endfoot
Code & Plain & $[2.86,8.01]$ & $[-101.20,-91.51]$ & $[90.16,94.64]$ & $[90.16,94.64]$ \\
Code & Agg./pre & $[95.08,100.57]$ & $[-70.11,-20.72]$ & $[89.73,94.27]$ & $[89.73,94.27]$ \\
Code & Agg./post & $[3.21,10.92]$ & $[-102.47,-90.23]$ & $[91.95,94.44]$ & $[91.95,94.44]$ \\
Code & Str./pre & $[92.50,100.97]$ & $[-45.87,0.66]$ & $[79.29,98.73]$ & $[79.29,98.73]$ \\
Code & Str./post & $[49.39,91.92]$ & $[-103.47,-47.99]$ & $[92.40,94.70]$ & $[92.40,94.70]$ \\
Code & Replay10 & $[95.08,100.57]$ & $[-38.35,-13.74]$ & $[92.22,95.39]$ & $[92.22,95.39]$ \\
Natural & Plain & $[9.11,86.55]$ & $[-1.29,0.04]$ & $[92.13,96.73]$ & $[9.89,14.48]$ \\
Natural & Replay10 & $[100.00,100.00]$ & $[-35.71,16.54]$ & $[92.40,94.95]$ & $[10.16,12.71]$ \\
Extraction & Plain & $[66.20,79.45]$ & $[-5.42,1.25]$ & $[100.00,100.00]$ & $[100.00,100.00]$ \\
Extraction & Replay10 & $[87.94,101.19]$ & $[-40.57,-9.02]$ & $[100.00,100.00]$ & $[100.00,100.00]$ \\
\end{longtable}
\endgroup

\begingroup
\small\setlength{\tabcolsep}{3pt}
\begin{longtable}{llrrrrrrc}
\caption{All 64 seed-level outcomes, including retention-contract violations. Accuracy columns are percentages; Strict averages independent banks A/B and New is complete-audit content accuracy. Loss AUC and AUC$_+$ use validation NLL and normalized New exposure. F=1 denotes satisfaction of the full frozen retention contract; F=0 denotes a contract violation, not an invalid experiment or absence of relative benefit. All runs passed New-acquisition and numerical checks and remain included. Seeds are paired within each task.}\label{tab:structure-seeds}\\
\toprule
Task & Method & Seed & Legacy Ref. & Strict A/B & New & NLL AUC & AUC$_+$ & F\\
\midrule
\endfirsthead
\toprule
Task & Method & Seed & Legacy Ref. & Strict A/B & New & NLL AUC & AUC$_+$ & F\\
\midrule
\endhead
\bottomrule
\endfoot
Code & Plain & 63001 & 8.70 & 15.83 & 94.97 & 0.3601 & 0.3601 & 0 \\
Code & Plain & 63002 & 0.00 & 0.00 & 87.77 & 1.3287 & 1.3287 & 0 \\
Code & Plain & 63003 & 4.35 & 0.00 & 88.78 & 1.2420 & 1.2420 & 0 \\
Code & Plain & 63004 & 4.35 & 0.00 & 93.63 & 0.9098 & 0.9098 & 0 \\
Code & Plain & 63005 & 8.70 & 0.00 & 94.64 & 0.7481 & 0.7481 & 0 \\
Code & Plain & 63006 & 4.35 & 0.00 & 93.80 & 1.0052 & 1.0052 & 0 \\
Code & Plain & 63007 & 4.35 & 7.50 & 92.96 & 0.2799 & 0.2799 & 0 \\
Code & Plain & 63008 & 8.70 & 5.83 & 92.63 & 0.5387 & 0.5387 & 0 \\
Code & Agg./pre & 63001 & 91.30 & 0.00 & 90.79 & -0.1781 & 0.0100 & 0 \\
Code & Agg./pre & 63002 & 100.00 & 58.33 & 88.27 & -0.1858 & 0.0211 & 0 \\
Code & Agg./pre & 63003 & 100.00 & 67.50 & 92.46 & -0.1683 & 0.0184 & 0 \\
Code & Agg./pre & 63004 & 95.65 & 33.33 & 87.94 & 0.0842 & 0.2586 & 0 \\
Code & Agg./pre & 63005 & 95.65 & 89.17 & 93.97 & -0.1558 & 0.0452 & 0 \\
Code & Agg./pre & 63006 & 100.00 & 38.33 & 93.47 & -0.1745 & 0.0424 & 0 \\
Code & Agg./pre & 63007 & 100.00 & 65.00 & 94.97 & -0.2086 & 0.0000 & 0 \\
Code & Agg./pre & 63008 & 100.00 & 85.00 & 94.14 & -0.2995 & 0.0082 & 0 \\
Code & Agg./post & 63001 & 4.35 & 0.00 & 95.14 & 0.2530 & 0.2530 & 0 \\
Code & Agg./post & 63002 & 4.35 & 0.00 & 91.12 & 0.5896 & 0.5896 & 0 \\
Code & Agg./post & 63003 & 8.70 & 0.00 & 92.46 & 0.1950 & 0.1950 & 0 \\
Code & Agg./post & 63004 & 8.70 & 0.00 & 95.31 & 0.5758 & 0.5758 & 0 \\
Code & Agg./post & 63005 & 4.35 & 0.00 & 94.14 & 0.2907 & 0.2907 & 0 \\
Code & Agg./post & 63006 & 17.39 & 6.67 & 92.46 & 0.1970 & 0.1970 & 0 \\
Code & Agg./post & 63007 & 4.35 & 20.83 & 92.46 & 0.1067 & 0.1067 & 0 \\
Code & Agg./post & 63008 & 4.35 & 1.67 & 92.46 & 0.2403 & 0.2403 & 0 \\
Code & Str./pre & 63001 & 100.00 & 86.67 & 93.30 & -0.3000 & 0.0000 & 0 \\
Code & Str./pre & 63002 & 100.00 & 100.00 & 94.47 & -0.2949 & 0.0011 & 1 \\
Code & Str./pre & 63003 & 91.30 & 90.00 & 95.14 & -0.3110 & 0.0000 & 0 \\
Code & Str./pre & 63004 & 95.65 & 60.00 & 92.63 & -0.3245 & 0.0000 & 0 \\
Code & Str./pre & 63005 & 100.00 & 99.17 & 86.26 & -0.3136 & 0.0034 & 1 \\
Code & Str./pre & 63006 & 100.00 & 65.00 & 95.31 & -0.3053 & 0.0000 & 0 \\
Code & Str./pre & 63007 & 86.96 & 20.00 & 61.14 & -0.2992 & 0.0000 & 0 \\
Code & Str./pre & 63008 & 100.00 & 98.33 & 93.80 & -0.3263 & 0.0000 & 1 \\
Code & Str./post & 63001 & 82.61 & 31.67 & 95.81 & -0.0005 & 0.0040 & 0 \\
Code & Str./post & 63002 & 78.26 & 7.50 & 92.13 & -0.0076 & 0.0183 & 0 \\
Code & Str./post & 63003 & 86.96 & 95.83 & 91.79 & -0.0212 & 0.0033 & 0 \\
Code & Str./post & 63004 & 78.26 & 0.00 & 94.97 & 0.0635 & 0.0679 & 0 \\
Code & Str./post & 63005 & 73.91 & 0.00 & 93.47 & 0.0071 & 0.0156 & 0 \\
Code & Str./post & 63006 & 8.70 & 0.00 & 93.63 & 0.1077 & 0.1077 & 0 \\
Code & Str./post & 63007 & 82.61 & 44.17 & 93.97 & 0.0032 & 0.0145 & 0 \\
Code & Str./post & 63008 & 73.91 & 15.00 & 92.63 & -0.0213 & 0.0029 & 0 \\
Code & Replay10 & 63001 & 100.00 & 77.50 & 94.97 & -0.5700 & 0.0000 & 0 \\
Code & Replay10 & 63002 & 100.00 & 95.83 & 94.64 & -0.5760 & 0.0000 & 1 \\
Code & Replay10 & 63003 & 95.65 & 82.50 & 91.79 & -0.5675 & 0.0000 & 0 \\
Code & Replay10 & 63004 & 95.65 & 61.67 & 95.31 & -0.5525 & 0.0000 & 0 \\
Code & Replay10 & 63005 & 100.00 & 78.33 & 94.64 & -0.5492 & 0.0000 & 0 \\
Code & Replay10 & 63006 & 100.00 & 81.67 & 94.81 & -0.5838 & 0.0000 & 0 \\
Code & Replay10 & 63007 & 91.30 & 65.83 & 94.30 & -0.5711 & 0.0000 & 0 \\
Code & Replay10 & 63008 & 100.00 & 48.33 & 89.95 & -0.5471 & 0.0000 & 0 \\
Natural & Plain & 63001 & 17.39 & 99.17 & 93.30 & 0.0895 & 0.0895 & 0 \\
Natural & Plain & 63002 & 39.13 & 100.00 & 96.31 & 0.0224 & 0.0224 & 0 \\
Natural & Plain & 63003 & 69.57 & 99.17 & 93.30 & 0.0213 & 0.0226 & 0 \\
Natural & Plain & 63004 & 65.22 & 99.17 & 94.81 & 0.0331 & 0.0341 & 0 \\
Natural & Replay10 & 63001 & 100.00 & 97.50 & 94.64 & -0.5711 & 0.0000 & 1 \\
Natural & Replay10 & 63002 & 100.00 & 98.33 & 92.80 & -0.6001 & 0.0000 & 1 \\
Natural & Replay10 & 63003 & 100.00 & 65.83 & 93.30 & -0.5778 & 0.0000 & 0 \\
Natural & Replay10 & 63004 & 100.00 & 100.00 & 93.97 & -0.5403 & 0.0000 & 1 \\
Extraction & Plain & 63001 & 69.57 & 100.00 & 100.00 & 0.1036 & 0.1036 & 0 \\
Extraction & Plain & 63002 & 78.26 & 98.33 & 100.00 & 0.0210 & 0.0213 & 0 \\
Extraction & Plain & 63003 & 69.57 & 98.33 & 100.00 & 0.1055 & 0.1055 & 0 \\
Extraction & Plain & 63004 & 73.91 & 95.00 & 100.00 & 0.0842 & 0.0842 & 0 \\
Extraction & Replay10 & 63001 & 95.65 & 85.00 & 100.00 & -0.4948 & 0.0000 & 0 \\
Extraction & Replay10 & 63002 & 91.30 & 82.50 & 100.00 & -0.5818 & 0.0000 & 0 \\
Extraction & Replay10 & 63003 & 91.30 & 66.67 & 100.00 & -0.5745 & 0.0000 & 0 \\
Extraction & Replay10 & 63004 & 100.00 & 66.67 & 100.00 & -0.4976 & 0.0000 & 0 \\
\end{longtable}
\endgroup

\begingroup
\small\setlength{\tabcolsep}{3pt}
\begin{longtable}{llrrrrrrrrr}
\caption{Separate final bank endpoints: reference-length exact, fixed-cap content, and fixed-cap strict-EOS exact (percent). Ref. and Strict use separate generation calls with different batch layouts, so their values need not be ordered; Appendix~\ref{app:structure-score-audit} audits cross-call and same-output consistency. The Legacy strict baseline is 0\%, despite its 100\% reference-length baseline; it is therefore excluded from the qualified strict-bank mean. A and B each start at 100\% for all three reported measures. These fixed banks are never replayed.}\label{tab:structure-bank-endpoints}\\
\toprule
Task & Method & \multicolumn{3}{c}{Legacy} & \multicolumn{3}{c}{Bank A} & \multicolumn{3}{c}{Bank B}\\
 & & Ref. & Content & Strict & Ref. & Content & Strict & Ref. & Content & Strict\\
\midrule
\endfirsthead
\toprule
Task & Method & \multicolumn{3}{c}{Legacy} & \multicolumn{3}{c}{Bank A} & \multicolumn{3}{c}{Bank B}\\
 & & Ref. & Content & Strict & Ref. & Content & Strict & Ref. & Content & Strict\\
\midrule
\endhead
\bottomrule
\endfoot
Code & Plain & 5.43 & 5.43 & 0.00 & 3.13 & 3.13 & 3.13 & 3.96 & 6.46 & 4.17 \\
Code & Agg./pre & 97.83 & 97.28 & 88.04 & 43.75 & 43.96 & 43.96 & 65.21 & 65.21 & 65.21 \\
Code & Agg./post & 7.07 & 7.61 & 2.72 & 3.13 & 3.13 & 3.13 & 3.75 & 6.25 & 4.17 \\
Code & Str./pre & 96.74 & 96.74 & 89.67 & 77.92 & 81.25 & 77.92 & 76.46 & 80.63 & 76.88 \\
Code & Str./post & 70.65 & 71.20 & 65.76 & 20.00 & 19.79 & 19.79 & 28.96 & 31.46 & 28.75 \\
Code & Replay10 & 97.83 & 98.37 & 80.98 & 65.42 & 84.38 & 65.21 & 86.04 & 92.71 & 82.71 \\
Natural & Plain & 47.83 & 51.09 & 40.22 & 99.17 & 99.17 & 99.17 & 100.00 & 99.58 & 99.58 \\
Natural & Replay10 & 100.00 & 100.00 & 86.96 & 90.42 & 98.33 & 90.42 & 95.00 & 92.50 & 90.42 \\
Extraction & Plain & 72.83 & 72.83 & 64.13 & 100.00 & 100.00 & 100.00 & 96.25 & 96.25 & 95.83 \\
Extraction & Replay10 & 94.57 & 95.65 & 85.87 & 82.08 & 99.58 & 82.08 & 68.33 & 86.25 & 68.33 \\
\end{longtable}
\endgroup

For each Code seed, write $A,B,C,D$ for aggregate/pre, aggregate/post, strata/pre, strata/post. The registered effects are
\[
G=\tfrac12[(C-A)+(D-B)],\qquad
P=\tfrac12[(B-A)+(D-C)],\qquad
I=(D-B)-(C-A).
\]
The interaction is the full difference-in-differences. Exact two-sided sign flips enumerate $2^8$ assignments; Holm correction covers these three Legacy-reference contrasts. Intervals are $\bar d\pm t_{.975,n-1}s_d/\sqrt n$, unadjusted and untruncated, conditional on the fixed records. The Legacy stratum-wise simple effect is $-1.09$ points before Adam and $+63.59$ after Adam. Independent-bank reference and strict interactions are $-1.67$ ($[-42.19,38.86]$) and $-2.19$ points ($[-42.32,37.94]$), respectively (Table~\ref{tab:structure-contrasts}). Their post-minus-pre effects retain the Legacy direction. These post-hoc results locate an endpoint-dependent, accuracy-scale interaction rather than a difference explained simply by requiring EOS. Bank content and prompt interface still vary together; near-ceiling pre-Adam accuracy also limits observable granularity gains. The registered Legacy inferential family remains separate.

\begingroup
\small\setlength{\tabcolsep}{3pt}
\begin{longtable}{llrrc}
\caption{Factorial effects in percentage points. Legacy inference uses the three registered paired contrasts and Holm correction within that family. Effects on equal-weight independent-bank strict-EOS, reference-length, and frozen content accuracy are post-hoc descriptive; no Legacy $p$-value is transferred to them. Each independent-bank score is averaged within seed before forming the paired contrasts. Intervals are unadjusted 95\% seed $t$ intervals.}\label{tab:structure-contrasts}\\
\toprule
Endpoint & Contrast & Estimate & 95\% CI & Holm $p$\\
\midrule
\endfirsthead
\toprule
Endpoint & Contrast & Estimate & 95\% CI & Holm $p$\\
\midrule
\endhead
\bottomrule
\endfoot
Legacy Ref. & Granularity $G$ & 31.25 & $[18.92,43.58]$ & 0.03125 \\
Legacy Ref. & Post minus pre $P$ & -58.42 & $[-68.82,-48.03]$ & 0.02344 \\
Legacy Ref. & Interaction $I$ & 64.67 & $[38.72,90.63]$ & 0.03125 \\
Independent Strict & Granularity $G$ & 21.72 & $[0.68,42.75]$ & -- \\
Independent Strict & Interaction $I$ & -2.19 & $[-42.32,37.94]$ & -- \\
Independent Strict & Post minus pre $P$ & -52.03 & $[-76.84,-27.22]$ & -- \\
\midrule
Independent Ref. & Granularity $G$ & 21.88 & $[0.76,42.99]$ & -- \\
Independent Ref. & Post minus pre $P$ & -51.88 & $[-76.86,-26.89]$ & -- \\
Independent Ref. & Interaction $I$ & -1.67 & $[-42.19,38.86]$ & -- \\
Independent Content & Granularity $G$ & 23.65 & $[1.47,45.82]$ & -- \\
Independent Content & Post minus pre $P$ & -52.60 & $[-78.90,-26.30]$ & -- \\
Independent Content & Interaction $I$ & -5.42 & $[-46.44,35.61]$ & -- \\
\end{longtable}
\endgroup

Across the six Code methods, three of 1,104 final Legacy records differ in $R$ versus $P_{64}$ correctness. The same-output sensitivity preserves the direction and similar magnitude of all three factorial effects, including an interaction of $64.13$ points (CI $[38.15,90.11]$; Table~\ref{tab:structure-prefix-sensitivity}). This post-hoc check leaves the registered endpoint and its Holm family unchanged.

\Needspace{12\baselineskip}
\begingroup
\small\setlength{\tabcolsep}{3pt}
\begin{longtable}{lrr}
\caption{Post-hoc same-output prefix sensitivity on the final Code Legacy audit, in percentage points. $P_{64}$ is derived from the archived 64-token call; $R$ uses the separately generated reference-length output. The original registered endpoint and inferential family remain unchanged. Intervals are unadjusted 95\% paired-seed $t$ intervals across eight seeds. The interaction is the full difference-in-differences, not a half-scaled effect.}\label{tab:structure-prefix-sensitivity}\\
\toprule
Effect & Registered $R$ & Same-output $P_{64}$ [95\% CI]\\
\midrule
Granularity $G$ & $+31.25$ & $+31.52\ [19.08,43.96]$\\
Post minus pre $P$ & $-58.42$ & $-57.61\ [-68.43,-46.79]$\\
Interaction $I$ & $+64.67$ & $+64.13\ [38.15,90.11]$\\
\bottomrule
\end{longtable}
\endgroup

\Needspace{17\baselineskip}
\begingroup
\small\setlength{\tabcolsep}{3pt}
\begin{longtable}{lrrrll}
\caption{Replay minus plain, paired within task. Old effects use the equal-weight independent-bank strict-EOS change; New effects use the full-audit content endpoint. Effects and unadjusted 95\% seed $t$ intervals are in percentage points. Only Code receives the supporting exact sign-flip test and New NI assessment; the four-seed tasks are descriptive.}\label{tab:structure-replay-contrasts}\\
\toprule
Task & $n$ & $\Delta$Old Strict [95\% CI] & $\Delta$New Content [95\% CI] & Old exact $p$ & New NI\\
\midrule
\endfirsthead
\toprule
Task & $n$ & $\Delta$Old Strict [95\% CI] & $\Delta$New Content [95\% CI] & Old exact $p$ & New NI\\
\midrule
\endhead
\bottomrule
\endfoot
Code & 8 & \shortstack{$70.31$\\$[55.97,84.66]$} & \shortstack{$1.40$\\$[-0.90,3.71]$} & .0078125 & Established \\
Natural & 4 & \shortstack{$-8.96$\\$[-34.88,16.96]$} & \shortstack{$-0.75$\\$[-4.01,2.51]$} & -- & Not assessed \\
Extraction & 4 & \shortstack{$-22.71$\\$[-36.29,-9.13]$} & \shortstack{$0.00$\\$[0.00,0.00]$} & -- & Not assessed \\
\end{longtable}
\endgroup

The eight-seed Code replay contrast on independent strict retention is supporting; all eight paired effects are positive. New NI separately compares full-audit content differences with $-.03$ using the full-precision lower 95\% paired bound. Code establishes this supporting criterion. Natural and Extraction are four-seed descriptive comparisons; confirmatory NI is not assessed. Their reverse strict-bank effects coexist with improved Legacy reference exact: $+52.17$ points for Natural and $+21.74$ for Extraction. Within bank A/B, strict replay-minus-plain effects are $-8.75/-9.17$ points for Natural and $-17.92/-27.50$ for Extraction; content effects are $-.83/-7.08$ and $-.42/-10.00$ points. These observations locate the scope of retention gains rather than establishing a task-independent method ranking.

\subsection{Behavioral paths, positive-part exposure, and contract violations}
\label{app:structure-paths}

NLL is observed at updates $0,16,\ldots,128$; generation is observed at $0,16,32,64,96,128$. Signed and positive-part areas use the same piecewise-linear definition as Appendix~\ref{app:positive-exposure}, integrating within each seed over normalized New exposure and splitting at zero crossings. Generation degradation uses baseline accuracy minus current accuracy. Maximum degradation and violation flags use observed points only. The loss flag tests the .20 NLL budget; the generation flag tests a .10 accuracy drop. The reported fractions include baseline in their nine- or six-observation denominator. They are metric-specific sampled violations, not the entire joint endpoint contract.

\begingroup
\small\setlength{\tabcolsep}{3pt}
\begin{longtable}{llrrrc}
\caption{Validation-loss paths for all groups. AUCs are nats/token integrated over normalized New exposure; $D_{\max}$ is the mean per-run maximum baseline-centered NLL on updates $0,16,\ldots,128$. Brackets give unadjusted seed 95\% $t$ intervals. $V$ is the mean fraction of those nine observations exceeding the declared .20 NLL budget, including baseline in the denominator.}\label{tab:structure-loss-paths}\\
\toprule
Task & Method & Signed AUC [95\% CI] & AUC$_+$ [95\% CI] & $D_{\max}$ & $V$\\
\midrule
\endfirsthead
\toprule
Task & Method & Signed AUC [95\% CI] & AUC$_+$ [95\% CI] & $D_{\max}$ & $V$\\
\midrule
\endhead
\bottomrule
\endfoot
Code & Plain & $0.802\ [0.476,1.128]$ & $0.802\ [0.476,1.128]$ & 1.437 & 0.819 \\
Code & Agg./pre & $-0.161\ [-0.252,-0.070]$ & $0.050\ [-0.021,0.122]$ & 0.316 & 0.056 \\
Code & Agg./post & $0.306\ [0.156,0.456]$ & $0.306\ [0.156,0.456]$ & 0.563 & 0.597 \\
Code & Str./pre & $-0.309\ [-0.319,-0.300]$ & $0.001\ [-0.000,0.002]$ & 0.008 & 0.000 \\
Code & Str./post & $0.016\ [-0.022,0.054]$ & $0.029\ [-0.003,0.061]$ & 0.071 & 0.000 \\
Code & Replay10 & $-0.565\ [-0.576,-0.553]$ & $0.000\ [0.000,0.000]$ & 0.000 & 0.000 \\
Natural & Plain & $0.042\ [-0.010,0.093]$ & $0.042\ [-0.009,0.093]$ & 0.074 & 0.000 \\
Natural & Replay10 & $-0.572\ [-0.612,-0.533]$ & $0.000\ [0.000,0.000]$ & 0.000 & 0.000 \\
Extraction & Plain & $0.079\ [0.016,0.141]$ & $0.079\ [0.016,0.141]$ & 0.128 & 0.028 \\
Extraction & Replay10 & $-0.537\ [-0.613,-0.462]$ & $0.000\ [0.000,0.000]$ & 0.000 & 0.000 \\
\end{longtable}
\endgroup

\begingroup
\small\setlength{\tabcolsep}{3pt}
\begin{longtable}{llrrr}
\caption{Sampled behavioral degradation for every group and bank. Each cell gives (normalized degradation area, maximum observed degradation, observed violation fraction), averaged over seeds. All quantities are proportions. Legacy uses reference exact; A/B use strict EOS. Violations mean a drop greater than .10 at the six recorded generation checkpoints, including baseline. Integration occurs within seed before averaging.}\label{tab:structure-behavior-paths}\\
\toprule
Task & Method & Legacy & Bank A & Bank B\\
\midrule
\endfirsthead
\toprule
Task & Method & Legacy & Bank A & Bank B\\
\midrule
\endhead
\bottomrule
\endfoot
Code & Plain & $(0.840,0.957,0.833)$ & $(0.816,0.985,0.792)$ & $(0.758,0.973,0.813)$ \\
Code & Agg./pre & $(0.424,0.875,0.563)$ & $(0.636,0.927,0.771)$ & $(0.573,0.850,0.750)$ \\
Code & Agg./post & $(0.826,0.957,0.833)$ & $(0.803,1.000,0.771)$ & $(0.750,0.996,0.792)$ \\
Code & Str./pre & $(0.088,0.505,0.208)$ & $(0.273,0.831,0.438)$ & $(0.281,0.642,0.542)$ \\
Code & Str./post & $(0.375,0.625,0.813)$ & $(0.743,0.985,0.750)$ & $(0.638,0.877,0.792)$ \\
Code & Replay10 & $(0.049,0.120,0.146)$ & $(0.213,0.473,0.417)$ & $(0.135,0.277,0.417)$ \\
Natural & Plain & $(0.327,0.533,0.833)$ & $(0.007,0.008,0.000)$ & $(0.008,0.025,0.000)$ \\
Natural & Replay10 & $(0.033,0.087,0.042)$ & $(0.122,0.258,0.292)$ & $(0.115,0.296,0.333)$ \\
Extraction & Plain & $(0.202,0.293,0.625)$ & $(0.002,0.008,0.000)$ & $(0.035,0.071,0.000)$ \\
Extraction & Replay10 & $(0.035,0.087,0.000)$ & $(0.158,0.204,0.417)$ & $(0.214,0.325,0.625)$ \\
\end{longtable}
\endgroup

Zero validation-NLL AUC$_+$ coexists with independent strict-bank loss under replay: final A/B losses are 26.04, 9.58, and 24.79 points for Code, Natural, and Extraction. The NLL and behavioral audits protect different measured objects. Consecutive-observation intervals retain the grid definition: for Code replay seed 63001, Legacy violations occupy updates 16--32 (two observations), bank A has update 32 and updates 96--128, and bank B has updates 32 and 128 separately. These intervals do not assert violation at every intervening update. All seed-specific intervals and pointwise values are retained in the machine-readable archive. Figures~\ref{fig:structure-allpaths}--\ref{fig:structure-newpaths} show pointwise 95\% seed $t$ bands within the 0--100\% plotting viewport; the plotting data retain untruncated limits. Bands are not simultaneous intervals.

\begin{figure}[htbp]
\centering
\includegraphics[width=\linewidth]{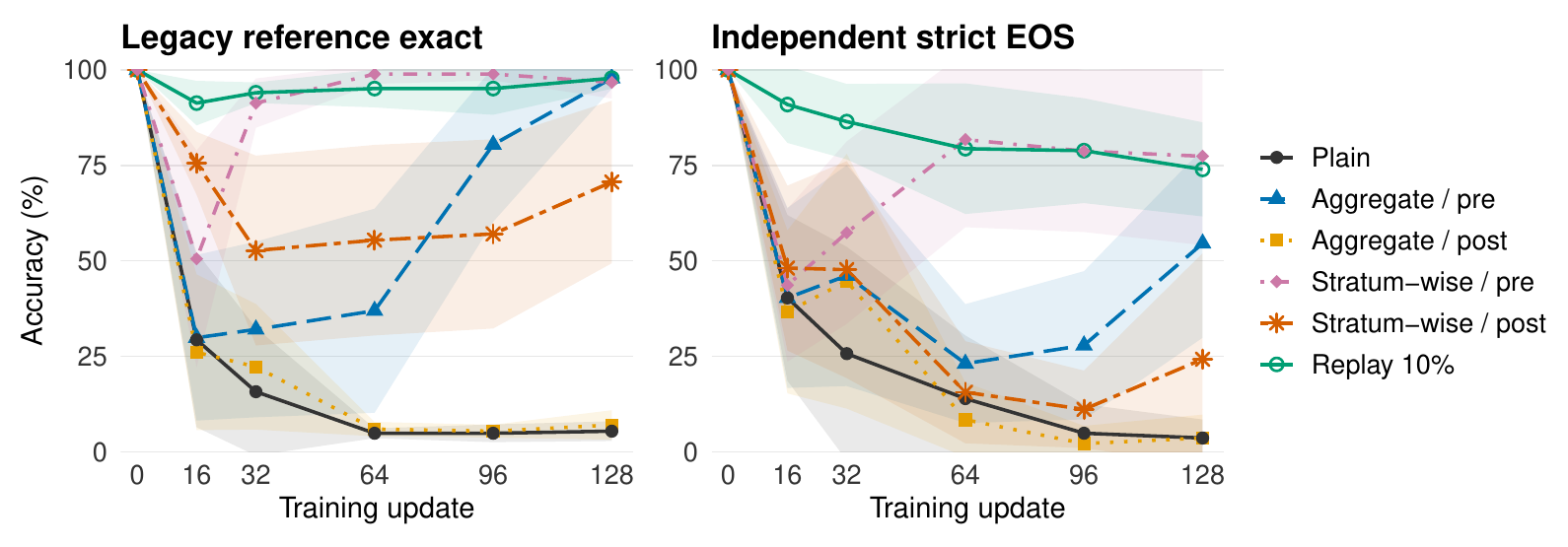}
\caption{All six Code methods on the Legacy reference and equal-weight independent strict-EOS endpoints. Points are the six fixed generation observations, summarized across eight paired seeds; lines join measured checkpoints. Endpoint agreement does not imply agreement along the sampled path.}
\label{fig:structure-allpaths}
\end{figure}

\begin{figure}[htbp]
\centering
\includegraphics[width=\linewidth]{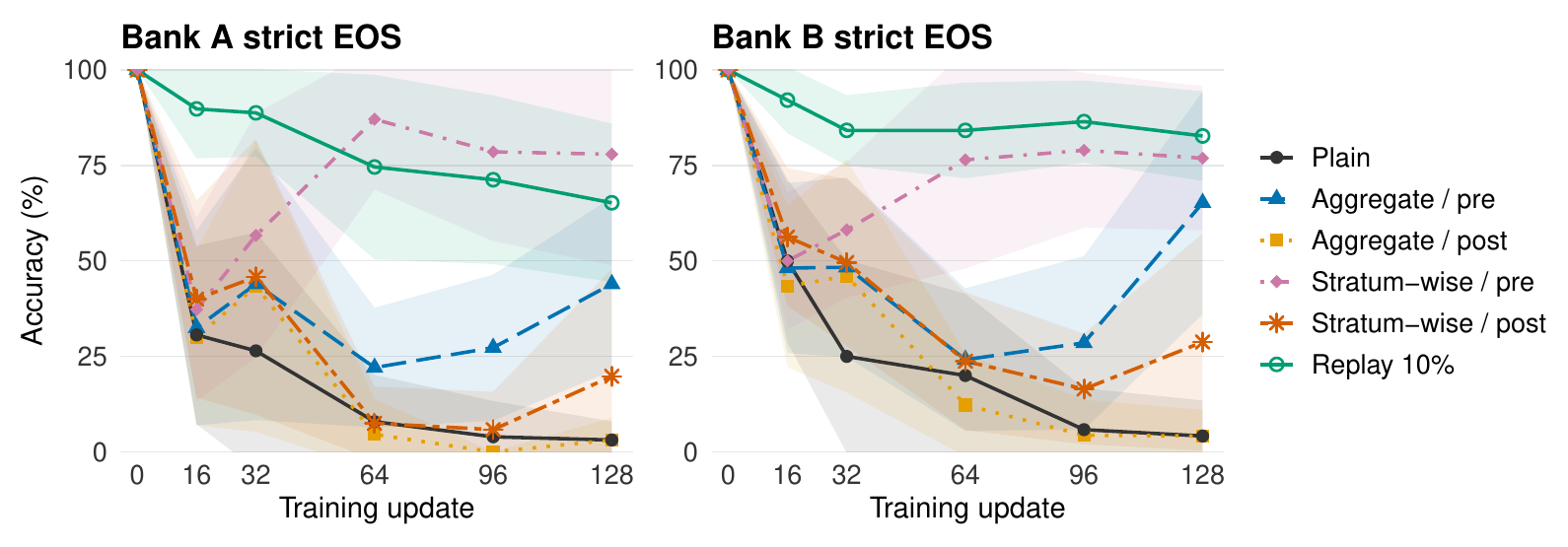}
\caption{Code paths on independent banks A and B separately, retaining all six methods and eight seeds. Each bank contains 60 fixed baseline-qualified items; bank differences do not add independent training replicates.}
\label{fig:structure-codebanks}
\end{figure}

\begin{figure}[htbp]
\centering
\includegraphics[width=\linewidth]{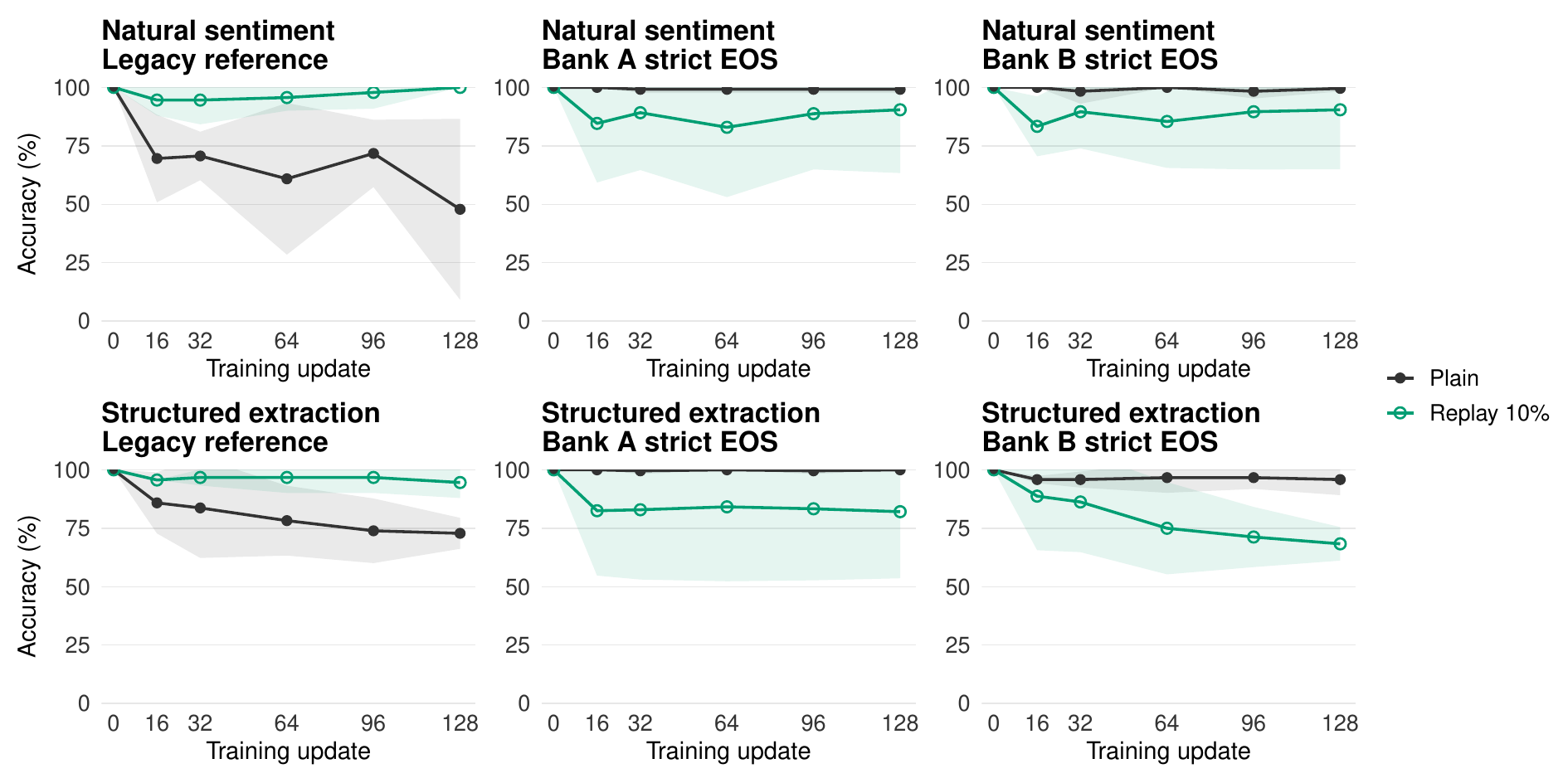}
\caption{Separate bank trajectories for Plain and Replay10 on Natural and Extraction, each with four exploratory seeds. The observed paths show the bank dependence of retention under these additional tasks; Code appears separately in Figures~\ref{fig:structure-allpaths} and~\ref{fig:structure-codebanks}.}
\label{fig:structure-bankpaths}
\end{figure}

\begin{figure}[htbp]
\centering
\includegraphics[width=\linewidth]{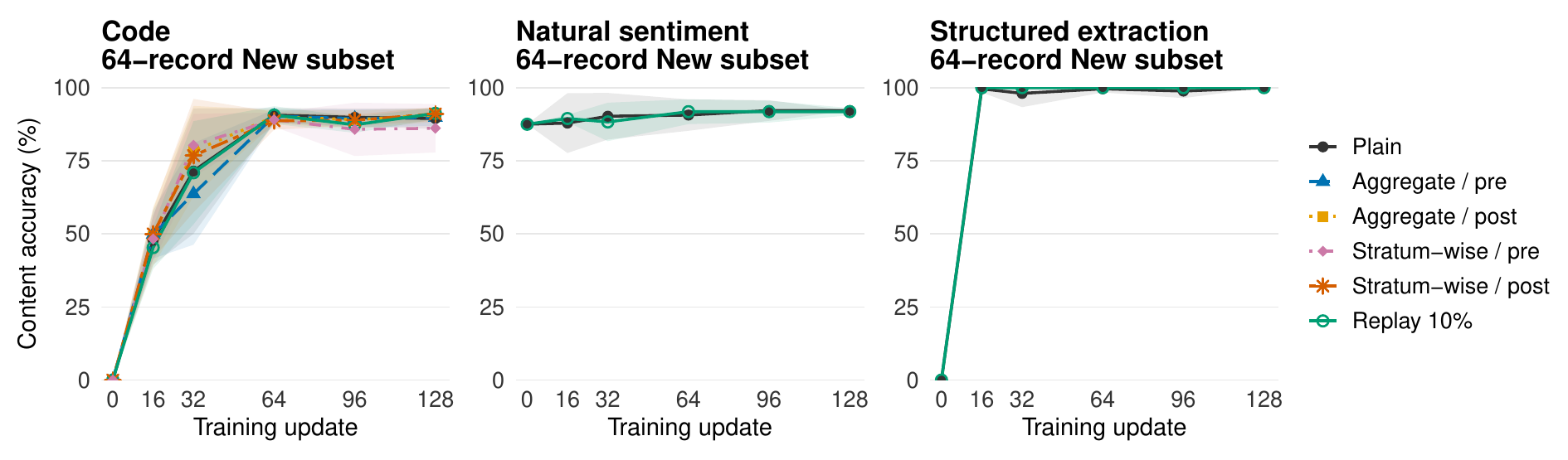}
\caption{New-generation trajectories on the fixed 64-record monitoring subset. These sampled paths are separate from the full 597-record sentiment and 128-record extraction audits used for the endpoint tables and New NI. Natural begins with substantial task performance; Extraction begins at zero response-level content accuracy.}
\label{fig:structure-newpaths}
\end{figure}
\FloatBarrier

\subsection{Same-state interventions and prospective diagnostics}
\label{app:structure-diagnostics}

At updates 8/32/64/96/128 of each Code/plain run, four branches share the pre-update parameters, Adam moments/step, batch, stored gradient, RNG, and training mode. Each branch restores this state, applies one policy, and is discarded afterward; the four resulting Adam candidates need not coincide. Diagnostic losses are measured at displacement scales $1,1/2,1/4$, with equal weight for each diagnostic stratum. Five branch effects are averaged within each seed before the eight-seed summary. The 40 states and 160 branches are not additional independent training seeds.

\begingroup
\small\setlength{\tabcolsep}{3pt}
\begin{longtable}{lrrrrrr}
\caption{Same-state diagnostic loss under the four branches and three displacement scales. Diagnostic strata receive equal weight and five states are first averaged within each of eight Code/plain seeds. $\Delta E$ is observed diagnostic NLL change, $g_E^\top u$ the linear term, and $R$ their difference; the final column averages absolute residuals before cross-seed averaging. Residuals include numerical error. The unscaled branches all pass their own declared numerical checks.}\label{tab:structure-local}\\
\toprule
Branch & Scale & $\Delta E$ & 95\% CI for $\Delta E$ & $g_E^\top u$ & $R$ & $|R|$\\
\midrule
\endfirsthead
\toprule
Branch & Scale & $\Delta E$ & 95\% CI for $\Delta E$ & $g_E^\top u$ & $R$ & $|R|$\\
\midrule
\endhead
\bottomrule
\endfoot
Agg./pre & 1.0 & -0.01233 & $[-0.02597,0.00132]$ & -0.01415 & 0.00182 & 0.01118 \\
Agg./pre & 0.5 & -0.00584 & $[-0.01310,0.00143]$ & -0.00708 & 0.00124 & 0.00946 \\
Agg./pre & 0.25 & -0.00352 & $[-0.00814,0.00110]$ & -0.00354 & 0.00002 & 0.00927 \\
Agg./post & 1.0 & -0.01510 & $[-0.03672,0.00651]$ & -0.01796 & 0.00286 & 0.01114 \\
Agg./post & 0.5 & -0.00861 & $[-0.01957,0.00234]$ & -0.00898 & 0.00037 & 0.00899 \\
Agg./post & 0.25 & -0.00502 & $[-0.01170,0.00165]$ & -0.00449 & -0.00053 & 0.00810 \\
Str./pre & 1.0 & -0.01780 & $[-0.03473,-0.00087]$ & -0.01875 & 0.00095 & 0.01145 \\
Str./pre & 0.5 & -0.00983 & $[-0.01837,-0.00129]$ & -0.00938 & -0.00045 & 0.00909 \\
Str./pre & 0.25 & -0.00406 & $[-0.00812,-0.00000]$ & -0.00469 & 0.00063 & 0.00841 \\
Str./post & 1.0 & -0.01607 & $[-0.03553,0.00340]$ & -0.01781 & 0.00175 & 0.01054 \\
Str./post & 0.5 & -0.00987 & $[-0.01860,-0.00114]$ & -0.00891 & -0.00096 & 0.00881 \\
Str./post & 0.25 & -0.00532 & $[-0.01086,0.00022]$ & -0.00445 & -0.00087 & 0.00829 \\
\end{longtable}
\endgroup

\begingroup
\small\setlength{\tabcolsep}{3pt}
\begin{longtable}{lrr}
\caption{Paired same-state contrasts of diagnostic-loss change. These supporting contrasts average five branch points within seed, then compare eight seeds. Negative values favor the first branch. The interaction is the unscaled difference-in-differences.}\label{tab:structure-local-contrasts}\\
\toprule
Contrast & Mean & Unadjusted 95\% seed CI\\
\midrule
\endfirsthead
\toprule
Contrast & Mean & Unadjusted 95\% seed CI\\
\midrule
\endhead
\bottomrule
\endfoot
Aggregate: post $-$ pre & -0.002779 & $[-0.014981,0.009423]$ \\
Interaction & 0.004515 & $[-0.001298,0.010328]$ \\
Post: strata $-$ aggregate & -0.000960 & $[-0.004695,0.002774]$ \\
Stratum-wise: post $-$ pre & 0.001736 & $[-0.008473,0.011945]$ \\
Pre: strata $-$ aggregate & -0.005475 & $[-0.009880,-0.001071]$ \\
\end{longtable}
\endgroup

Recorded repeat-forward diagnostic differences are zero at these branch states. Finite-step residuals still include gradient, displacement, and evaluation numerics and cannot be assigned entirely to curvature. The local pre stratum-minus-aggregate effect is $-.005475$ (CI $[-.009880,-.001071]$), whereas the local interaction is $+.004515$ ($[-.001298,.010328]$). This does not establish a one-step interaction explaining the long-trajectory generation interaction or a causal mediation fraction. Per-stratum branch losses and the scaled displacements remain available alongside the aggregate summaries.

\paragraph{Frozen forecasts.}
The score $C_t=\max(0,\max_k g_{k,t}^{\top}v_t^{\rm New})$ uses fresh protection gradients and a previewed, clipped New-only Adam candidate before replay or projection. Eight nonoverlapping 16-update windows begin at states $0,16,\ldots,112$. Within each seed, Spearman correlation compares the pre-update score with the following validation-NLL change along the actually executed training policy, including replay or projection where assigned, not a New-only counterfactual future. Baselines are the aggregate candidate dot, negative raw-gradient cosine, and current NLL on the same validation cohort. Constant quantities are non-estimable, not assigned zero. No threshold is fitted and the predictor does not control checkpoint selection.

\begingroup
\small\setlength{\tabcolsep}{3pt}
\begin{longtable}{llrrrr}
\caption{Complete prospective ranking summaries. Each entry is a mean across training seeds with an unadjusted 95\% $t$ interval; each seed contributes eight fixed 16-update forecast windows. $\Delta$ columns compare its Spearman correlation for $C$ with the named baseline on the same windows. Extraction is the frozen task holdout. These are descriptive associations with future validation NLL, not generated-behavior guarantees.}\label{tab:structure-predictors}\\
\toprule
Task & Method & $\rho(C)$ & $\Delta$Aggregate & $\Delta$Raw conflict & $\Delta$Current NLL\\
\midrule
\endfirsthead
\toprule
Task & Method & $\rho(C)$ & $\Delta$Aggregate & $\Delta$Raw conflict & $\Delta$Current NLL\\
\midrule
\endhead
\bottomrule
\endfoot
Code & Plain & \shortstack{$0.308$\\$[-0.099,0.715]$} & \shortstack{$-0.011$\\$[-0.172,0.151]$} & \shortstack{$0.058$\\$[-0.135,0.250]$} & \shortstack{$0.799$\\$[0.276,1.322]$} \\[3pt]
Code & Agg./pre & \shortstack{$-0.061$\\$[-0.492,0.370]$} & \shortstack{$-0.004$\\$[-0.275,0.267]$} & \shortstack{$0.118$\\$[-0.443,0.678]$} & \shortstack{$0.141$\\$[-0.307,0.589]$} \\[3pt]
Code & Agg./post & \shortstack{$0.107$\\$[-0.222,0.437]$} & \shortstack{$0.036$\\$[-0.184,0.255]$} & \shortstack{$0.202$\\$[-0.106,0.511]$} & \shortstack{$0.705$\\$[0.303,1.108]$} \\[3pt]
Code & Str./pre & \shortstack{$-0.146$\\$[-0.448,0.157]$} & \shortstack{$-0.792$\\$[-1.102,-0.481]$} & \shortstack{$-0.143$\\$[-0.565,0.279]$} & \shortstack{$0.583$\\$[0.346,0.821]$} \\[3pt]
Code & Str./post & \shortstack{$-0.223$\\$[-0.586,0.139]$} & \shortstack{$-0.018$\\$[-0.158,0.122]$} & \shortstack{$0.033$\\$[-0.417,0.482]$} & \shortstack{$0.137$\\$[-0.247,0.521]$} \\[3pt]
Code & Replay10 & \shortstack{$0.155$\\$[-0.170,0.479]$} & \shortstack{$-0.354$\\$[-0.609,-0.099]$} & \shortstack{$-0.131$\\$[-0.535,0.273]$} & \shortstack{$1.006$\\$[0.679,1.333]$} \\[3pt]
Natural & Plain & \shortstack{$0.321$\\$[-0.711,1.354]$} & \shortstack{$0.208$\\$[-0.363,0.780]$} & \shortstack{$0.054$\\$[-0.868,0.975]$} & \shortstack{$0.774$\\$[-0.737,2.284]$} \\[3pt]
Natural & Replay10 & \shortstack{$0.060$\\$[-0.238,0.357]$} & \shortstack{$-0.161$\\$[-0.355,0.033]$} & \shortstack{$0.179$\\$[-0.426,0.783]$} & \shortstack{$0.839$\\$[0.714,0.964]$} \\[3pt]
Extraction & Plain & \shortstack{$0.452$\\$[-0.512,1.417]$} & \shortstack{$0.054$\\$[-0.265,0.373]$} & \shortstack{$0.488$\\$[-0.851,1.827]$} & \shortstack{$0.982$\\$[-0.012,1.976]$} \\[3pt]
Extraction & Replay10 & \shortstack{$-0.381$\\$[-0.939,0.177]$} & \shortstack{$-0.298$\\$[-0.736,0.140]$} & \shortstack{$-0.690$\\$[-0.986,-0.395]$} & \shortstack{$0.446$\\$[-0.106,0.999]$} \\[3pt]
\end{longtable}
\endgroup

Extraction is the frozen task holdout. Its mean $C$ correlation is $+.4524$ for Plain and $-.3810$ for Replay10. Paired improvements over the aggregate-dot baseline have intervals $[-.2654,.3726]$ and $[-.7356,.1403]$, respectively. Replay's difference from the raw-conflict baseline is $-.6905$ ($[-.9855,-.3954]$). The prospective diagnostics do not establish a stable out-of-task ranking advantage for the stratum-risk score. The mechanism evidence therefore comes from the matched interventions rather than a validated deployment-time forecasting rule.

\subsection{Saved outputs and error taxonomy}
\label{app:structure-outputs}

The frozen classifier assigns a single priority-ordered label to each output (Table~\ref{tab:structure-errors}). The code-intrusion rule matches an initial A/B followed by whitespace, end of string, or declared punctuation; it can include lexical false positives such as an English article. A post-hoc, narrower count records outputs whose stripped text is exactly A or B. It yields 838/960 for Plain and 8/960 for Replay10. Counts describe repeated answers to 120 fixed items across eight training seeds, not 960 independent training experiments, and do not identify whether underlying knowledge survives.

\Needspace{17\baselineskip}
\begingroup
\small\setlength{\tabcolsep}{3pt}
\begin{longtable}{lrrrrrrrrr}
\caption{Complete frozen error taxonomy at Code endpoints on the two independent banks, pooled only as output counts (120 fixed items $\times$ eight seeds = 960 per method). Columns follow the priority-ordered classifier: strict correct, code intrusion, lexical refusal, extra continuation, correct tokens without EOS, content-correct format mismatch, incorrect at the length cap, and other lexical error. ``Format'' is the frozen content-correct/strict-incorrect category, not necessarily failure of the separate format scorer or missing EOS. ``Only A/B'' is an additional post-hoc stripped-string count within the intrusion column, not a new registered endpoint.}\label{tab:structure-errors}\\
\toprule
Method & Correct & Code & Refuse & Extra & No EOS & Format & Cap & Other & Only A/B\\
\midrule
\endfirsthead
\toprule
Method & Correct & Code & Refuse & Extra & No EOS & Format & Cap & Other & Only A/B\\
\midrule
\endhead
\bottomrule
\endfoot
Plain & 35 & 886 & 0 & 0 & 0 & 0 & 0 & 39 & 838 \\
Agg./pre & 524 & 348 & 0 & 0 & 0 & 0 & 0 & 88 & 348 \\
Agg./post & 35 & 879 & 0 & 0 & 0 & 0 & 3 & 43 & 864 \\
Str./pre & 743 & 188 & 0 & 0 & 0 & 0 & 0 & 29 & 150 \\
Str./post & 233 & 634 & 0 & 0 & 0 & 0 & 0 & 93 & 621 \\
Replay10 & 710 & 23 & 0 & 18 & 0 & 125 & 12 & 72 & 8 \\
\end{longtable}
\endgroup

The frozen content-correct/format-mismatch category includes strict token-response mismatches; it does not imply failure of the separate format scorer or absence of EOS. In Extraction bank A, all 42 content-correct but strict-incorrect replay records contain EOS, and 39 also pass the frozen format check. Bank B contains 43 such records, 41 with EOS and none passing the format check; content accuracy also decreases there (Table~\ref{tab:structure-bank-endpoints}). These observations distinguish token-exact response changes, missing termination, and scored content loss. The answer-only Old supervision and EOS-supervised New targets motivate a separate target ablation, but the present comparisons do not isolate that asymmetry's contribution.

\paragraph{Fixed-order examples.}
The following examples use seed 63001 and the first three records of qualified bank A in archived order (IDs ending 0002, 0001, 0000). Their full user prompts are, respectively:
\begin{quote}\small\ttfamily\raggedright
Use the value bc27d228d3 to fill this response pattern: VALUE=<value>. Return the filled pattern only.\par\medskip
Copy this identifier exactly, and output nothing else: ALPHA-77b9592a88\par\medskip
Compute 101 + 13. Reply with the integer alone; do not explain.
\end{quote}
Each is frozen as \texttt{<s>[INST] } followed by the displayed user text and \texttt{[/INST]}. References are \texttt{VALUE=bc27d228d3}, \texttt{ALPHA-77b9592a88}, and \texttt{114}.

\begin{table}[htbp]
\centering\small
\caption{Actual fixed-cap output paths for the fixed-order examples. ``='' denotes the exact reference text, not an omitted prediction. Leading whitespace is written as $\sqcup$. All entries naturally emit EOS except $U_{16}$. For every mismatch the first wrong token index is 0, except $\sqcup114$, whose index is 1; correct outputs have no wrong-token index. Indices are zero-based.}
\label{tab:structure-examples}
\begin{tabular}{llcccccc}
\toprule
ID suffix & Method & 0 & 16 & 32 & 64 & 96 & 128\\
\midrule
0002 & Plain & = & $U_{16}$ & $U_{32}$ & \texttt{B=B} & \texttt{B=B} & \texttt{B}\\
 & Replay10 & = & = & = & = & = & =\\
0001 & Plain & = & \texttt{A} & \texttt{B} & \texttt{B} & \texttt{B} & \texttt{B}\\
 & Replay10 & = & = & = & = & = & =\\
0000 & Plain & = & = & = & \texttt{B} & \texttt{B} & \texttt{B}\\
 & Replay10 & = & = & $\sqcup$\texttt{114} & = & $\sqcup$\texttt{114} & $\sqcup$\texttt{114}\\
\bottomrule
\end{tabular}
\end{table}

Here $U_{16}$ is the exact line sequence \texttt{B=B}, two lines \texttt{\ A=A}, and thirteen lines \texttt{\ B=B}, each terminated by a newline; generation reaches the 64-token cap without EOS. $U_{32}$ is \texttt{B=B}, \texttt{\ C=B}, \texttt{\ A=C}, \texttt{\ B=A}, \texttt{\ B=B}, joined by newlines without a trailing newline. The arithmetic replay example preserves the integer content while failing token-exact format at three observations. These saved outputs supplement the full error counts rather than defining a selected success rate.

\Needspace{20\baselineskip}
\subsection{Reproducible artifacts and archive scope}
\label{app:structure-artifacts}

The completed archive contains all 64 configurations/results, 8,192 update events, diagnostic and branch logs, and 148,400 generated-output records spanning banks, checkpoints, and full New audits. Each output retains prompt/reference, token IDs and text, stopping reason, first wrong token, metric labels, and identifiers. Four adapters per run are retained at updates 32/64/96/128, totaling 256 checkpoints; the final-adapter pointer refers to update 128 rather than a fifth copy.

The server's final report records 64 passing archive-byte audits, generation-coverage checks, and diagnostic-path checks. An independent local analysis of downloaded logs verifies the complete matrix and all 8,192 events, recomputes endpoint/factorial summaries, and reproduces all 256 predictor correlations. That original statistical check excludes adapter weights. The separate GPU recheck in Appendix~\ref{app:structure-score-audit} verifies the 13 selected final adapters' checkpoint bytes and exact equality of their loaded tensors, using the frozen loader, tokenizer, quantization configuration, and FP32 adapter storage. Its original validation-metric warmup leaves all 224 quantized modules' BF16 compute states unchanged; it is not a dtype intervention.

The recorded materials include frozen inputs/templates, construction and qualification code, model/tokenizer configurations, decoding/scoring code, protocol locks, seeds, checkpoint manifests, and statistical scripts. The added audit retains per-record rescoring, summary checks, $P_{64}$ sensitivity, the GPU cohort plan, actual padded inputs/masks and row positions, both batch rows' output tokens, stopping information, first-divergence processed scores, and environment/checkpoint verification receipts. Before/after checks match all 206 audited input-file hashes; all 39 scientific-source hashes also match their frozen lock. Original results, gates, and registered endpoints remain unchanged. These materials can support an anonymized supplement; no public release is asserted here. Their scope remains distinct from the original studies, whose missing intermediate outputs or deleted weights are not restored by this recheck.
\FloatBarrier

\section{Sequential Tasks and Protection Objects}
\label{app:continual-bridge}

This study connects protection of pre-existing functions to forgetting of tasks learned within an explicit sequence. Its central observation is task-dependent protection: replay improves retention of both preceding tasks relative to Plain, whereas gradient, parameter-importance, and LoRA-subspace interventions produce different patterns of retention. Mean improvement, numerical constraint satisfaction, and retention of each learned task are measured separately.

\subsection{Data, sequence, and common execution}
\label{app:continual-bridge-protocol}

One Mistral-7B-Instruct-v0.3 adapter learns \emph{AG News $\rightarrow$ natural-language SST-2 sentiment $\rightarrow$ structured JSON extraction}. Each task has 512 training, 32 development, and 64 fixed test records. AG News \citep{zhang2015character} uses \texttt{fancyzhx/ag\_news}, revision \texttt{eb185aade064a813bc0b7f42de02595523103ca4}: training/development are disjoint official-training subsets and evaluation uses official-test records. SST-2 \citep{socher2013recursive} reuses the frozen train/development/audit sources of Appendix~\ref{app:structure-protocol}; its selected test subset is not the official hidden test set. Extraction reuses that appendix's deterministic dispatch-schema task, with split-specific note templates and disjoint entities. Selection uses SHA256 ordering (data seed 64000), class balance for the two classification tasks, full-sequence length eligibility, and exact prompt/content/group deduplication; it does not use model scores or select examples for their observed forgetting.

The complete AG News prompt is:
\begin{quote}\small\ttfamily\raggedright
Classify the topic of the following news article. Return exactly one of: World, Sports, Business, Science and Technology.\par
Article: \{text\}\par
Answer:
\end{quote}
Its completion is a leading space followed by the category. Natural sentiment and extraction use the complete raw templates in Appendix~\ref{app:structure-protocol}: the former returns \texttt{This review expresses positive/negative sentiment.}; the latter maps recipient, cargo, and quantity to \texttt{rava}, \texttt{nemi}, and \texttt{tovu}. Prompt and completion are tokenized separately without a BOS token or chat-template wrapper. Every training example supervises its answer and an appended EOS; complete sequences must fit within 256 tokens, without answer truncation.

All methods use NF4 QLoRA with BF16 computation, FP32 adapter factors, rank 16, alpha 32, all-linear targets, and zero dropout. A single adapter and its AdamW state continue across tasks without reset, merging, or requantization. Each task has 32 updates, 16 new records per update, microbatch size two, learning rate $10^{-4}$, weight decay zero, and gradient-norm clipping at one. Thus each chain has 96 updates and 512 new-record exposures per task. Paired seeds 64101 and 64102 share new-record ordering and experienced-memory selection. Development scores do not select parameters or checkpoints; evaluation always reports the three completed boundaries.

\subsection{Protection objects and fixed adaptations}
\label{app:continual-bridge-methods}

\paragraph{Plain and supervised replay.}
Plain minimizes the record-mean answer-plus-EOS loss. Replay keeps that new-data contribution and adds old-record losses with the same denominator of 16. Each preceding task supplies 32 already-experienced training records. Round-robin replay adds 57 old exposures to 512 new exposures in each of tasks two and three, an old fraction of $57/569=10.0176\%$; the last stage allocates 29 to news and 28 to sentiment. The common new-data budget does not imply equal compute across methods.

\paragraph{Task-level GEM-style projection.}
Following episodic interference constraints , one fresh record-mean memory gradient $g_j$ is computed per preceding task. After clipping the new gradient $g$, the common exact small-cone solver computes
\[
v^*=\arg\min_v\tfrac12\|v+g\|^2
\quad\text{subject to }g_j^\top v\leq0\quad(j< i),
\]
and supplies $-v^*$ to AdamW. Stored FP32 vectors are checked after the FP64 Gram solve. This is a task-level QLoRA/AdamW adaptation of GEM, whose checked object is the pre-Adam descent gradient, not Adam's realized displacement.

\paragraph{Adapter EWC.}
The EWC reference  retains a separate factor-space anchor $\theta_j$ and diagonal empirical Fisher for each preceding task:
\[
F_{jp}=\frac1{32}\sum_{x\in M_j}
\bigl[\partial_{\theta_p}\ell(x;\theta_j)\bigr]^2,
\qquad
\Omega(\theta)=\frac{10}{2}\sum_{j,p}F_{jp}(\theta_p-\theta_{jp})^2.
\]
The estimator squares each observed-label, per-record answer-plus-EOS gradient before averaging. The exact penalty gradient is added once per update before clipping; the coefficient is fixed, not selected on test performance.

\paragraph{CLoRA and OPLoRA.}
For layer factors $A\in\mathbb R^{16\times d_{\rm in}}$ and $B\in\mathbb R^{d_{\rm out}\times16}$, CLoRA \citep{lu2025clora} adds
\[
\Omega_{\rm C}=\frac\lambda2\sum_{\ell}
\left(\|A_\ell V_\ell\|_F^2+\|U_\ell^\top B_\ell\|_F^2\right),
\qquad \lambda=1.
\]
The random orthonormal bases have 512 columns and remain fixed through all tasks. Analytic regularizer gradients are appended once after new-loss accumulation and before clipping. OPLoRA \citep{xiong2026oplora} instead uses the effective forward factors
\[
A_{\rm eff}=A-(AV)V^\top,\qquad
B_{\rm eff}=B-U(U^\top B),\qquad
\Delta W=\frac{32}{16}B_{\rm eff}A_{\rm eff}.
\]
Its 128-column bases come from a randomized truncated SVD of the \emph{initial, actually dequantized NF4 weights}, with 16 oversampling columns and four iterations. FP32 QR reorthogonalizes the selected column spaces; approximation residuals and orthogonality are logged. Both methods use basis seed 64200 independently of training RNG, with no task-boundary basis refresh. OPLoRA therefore protects selected initial-weight subspaces, not the historical-feature covariance null space used by Adam-NSCL. Its saved raw factors require the saved bases and projected forward path at inference.

\subsection{Boundary behavior and classical continual-learning metrics}
\label{app:continual-bridge-metrics}

At baseline and after every task, all three fixed test sets receive singleton greedy decoding with one common 64-new-token cap, ordinary EOS stopping, and no forced EOS. All scores derive from the \emph{same output}. Classification content exact compares the entire pre-EOS response after trimming and case folding; JSON content exact requires structural equality, exact value types, and no duplicate keys or nonfinite constants. Prefix exact compares reference token IDs; strict-EOS exact requires the entire output to equal the reference tokens followed by EOS. Answer NLL excludes EOS. These interfaces remain separate from the reference-length calls in the earlier studies.

Let $R_{ij}$ be content accuracy on task $j$ after learning task $i$, with $R_{0j}$ the baseline and $T=3$. We report
\begin{align*}
\mathrm{ACC}&=\frac1T\sum_{j=1}^T R_{Tj},\\
\mathrm{BWT}&=\frac1{T-1}\sum_{j=1}^{T-1}(R_{Tj}-R_{jj}),\\
\mathrm{Forgetting}&=\frac1{T-1}\sum_{j=1}^{T-1}
\left[\max_{i=j,\ldots,T-1}R_{ij}-R_{Tj}\right].
\end{align*}
Neither BWT nor forgetting uses the baseline row or pre-acquisition evaluations. Forgetting is not clipped to be nonnegative and need not equal $-\mathrm{BWT}$. Each method's own diagonal is reported alongside final scores: a BWT difference can include a difference in acquisition. All task diagonals exceed their respective initial scores, which are 75\%, 75\%, and 0\% in every chain.

\subsection{Complete results and task-dependent protection}
\label{app:continual-bridge-results}

Table~\ref{tab:continual-bridge} reports six-method means; Tables~\ref{tab:continual-seeds} and~\ref{tab:continual-matrices} retain both seeds and every task boundary. Plain news accuracy changes from 82.03\% just learned to 55.47\% finally; sentiment changes from 95.31\% to 78.12\%. Replay reduces these losses to 0.78 and 1.56 points. Its final old-task mean improves by 19.92 points over Plain, while final extraction accuracy is 99.22\% versus 98.44\%. The BWT improvement is 20.70 points; 0.78 points of that contrast come from differing mean acquisition references. Replay's 2.73-point forgetting differs from its $-1.17$-point BWT because news accuracy improves further at the intermediate boundary.

CLoRA and OPLoRA improve mean ACC by 3.12 and 3.91 points and mean BWT by 4.30 and 4.69 points over Plain. Their task decomposition is informative: CLoRA changes final news/sentiment by $-5.47/+13.28$ points; OPLoRA changes them by $+18.75/-7.81$ points. CLoRA's paired BWT gains are $+8.59$ and $0.00$ points; OPLoRA's are $-7.81$ and $+17.19$. Task-GEM improves mean BWT by 6.25 points, whereas fixed-coefficient adapter EWC leaves mean BWT unchanged. Thus positive mean improvements coexist with distinct task and seed trade-offs; none is excluded for failing an absolute retention target.

\begin{table}[t]
\centering\small\setlength{\tabcolsep}{4pt}
\caption{All twelve continual-task runs. ACC and New are percentages; BWT and forgetting ($F$) are percentage points. $F$ uses the maximum previously observed post-acquisition score, not simply $-\mathrm{BWT}$. Six-decimal display preserves each seed's reported resolution; exact task counts are given in Table~\ref{tab:continual-matrices}. No significance test is inferred from two seeds.}
\label{tab:continual-seeds}
\begin{tabular}{llrrrr}
\toprule
Method & Seed & ACC & BWT & $F$ & Final New\\
\midrule
Plain & 64101 & 81.770833 & -16.406250 & 16.406250 & 98.437500\\
Plain & 64102 & 72.916667 & -27.343750 & 27.343750 & 98.437500\\
Replay 10\% & 64101 & 91.145833 & -3.125000 & 3.125000 & 100.000000\\
Replay 10\% & 64102 & 90.625000 & 0.781250 & 2.343750 & 98.437500\\
Task-GEM & 64101 & 83.854167 & -14.062500 & 14.062500 & 100.000000\\
Task-GEM & 64102 & 80.208333 & -17.187500 & 17.187500 & 100.000000\\
Adapter-EWC & 64101 & 80.729167 & -17.187500 & 17.187500 & 98.437500\\
Adapter-EWC & 64102 & 73.437500 & -26.562500 & 26.562500 & 100.000000\\
CLoRA & 64101 & 88.020833 & -7.812500 & 7.812500 & 100.000000\\
CLoRA & 64102 & 72.916667 & -27.343750 & 27.343750 & 100.000000\\
OPLoRA & 64101 & 77.083333 & -24.218750 & 24.218750 & 98.437500\\
OPLoRA & 64102 & 85.416667 & -10.156250 & 10.156250 & 100.000000\\
\bottomrule
\end{tabular}
\end{table}

\begingroup\small\setlength{\tabcolsep}{5pt}
\begin{longtable}{llrrrr}
\caption{Complete task-by-boundary matrices for all twelve runs, expressed as correct outputs out of 64. Boundary 0 is the initial model; 1, 2, and 3 follow news, sentiment, and extraction training. Bold entries mark each task's just-learned diagonal. Pre-acquisition entries measure transfer, not retention. Content and strict-EOS counts coincide.}\label{tab:continual-matrices}\\
\toprule Method & Seed & Boundary & AG News & SST-2 & JSON\\\midrule\endfirsthead
\multicolumn{6}{l}{\tablename~\thetable{} (continued)}\\
\toprule Method & Seed & Boundary & AG News & SST-2 & JSON\\\midrule\endhead
\midrule\multicolumn{6}{r}{Continued on next page}\\\endfoot
\bottomrule\endlastfoot
Plain & 64101 & 0 & 48/64 & 48/64 & 0/64\\*
 &  & 1 & \textbf{54/64} & 36/64 & 0/64\\*
 &  & 2 & 49/64 & \textbf{61/64} & 0/64\\*
 &  & 3 & 51/64 & 43/64 & \textbf{63/64}\\
\addlinespace[3pt]
Plain & 64102 & 0 & 48/64 & 48/64 & 0/64\\*
 &  & 1 & \textbf{51/64} & 58/64 & 0/64\\*
 &  & 2 & 16/64 & \textbf{61/64} & 0/64\\*
 &  & 3 & 20/64 & 57/64 & \textbf{63/64}\\
\addlinespace[3pt]
Replay 10\% & 64101 & 0 & 48/64 & 48/64 & 0/64\\*
 &  & 1 & \textbf{54/64} & 36/64 & 0/64\\*
 &  & 2 & 53/64 & \textbf{61/64} & 0/64\\*
 &  & 3 & 52/64 & 59/64 & \textbf{64/64}\\
\addlinespace[3pt]
Replay 10\% & 64102 & 0 & 48/64 & 48/64 & 0/64\\*
 &  & 1 & \textbf{51/64} & 58/64 & 0/64\\*
 &  & 2 & 55/64 & \textbf{59/64} & 0/64\\*
 &  & 3 & 52/64 & 59/64 & \textbf{63/64}\\
\addlinespace[3pt]
Task-GEM & 64101 & 0 & 48/64 & 48/64 & 0/64\\*
 &  & 1 & \textbf{54/64} & 36/64 & 0/64\\*
 &  & 2 & 51/64 & \textbf{61/64} & 0/64\\*
 &  & 3 & 52/64 & 45/64 & \textbf{64/64}\\
\addlinespace[3pt]
Task-GEM & 64102 & 0 & 48/64 & 48/64 & 0/64\\*
 &  & 1 & \textbf{51/64} & 58/64 & 0/64\\*
 &  & 2 & 31/64 & \textbf{61/64} & 0/64\\*
 &  & 3 & 36/64 & 54/64 & \textbf{64/64}\\
\addlinespace[3pt]
Adapter-EWC & 64101 & 0 & 48/64 & 48/64 & 0/64\\*
 &  & 1 & \textbf{54/64} & 36/64 & 0/64\\*
 &  & 2 & 49/64 & \textbf{60/64} & 0/64\\*
 &  & 3 & 49/64 & 43/64 & \textbf{63/64}\\
\addlinespace[3pt]
Adapter-EWC & 64102 & 0 & 48/64 & 48/64 & 0/64\\*
 &  & 1 & \textbf{51/64} & 58/64 & 0/64\\*
 &  & 2 & 16/64 & \textbf{60/64} & 0/64\\*
 &  & 3 & 20/64 & 57/64 & \textbf{64/64}\\
\addlinespace[3pt]
CLoRA & 64101 & 0 & 48/64 & 48/64 & 0/64\\*
 &  & 1 & \textbf{55/64} & 56/64 & 0/64\\*
 &  & 2 & 50/64 & \textbf{60/64} & 0/64\\*
 &  & 3 & 48/64 & 57/64 & \textbf{64/64}\\
\addlinespace[3pt]
CLoRA & 64102 & 0 & 48/64 & 48/64 & 0/64\\*
 &  & 1 & \textbf{51/64} & 62/64 & 0/64\\*
 &  & 2 & 16/64 & \textbf{60/64} & 0/64\\*
 &  & 3 & 16/64 & 60/64 & \textbf{64/64}\\
\addlinespace[3pt]
OPLoRA & 64101 & 0 & 48/64 & 48/64 & 0/64\\*
 &  & 1 & \textbf{56/64} & 44/64 & 0/64\\*
 &  & 2 & 52/64 & \textbf{60/64} & 0/64\\*
 &  & 3 & 53/64 & 32/64 & \textbf{63/64}\\
\addlinespace[3pt]
OPLoRA & 64102 & 0 & 48/64 & 48/64 & 0/64\\*
 &  & 1 & \textbf{54/64} & 62/64 & 0/64\\*
 &  & 2 & 44/64 & \textbf{59/64} & 0/64\\*
 &  & 3 & 42/64 & 58/64 & \textbf{64/64}\\
\addlinespace[3pt]
\end{longtable}
\endgroup

\begin{figure}[htbp]
\centering
\includegraphics[width=\linewidth]{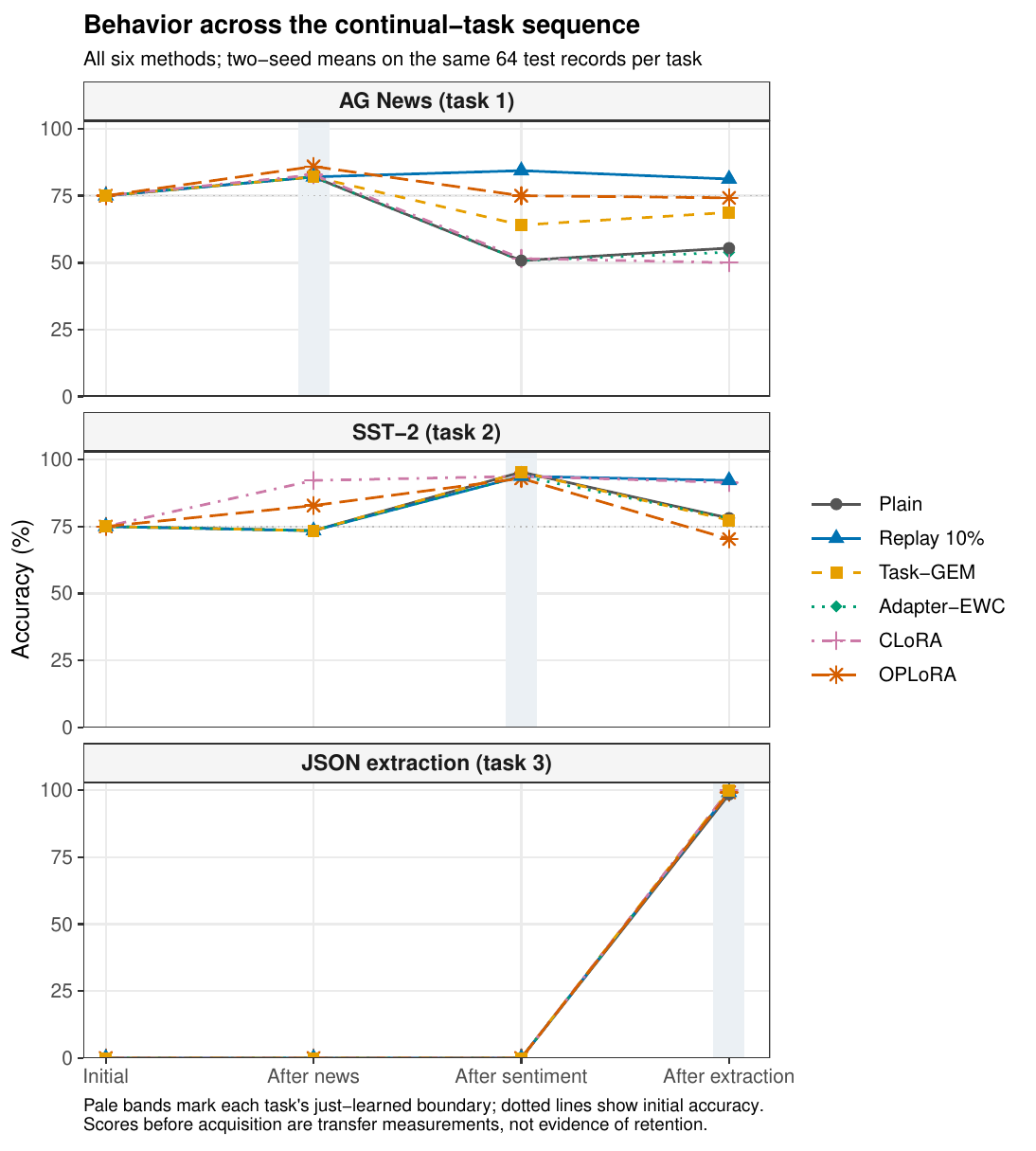}
\caption{Task-boundary generation in the three-task sequence. The same held-out records are evaluated before training and after each task; curves show two-seed means, without population confidence bands. Lines connect measured boundaries rather than unobserved within-task behavior. Individual-seed metrics appear in Table~\ref{tab:continual-seeds}.}
\label{fig:continual-bridge-paths}
\end{figure}

\begin{figure}[htbp]
\centering
\includegraphics[width=\linewidth]{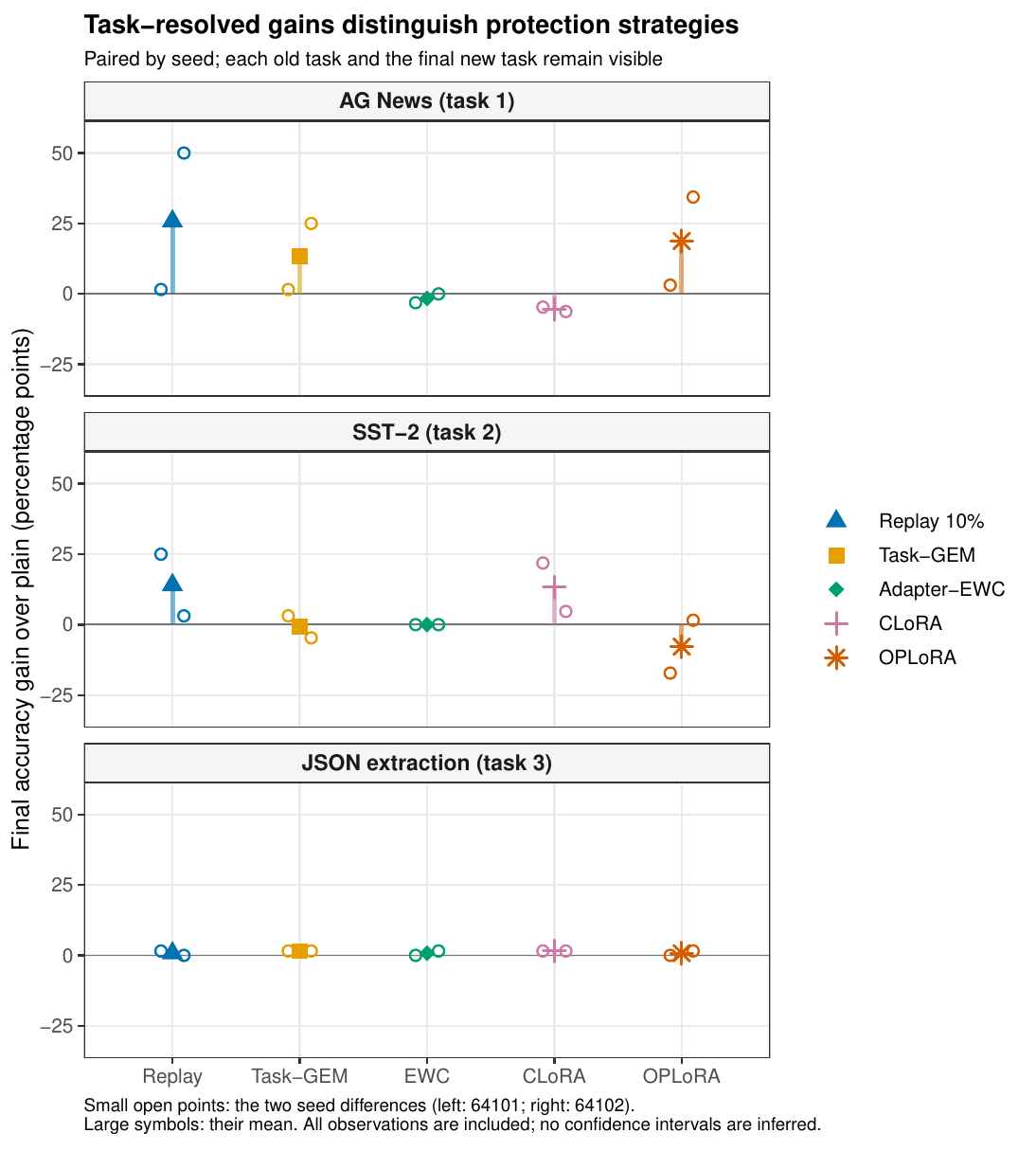}
\caption{Final accuracy gains over Plain on each task, including the last New task. Task-specific outcomes distinguish retention trade-offs that aggregate ACC and BWT can conceal. Small open points show paired seed differences; large symbols show their means. These descriptive comparisons include gains, ties, and regressions.}
\label{fig:continual-bridge-tradeoffs}
\end{figure}

\paragraph{Content loss, not an EOS-only explanation.}
Content and strict-EOS scores agree on all 9,216 archived generation records, and every one of the 2,304 final outputs contains EOS. A post-hoc inspection of Plain's 85 final old-task errors finds 83 valid but incorrect labels: 56 incorrect news categories and 27 incorrect sentiment polarities. The remaining outputs are \texttt{Travel} and a sentence using \texttt{informative}. Across the two seeds, Plain loses 61 previously correct old-task observations and gains five; replay loses 14 and gains 11. These are seed--item observations, not independent training replicates. They establish loss of learned task answers beyond a failure to stop or match punctuation.

\subsection{Numerical checks, archived controls, and interpretation}
\label{app:continual-bridge-audit}

All 128 task-GEM solves with preceding tasks converge; 73 have nonempty active sets, and all 192 stored constraints pass tolerance (maximum positive dot $2.28\times10^{-8}$). EWC penalties are nonzero. CLoRA applies nonzero regularization at all 96 updates in each chain. Across OPLoRA's 224 layers, maximum QR-basis orthogonality error is $7.15\times10^{-7}$; the largest final protected-leakage energy ratio is $1.91\times10^{-14}$ for FP32 effective factors and $3.72\times10^{-7}$ after factor rounding to BF16. The latter is a factor-rounding diagnostic, not an end-to-end kernel error bound. These checks establish that the declared protections execute; they do not equate a protected weight subspace with all subsequently acquired task functions.

The eight Plain/Replay/GEM/EWC chains are explicitly archived controls; the four CLoRA/OPLoRA chains are the extension, not concurrent reruns. Common input, configuration, execution-environment, and exposure checks pass. Zero-strength tests recover the Plain path. All 768 new-chain baseline records match their corresponding archived Plain records in generated tokens, scores, and NLL. Independent re-scoring checks 36,864 Boolean fields and reproduces the matrices and metrics. Server-side completion checks validate saved adapters, basis references, and artifact hashes; mechanical save/reload tests reproduce outputs and NLL within tolerance. The lightweight local archive does not independently rerun large-tensor inference or SVD. Historical model-weight hashes were not recorded and are not retroactively inferred from baseline agreement.

This is a descriptive bridge with one model, one fixed order, two paired seeds, and fixed QLoRA adaptations without method-specific hyperparameter search. Its empirical contribution is the conjunction of acquired-task forgetting, functional replay benefits, and task-dependent subspace protection under a shared sequence, not a universal method ranking, a new-task noninferiority test, or an equal-FLOPs comparison. All 12 complete chains and all boundary evaluations enter the results; mechanical completion and scientific improvement remain distinct.

\end{document}